\documentclass[11pt]{article}
\usepackage[tmargin=1in, lmargin=1.25in, rmargin=1.25in, bmargin=1.0in]{geometry}
\usepackage{setspace}
\setstretch{1.05}

\usepackage[utf8]{inputenc} 
\usepackage[backref,colorlinks,citecolor=blue,bookmarks=true]{hyperref}       
\usepackage{mathtools, amssymb, amsthm}
\usepackage[capitalize]{cleveref}
\usepackage[ruled]{algorithm}
\usepackage{algpseudocode}
\usepackage{enumerate}
\usepackage[affil-it]{authblk}


\theoremstyle{plain}
\newtheorem{theorem}{Theorem}[section]
\newtheorem{lemma}[theorem]{Lemma}
\newtheorem{corollary}[theorem]{Corollary}

\newtheorem{assumption}[theorem]{Assumption}
\newtheorem*{claim}{Claim}

\theoremstyle{definition}
\newtheorem{definition}[theorem]{Definition}
\newtheorem{example}[theorem]{Example}

\theoremstyle{remark}


\newcommand*{\N}{{\mathbb{N}}}

\newcommand*{\R}{{\mathbb{R}}}

\newcommand*{\cC}{{\mathcal{C}}}
\newcommand*{\cH}{{\mathcal{H}}}
\newcommand*{\cG}{{\mathcal{G}}}
\newcommand*{\cD}{{\mathcal{D}}}
\newcommand*{\cF}{{\mathcal{F}}}
\newcommand*{\cN}{{\mathcal{N}}}

\newcommand*{\cZ}{{\mathcal{Z}}}
\newcommand*{\cB}{{\mathcal{B}}}
\newcommand{\sur}{\mathsf{sur}}
\newcommand{\sq}{\mathsf{sq}}
\newcommand{\hs}{\mathsf{hs}}
\newcommand{\mon}{\mathsf{mon}}

\let\eps\epsilon
\let\Pr\relax
\DeclareMathOperator*{\Pr}{\mathbb{P}}
\let\Ex\relax
\DeclareMathOperator*{\Ex}{\mathbb{E}}

\DeclarePairedDelimiter{\inn}{\langle}{\rangle}
\DeclarePairedDelimiter{\norm}{\|}{\|}
\DeclarePairedDelimiter{\floor}{\lfloor}{\rfloor}
\newcommand{\dotp}[2]{#1 \cdot #2}
\DeclareMathOperator{\sda}{SDA}
\DeclareMathOperator{\sgn}{sign}
\DeclareMathOperator{\lsgn}{\widetilde{sign}}
\DeclareMathOperator{\relu}{ReLU}
\DeclareMathOperator{\conv}{conv}
\DeclareMathOperator{\diam}{diam}

\renewcommand{\th}{^\text{th}}
\let\hat\widehat
\newcommand{\cfunc}{p}
\newcommand{\lsur}{\ell_\sur}
\newcommand{\Lsur}{L_\sur}

\newcommand{\opt}{\mathsf{opt}}
\newcommand{\cmf}{\mathsf{cmf}}
\newcommand{\cor}{\mathsf{cor}}

\DeclareMathOperator{\poly}{poly}


\title{Statistical-Query Lower Bounds via Functional Gradients}
\author[1]{Surbhi Goel\thanks{Supported by the JP Morgan AI Fellowship.}}
\author[2]{Aravind Gollakota\thanks{Supported by NSF awards AF-1909204, AF-1717896, and a UT Austin Provost's Fellowship.}}
\author[2]{Adam Klivans\thanks{Supported by NSF awards AF-1909204, AF-1717896, and the NSF AI Institute for Foundations of Machine Learning (IFML). Work done while visiting the Institute for Advanced Study, Princeton, NJ.}}
\affil[1]{Microsoft Research NYC}
\affil[2]{Department of Computer Science, University of Texas at Austin}


\date{October 21, 2020}

\begin{document}

\maketitle

\begin{abstract}
  We give the first statistical-query lower bounds for agnostically learning any non-polynomial activation with respect to Gaussian marginals (e.g., ReLU, sigmoid, sign).  For the specific problem of ReLU regression (equivalently, agnostically learning a ReLU), we show that any statistical-query algorithm with tolerance $n^{-(1/\epsilon)^b}$ must use at least $2^{n^c} \epsilon$ queries for some constant $b, c > 0$, where $n$ is the dimension and $\epsilon$ is the accuracy parameter.  Our results rule out {\em general} (as opposed to correlational) SQ learning algorithms, which is unusual for real-valued learning problems. Our techniques involve a gradient boosting procedure  for ``amplifying'' recent lower bounds due to Diakonikolas et al.\ (COLT 2020) and Goel et al.\ (ICML 2020) on the SQ dimension of functions computed by two-layer neural networks.  The crucial new ingredient is the use of a nonstandard convex functional during the boosting procedure.  This also yields a best-possible reduction between two commonly studied models of learning: agnostic learning and probabilistic concepts.
\end{abstract}

\newpage
\section{Introduction}
In this paper we continue a recent line of research exploring the computational complexity of fundamental primitives from the theory of deep learning \cite{goel2019time,yehudai2019power,diakonikolas2020algorithms,yehudai2020learning,diakonikolas2020approximation,frei2020agnostic}. In particular, we consider the problem of fitting a single nonlinear activation to a joint distribution on $\R^n \times \R$.  When the nonlinear activation is ReLU, this problem is referred to as ReLU regression or agnostically learning a ReLU.  When the nonlinear activation is $\sgn$ and the labels are Boolean, this problem is equivalent to the well-studied challenge of agnostically learning a halfspace \cite{kalai2008agnostically}.

We consider arguably the simplest possible setting---when the marginal distribution is Gaussian---and give the first statistical-query lower bounds for learning broad classes of nonlinear activations.  The statistical-query model is a well-studied framework for analyzing the sample complexity of learning problems and captures most known learning algorithms.  For common activations such as ReLU, sigmoid, and $\sgn$, we give complementary upper bounds, showing that our results cannot be significantly improved.  

Let $\cH$ be a function class on $\R^n$, and let $\cD$ be a labeled distribution on $\R^n \times \R$ such that the marginal on $\R^n$ is $D = \cN(0, I_n)$. We say that a learner learns $\cH$ under $\cD$ with error $\epsilon$ if it outputs a function $f$ such that \[ \Ex_{(x, y) \sim \cD}[f(x) y] \geq \max_{h \in \cH} \Ex_{(x, y) \sim \cD}[h(x) y] - \epsilon. \]

One can show that this loss captures 0-1 error in the Boolean case, as well as squared loss in the ReLU case whenever the learner is required to output a nontrivial hypothesis (i.e., a hypothesis with norm bounded below by some constant $c > 0$). (See \cref{app:corr-loss,,app:boolean-loss} for details.)

For ReLU regression, we obtain the following exponential lower bound:
\begin{theorem}
Let $\cH_{\relu}$ be the class of ReLUs on $\R^n$ with unit weight vectors. Suppose that there is an SQ learner capable of learning $\cH_{\relu}$ under $\cD$ with error $\epsilon$ using $q(n, \epsilon, \tau)$ queries of tolerance $\tau$. Then for any $\epsilon$, there exists $\tau = n^{-(1/\epsilon)^{b}}$ such that $q(n, \epsilon, \tau) \geq 2^{n^c} \epsilon$ for some $0 < b, c < 1/2$. That is, a learner must either use tolerance smaller than $n^{-(1/\epsilon)^b}$ or more than $2^{n^c} \epsilon$ queries.
\end{theorem}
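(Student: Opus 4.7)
The plan is to establish this lower bound via a reduction from a harder learning problem: learning two-layer ReLU networks as probabilistic concepts, for which strong SQ lower bounds are already known from the works of Diakonikolas et al.\ and Goel et al.\ cited in the abstract. Specifically, I would assume for contradiction that an efficient SQ algorithm $A$ exists for agnostically learning a single ReLU with the stated parameters, and use $A$ as a weak learner inside a boosting procedure to obtain an efficient SQ algorithm for learning two-layer ReLU networks as probabilistic concepts. The resulting query/tolerance tradeoff would then contradict the prior $2^{n^{\Omega(1)}}$-query lower bound.

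The boosting procedure would be a functional gradient descent: maintain a hypothesis $F_t : \R^n \to \R$ initialized at $F_0 = 0$, and at each round $t$ compute the functional gradient of a convex loss $\Phi(F_t)$ and find a single ReLU $h_t$ that best approximates the negative gradient direction, updating $F_{t+1} = F_t + \eta_t h_t$. The crucial design choice is $\Phi$. Standard squared loss would only yield correlational-SQ (CSQ) lower bounds; to rule out general SQ algorithms, I would choose a nonstandard convex functional whose negative functional gradient at $F_t$ has the form $y \cdot w_t(x)$ for a state-dependent weight $w_t(x)$, so that finding the best ReLU in that direction reduces to agnostic ReLU learning on a reweighted distribution. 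Queries to the reweighted distribution can be simulated by SQ queries on the original distribution at modest cost in tolerance, since $w_t$ depends only on $F_t$, which is itself a fixed sum of previously-found ReLUs.

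After $T = \poly(1/\epsilon)$ rounds, convexity of $\Phi$ guarantees that $F_T$ approximates the target two-layer network to probabilistic-concept error $\epsilon$. The total SQ cost of the boosted algorithm is then roughly $T \cdot q(n, \epsilon', \tau)$ queries for some $\epsilon' = \poly(\epsilon)$ at slightly worse tolerance. Substituting the hypothetical parameters of $A$ and comparing with the prior lower bound yields the claimed bound after solving for the constants $b, c$. The main obstacle I expect is the design and analysis of the nonstandard functional $\Phi$: ensuring simultaneously that $\Phi$ is convex with quantitatively favorable convergence, that its gradient factors through the label in a way amenable to general (not just correlational) SQ simulation, and that the agnostic subproblem handed to $A$ at each round is genuinely a ReLU-learning instance on a Gaussian-like marginal. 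A secondary obstacle is bookkeeping the tolerance degradation across $T$ rounds so that the final tolerance threshold remains as large as $n^{-(1/\epsilon)^b}$.
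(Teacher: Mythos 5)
Your high-level strategy is the same as the paper's: reduce from the known SQ-hard problem of learning two-layer networks as $p$-concepts by boosting the hypothetical agnostic ReLU learner via functional gradient descent on a nonstandard convex functional. However, the proposal leaves the one genuinely new ingredient --- the functional $\Phi$ --- undesigned, and the specific form you propose for it cannot work. You ask for a convex $\Phi$ whose pointwise negative gradient is $y\cdot w_t(x)$ with $w_t$ depending on $F_t(x)$, i.e.\ $\partial_1\ell(a,y) = -y\,w(a)$. Convexity in $a$ forces $-y\,w'(a)\ge 0$ for both $y=+1$ and $y=-1$, hence $w$ is constant: a purely multiplicative, state-dependent label reweighting is incompatible with a convex loss over $\pm 1$ labels. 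The resolution in the paper is an additive rather than multiplicative modification: the surrogate loss $\lsur(a,b)=\int_0^a(\psi(u)-b)\,du$ (with $\psi$ the known monotone Lipschitz output activation, e.g.\ $\tanh$) has gradient $\psi(F_t(x))-y$, so the Frank--Wolfe subproblem is to maximize $\Ex[h(x)\,(y-\psi(F_t(x)))]$ --- a correlation against \emph{shifted} labels $y'=y-\psi(F_t(x))$. Since $\psi\circ F_t$ is known, any query to this shifted distribution is just another bounded query to the original one at the \emph{same} tolerance; contrary to your expectation, there is no tolerance degradation across rounds to bookkeep.

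Two further points where your sketch would not go through as written. First, the additive update $F_{t+1}=F_t+\eta_t h_t$ does not match the guarantee the agnostic learner actually provides: \cref{as:base-learner} is a linear-optimization oracle over $\cH$, which is precisely the Frank--Wolfe LMO, and the convex-combination update $f_{t+1}=(1-\gamma_t)f_t+\gamma_t h$ is what keeps the iterates in (a ball around) $\conv(\cH_{\relu})$, where the hard instances of \cite{diakonikolas2020algorithms} live (their outer weights satisfy $\sum_i|a_i|=1$). Second, ``convexity guarantees $F_T$ approximates the target to $p$-concept error'' skips a needed step: Frank--Wolfe only bounds $\Lsur(F_T)-\Lsur(f^*)$, and converting this to $L^2$ error on $\psi\circ F_T$ versus the conditional mean $\psi\circ f^*$ uses smoothness of $\lsur$ together with $\nabla\Lsur(f^*)=0$ (\cref{eq:surr-l2dist}), at the cost of a square root. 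Finally, the quantitative choice of parameters requires a norm lower bound on the hard functions $\tanh\circ g$ ($\beta=\Omega(1/k^6)$ for ReLU), which dictates $k=\Theta(\epsilon^{-1/12})$ and hence $\tau=n^{-\Theta(\epsilon^{-1/12})}$; without it one cannot ``solve for $b$.''
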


Prior work due to Goel et al.~\cite{goel2019time} gave a quasipolynomial SQ lower bound (with respect to correlational queries) for ReLU regression when the learner is required to output a ReLU as its hypothesis.

For the sigmoid activation we obtain the following lower bound: 
\begin{theorem}
	Consider the above setup with $\cH_{\sigma}$, the class of unit-weight sigmoid units on $\R^n$. For any $\epsilon$, there exists $\tau = n^{-\Theta(\log^2 1/\epsilon)}$ such that $q(n, \epsilon, \tau) \geq 2^{n^c} \epsilon$ for some $0 < c < 1/2$.
\end{theorem}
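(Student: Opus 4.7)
The plan is to adapt the functional gradient boosting reduction used for the ReLU theorem to the sigmoid activation, with a sharper approximation-theoretic bound exploiting the analyticity of $\sigmoid$. The overall strategy reduces a base hard SQ problem---learning a two-layer sigmoid network---to agnostically learning a single sigmoid, and the improved tolerance exponent (logarithmic rather than polynomial in $1/\epsilon$) reflects the fact that sigmoid targets admit much shorter sigmoid expansions than ReLU targets do.

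First, I would invoke the base hardness result: by the techniques of Diakonikolas et al.\ and Goel et al., there is a class $\cF$ of Gaussian-labeled distributions whose label functions are short linear combinations of sigmoid units, with SQ dimension at least $2^{n^c}$ for some $c>0$. Equivalently, any SQ algorithm learning $\cF$ with nontrivial advantage over random guessing requires either tolerance smaller than $n^{-\Omega(1)}$ or more than $2^{n^c}$ queries. I would also verify that the construction of the hard family can be parametrized so that its ``complexity'' (number of hidden units, norm of weights) grows at most polylogarithmically in $1/\epsilon$, which will matter when comparing with the reduction.

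Next, suppose an SQ learner $\mathcal{A}$ agnostically learns a single sigmoid with error $\epsilon$ using $q$ queries of tolerance $\tau$. Starting from $f_0=0$, functional gradient boosting with the same nonstandard convex potential $\Phi$ used in the ReLU reduction produces iterates $f_{t+1}=f_t+\alpha_t h_t$, where $h_t$ is the sigmoid returned by running $\mathcal{A}$ against residual labels obtained from the functional gradient of $\Phi$ at $f_t$. The standard boosting analysis, using convexity of $\Phi$ and SQ simulability of each step, shows that after $T$ rounds the expected $\Phi$-loss of $f_T$ is within $O(\epsilon)$ of the optimum over $\cF$, provided $T$ rounds suffice to $\epsilon$-approximate each target in $\cF$ by a linear combination of $T$ sigmoids. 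Because $\sigmoid$ is analytic and admits polynomial approximations of degree $O(\log(1/\epsilon))$ on any fixed bounded interval, a standard approximation argument yields $T=O(\log^2(1/\epsilon))$ sigmoids (with weights and shifts of polynomial magnitude). This is the quantitative improvement over the ReLU setting, where only $T=(1/\epsilon)^{\Omega(1)}$ rounds suffice. The $T$-round boosted learner then makes $Tq$ SQ queries, and simulating each round degrades the effective tolerance by a factor of $n^{O(1)}$, so one needs $\tau\ge n^{-\Theta(\log^2(1/\epsilon))}$; if $q<2^{n^c}\epsilon$, this contradicts the base SQ lower bound on $\cF$.

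The main obstacle is the polynomial/sigmoid approximation argument itself: one must control simultaneously the number of sigmoids, the magnitudes of the coefficients and biases, and the interval of validity, since each of these feeds into the per-round tolerance loss. The boosting template and its tolerance accounting are already available from the ReLU theorem, so once the approximation lemma for $\sigmoid$ is nailed down, the remainder of the proof is a re-run of the ReLU argument with $T=O(\log^2(1/\epsilon))$ substituted for $T=(1/\epsilon)^{\Omega(1)}$.
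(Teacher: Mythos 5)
Your high-level skeleton (reduce from the hard depth-two class of Diakonikolas et al.\ via functional gradient boosting with the surrogate loss) matches the paper, but the mechanism you give for the exponent $\log^2(1/\epsilon)$ is not the paper's and, as stated, does not work. You attribute it to an approximation-theoretic claim that $T = O(\log^2(1/\epsilon))$ boosting rounds suffice because sigmoid targets "admit much shorter sigmoid expansions," and then to a per-round tolerance degradation of $n^{-O(1)}$ compounding to $n^{-O(T)}$. Neither piece is right. The hard targets are \emph{exactly} of the form $\tanh(\frac{1}{k}\sum_i \pm\sigma(\dotp{w_i}{x}))$, i.e.\ they lie in $\tanh \circ \conv(\cH_\sigma)$ with no approximation needed; the Frank--Wolfe convergence rate is $O(\lambda\diam(\cH)^2/T)$, so achieving surrogate loss $\epsilon$ requires $T = \Theta(1/\epsilon)$ rounds here just as in the ReLU case --- the round count is identical for both activations and only enters the query bound multiplicatively. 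Moreover, the boosting does not degrade the tolerance at all: each round's query to the residual distribution is simulated by the modified query $\phi'(x,y) = \phi(x, y - \psi(f_t(x)))$ at the \emph{same} tolerance.

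The actual source of $\log^2(1/\epsilon)$ is the norm lower bound on the hard functions, which is where ReLU and sigmoid genuinely differ. The SQ lower bound for the depth-two class with $k$ hidden units holds only for final $L^2$ error below $\beta(k)/3$, where $\beta(k) = \norm{f}$ is controlled by the Hermite coefficient decay of the inner activation: $\beta(k) = \Omega(1/k^6)$ for ReLU but only $e^{-O(\sqrt{k})}$ for sigmoid (because $\hat{\sigma}_a \simeq e^{-C\sqrt{a}}$). The reduction delivers $L^2$ error $\sqrt{2\epsilon}$, so one must choose $k$ with $\sqrt{2\epsilon} \leq e^{-\Theta(\sqrt{k})}$, forcing $k = \Theta(\log^2(1/\epsilon))$; the tolerance constraint $\tau = n^{-\Theta(k)}$ then comes directly from the statistical dimension of the $k$-unit hard class, not from any accumulation over boosting rounds. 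Without the norm calculation for $\tanh \circ g$ with $\phi = \sigma$ (the analogue of \cref{lem:f-norm-lower-bound-sigmoid}), your argument has no way to determine $k$ and hence no way to justify the claimed tolerance.
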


We are not aware of any prior work on the hardness of agnostically learning a sigmoid with respect to Gaussian marginals. 


For the case of halfspaces, a result of Kalai et al.~\cite{kalai2008agnostically} showed that any halfspace can be agnostically learned with respect to  Gaussian marginals in time and sample complexity $n^{O(1/\eps^4)}$, which was later improved to $n^{O(1/\eps^2)}$ \cite{diakonikolas2010bounded}.  The only known hardness result for this problem is due to Klivans and Kothari \cite{klivans2014embedding} who gave a quasipolynomial lower bound based on the hardness of learning sparse parity with noise.  Here we give the first exponential lower bound:
\begin{theorem}
	Consider the above setup with $\cH_{\hs}$, the class of unit-weight halfspaces on $\R^n$. For any $\epsilon$, there exists $\tau = n^{-\Theta(1/\epsilon)}$ such that $q(n, \epsilon, \tau) \geq 2^{n^c} \epsilon$ for some fixed constant $0 < c < 1/2$.
\end{theorem}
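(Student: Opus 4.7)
The plan is to instantiate the paper's main gradient-boosting reduction with the sign activation in place of ReLU or sigmoid. As in those cases, the argument would combine three ingredients: (i) an SQ hardness result for a probabilistic concept class realizable by thresholded linear units, inherited from the SQ-dimension lower bounds of Diakonikolas et al.\ and Goel et al.\ on two-layer networks; (ii) a functional-gradient-descent procedure that treats a hypothetical agnostic halfspace learner as a weak learner, guided by a nonstandard convex functional whose gradient at each step corresponds to a real-valued relabeling of the data against which the weak learner can nontrivially correlate; and (iii) a convergence analysis bounding the number of boosting rounds by $T = O(1/\epsilon)$.

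The quantitative advantage for halfspaces over the ReLU case comes from the fact that sign hypotheses are uniformly bounded in $[-1,+1]$. Consequently each weak-learner step produces a bounded update, and a Frank--Wolfe / gradient-boosting analysis on the chosen convex functional over a bounded hypothesis set converges in $O(1/\epsilon)$ rounds. This is where the improved rate over the ReLU bound $\tau=n^{-(1/\epsilon)^b}$ comes from, and where the halfspace rate falls between sigmoid (which benefits from smoothness and hence $\log^2(1/\epsilon)$ rounds) and ReLU (which suffers from unboundedness). I would carry out the standard boosting-style analysis for the specific convex surrogate introduced by the paper, verifying that each step reduces the functional suboptimality by an amount proportional to the squared edge of the weak learner, and summing over rounds to get $T = O(1/\epsilon)$.

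The final step is to propagate tolerances: each of the $T = O(1/\epsilon)$ SQ calls made by the simulated agnostic learner can be answered using the probabilistic-concept oracle with only an $n^{-O(1)}$ degradation in tolerance per round, and the composition yields effective tolerance $\tau = n^{-\Theta(1/\epsilon)}$. A hypothetical SQ agnostic halfspace learner using fewer than $2^{n^c}\epsilon$ queries of this tolerance would, plugged into the boosting loop, solve the probabilistic-concept problem and contradict the underlying SQ lower bound. The main obstacle, shared with the ReLU and sigmoid proofs, is the careful choice of the nonstandard convex functional so that its functional gradient yields a labeled distribution against which an agnostic (as opposed to correlational) halfspace learner's guarantee translates cleanly into progress on the functional. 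This is exactly what enables the lower bound to rule out general SQ learners, which is the unusual feature of the paper's results for real-valued problems.
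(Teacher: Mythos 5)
Your high-level skeleton (hard $p$-concept class of depth-two networks, Frank--Wolfe on the surrogate loss, plug in the hypothetical agnostic learner as the subproblem solver) matches the paper, but the quantitative mechanism you describe is wrong in two places, and you miss the one step that is actually specific to halfspaces. First, the exponent $\Theta(1/\epsilon)$ in the tolerance does \emph{not} come from the number of boosting rounds, and there is no per-round $n^{-O(1)}$ degradation of tolerance that ``composes'' to $n^{-\Theta(1/\epsilon)}$; queries to the relabeled distribution are rewritten exactly as queries to $D_{\psi \circ f^*}$ with no loss (\cref{eq:rewriting-fw}). The tolerance is $\tau = n^{-\Theta(k)}$ where $k$ is the number of hidden units in the hard construction of \cref{thm:ilias-construction}, and $k$ is forced by the requirement that the $L^2$ error achieved by the reduction stay below the norm lower bound $\beta(k)$ of the hard class $\cG$. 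For Boolean $\cG$ one has $\beta = 1$ exactly, so the constraint $\sqrt{2\lambda\epsilon} \leq \Theta(1)$ permits $k = \Theta(1/\epsilon)$; for ReLU one needs $\sqrt{2\epsilon} \leq \Theta(1/k^6)$, forcing the much smaller $k = \Theta(\epsilon^{-1/12})$. Your story that halfspaces sit ``between'' sigmoid and ReLU because of smoothness versus unboundedness of the inner activation misattributes the source of these exponents: they all come from the norm calculations in \cref{app:norm-calculation}, not from the convergence rate of the boosting loop.

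Second, the surrogate-loss machinery of \cref{thm:boosting-surloss} requires the \emph{outer} activation $\psi$ to be non-decreasing and $\lambda$-Lipschitz, and for majorities of halfspaces the natural outer activation is $\sgn$, which is not Lipschitz. The paper's fix is to replace $\sgn$ by the piecewise-linear $\lsgn$ that interpolates between $-1$ and $1$ on $[-1/k, 1/k]$, which is $(k/2)$-Lipschitz, and to observe that since the weights in the hard construction are $\pm 1/k$, the inner sum is always a multiple of $1/k$, so $\lsgn$ and $\sgn$ agree on all values that actually occur; hence $\cG \subseteq \lsgn \circ \conv(\cH_\hs)$. This makes $\lambda = \Theta(k) = \Theta(1/\epsilon)$, so the number of Frank--Wolfe iterations is $O(\lambda \diam(\cH)^2/\epsilon) = O(1/\epsilon^2)$, not $O(1/\epsilon)$, and the $L^2$ error guarantee degrades to $\sqrt{2k\epsilon}$ via \cref{eq:surr-l2dist} (which is why the formal statement in \cref{sec:lower-bounds} has $2^{n^c}\epsilon^3$ rather than $2^{n^c}\epsilon$ on the right-hand side). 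Without this Lipschitz-ification step your argument cannot invoke \cref{thm:boosting-surloss} at all, so this is a genuine missing idea rather than a presentational difference.
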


Since it takes $\Theta(1/\tau^2)$ samples to simulate a query of tolerance $\tau$, our constraint on $\tau$ here can be interpreted as saying that to avoid the exponential query lower bound, one needs sample complexity at least $\Theta(1/\tau^2) = n^{\Theta(1/\epsilon)}$, nearly matching the upper bound of \cite{kalai2008agnostically,diakonikolas2010bounded}.

These results are formally stated and proved in \cref{sec:lower-bounds}. More generally, we show in \cref{sec:non-polynomial} that our results give superpolynomial SQ lower bounds for agnostically learning any non-polynomial activation. (See \cref{app:sq-subtleties} for some discussion of subtleties in interpreting these bounds.)

A notable property of our lower bounds is that they hold for {\em general} statistical queries. As noted by several authors \cite{andoni2014learning,vempala2019gradient}, proving SQ lower bounds for real-valued learning problems often requires further restrictions on the types of queries the learner is allowed to make (e.g., correlational or Lipschitz queries). 

Another consequence of our framework is the first SQ lower bound for agnostically learning monomials with respect to Gaussian marginals.  In contrast, for the realizable (noiseless) setting, recent work due to Andoni et al.~\cite{andoni2019attribute} gave an attribute-efficient SQ algorithm for learning monomials. They left open the problem of making their results noise-tolerant.  We show in \cref{sec:monomials} that in the agnostic setting, no efficient SQ algorithm exists.

\begin{theorem}\label{thm:monomials-thm}
    Consider the above setup with $\cH_{\mon}$, the class of multilinear monomials of degree at most $d$ on $\R^n$. For any $\epsilon \leq \exp(-\Theta(d))$ and $\tau \leq \epsilon^2$, $q(n, \epsilon, \tau) \geq n^{\Theta(d)} \tau^{5/2}$.
\end{theorem}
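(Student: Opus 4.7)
The plan is to apply the paper's lower bound framework to the class of degree-$d$ multilinear monomials, exploiting the fact that these functions form an orthonormal family under the standard Gaussian measure. For $|S| = |T| = d$, the monomials $x_S = \prod_{i \in S} x_i$ satisfy $\Ex_{x \sim \cN(0, I_n)}[x_S x_T] = \delta_{S, T}$, producing an explicit family of $\binom{n}{d} = n^{\Theta(d)}$ mutually orthogonal unit-norm functions. This is the monomial analogue of the orthonormal planted-signal structure underlying classical SQ-dimension lower bounds for parity-like classes, and it is precisely the kind of input one wants to feed into the paper's amplification machinery.

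The first step is to construct a family of hard agnostic distributions $\{\cD_S\}_{|S|=d}$ together with a reference distribution $\cD_0$. For each $S$, I would draw $x \sim \cN(0, I_n)$ and set $y$ so that $\Ex_{\cD_S}[x_S \cdot y] \geq 2\eps$ (for instance, by taking $y$ to be a suitably truncated rescaling of $x_S$ plus independent noise), while in $\cD_0$ the label $y$ is independent of $x$, giving $\Ex_{\cD_0}[x_T \cdot y] = 0$ for every $T$. An agnostic learner for $\cH_\mon$ achieving error $\eps$ under $\cD_S$ is then forced to approximately identify the hidden $S$, and in particular to distinguish $\cD_S$ from $\cD_0$.

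The second step is a pairwise SQ-dimension estimate for this family. For a bounded query $\phi : \R^n \times \R \to [-1, 1]$, I would Hermite-expand $\phi$ in $x$ and use the orthonormality of the monomials to bound $\sum_{|S|=d} (\Ex_{\cD_S}[\phi] - \Ex_{\cD_0}[\phi])^2$ in terms of $\|\phi\|_2^2$ and the chosen signal strength. Feeding this into the paper's SQ-to-query-count conversion then yields the stated $n^{\Theta(d)} \cdot \tau^{5/2}$ bound; the exponent $5/2$ on $\tau$ (rather than the textbook $\tau^2$) presumably reflects the functional-gradient amplification used in the earlier theorems, and the side condition $\tau \leq \eps^2$ arises from matching the signal strength to the tolerance.

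The main obstacle I anticipate is controlling the unboundedness of the labels. A degree-$d$ Gaussian monomial has $k$-th moment of order $(dk)^{\Theta(dk)}$, which is incompatible with the $[-1, 1]$-valued labels typically assumed in SQ lower bound arguments. To stay within the framework, the signal injected into $\cD_S$ must be kept effectively bounded, which in turn forces the scaling $\eps \lesssim \exp(-\Theta(d))$ and directly accounts for the restriction on $\eps$ in the theorem statement. A secondary difficulty is checking that the functional-gradient boosting, developed in the main text around neural-network activation classes, specializes cleanly to the polynomial family $\cH_\mon$ without requiring additional structural hypotheses.
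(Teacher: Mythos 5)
Your proposal diverges from the paper's argument in a way that leaves a real gap at its central step. You propose a \emph{direct} hard-instance construction: distributions $\cD_S$ on $\R^n \times \R$ with real-valued labels correlated with $x_S$, followed by a pairwise statistical-dimension bound on $\sum_{|S|=d}(\Ex_{\cD_S}[\phi]-\Ex_{\cD_0}[\phi])^2$ for general queries $\phi:\R^n\times\R\to[-1,1]$. For real-valued labels and \emph{general} (non-correlational) queries, this quantity is controlled by the near-orthogonality of the likelihood ratios $d\cD_S/d\cD_0$ in $L^2(\cD_0)$, not by the orthogonality of the monomials $x_S$ themselves; a query may depend on $y$ arbitrarily, so Hermite-expanding $\phi$ in $x$ alone does not suffice. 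Designing label distributions whose likelihood ratios are pairwise nearly orthogonal is exactly the difficulty the paper is built to avoid (it is why the paper emphasizes that general-query lower bounds for real-valued problems are unusual), and your sketch does not supply such a construction. Your proposal also cannot account for the stated exponents: a direct SDA argument of the kind you describe would give $n^{\Theta(d)}\tau^2$, and your attribution of the extra $\tau^{1/2}$ to ``functional-gradient amplification'' is inconsistent with the fact that your construction does not use any amplification. Likewise, the condition $\epsilon\leq\exp(-\Theta(d))$ does not come from truncating unbounded labels.

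The paper's route is a reduction through Boolean $p$-concepts. The hard class is $\cG=\tanh\circ\cH_\mon$, whose members are conditional mean functions taking values in $[-1,1]$, so all labels are $\pm 1$ and the general-query issue disappears: for Boolean labels, $\Ex_{D_f}[\phi]-\Ex_{D_g}[\phi]=\inn{f-g,\tilde\phi}$ reduces everything to the function-class SDA. The two inputs are (i) exact orthogonality of $\cG$ (for $S\neq T$, negating a coordinate in the symmetric difference flips the sign of exactly one of $\tanh(x_S),\tanh(x_T)$ by oddness of $\tanh$), giving $\sda(\cG,\tau^2)\geq n^{\Theta(d)}\tau^2$ via \cite[Lemma 2.6]{goel2020superpolynomial}; and (ii) a norm lower bound $\norm{\tanh\circ x_S}\geq\exp(-\Theta(d))$ via Paley--Zygmund and Gaussian hypercontractivity, which is the actual source of the restriction $\epsilon\leq\exp(-\Theta(d))$ (it is needed for the distinguishing-to-learning reduction in \cref{thm:sq-lower-bound}). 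An agnostic learner for $\cH_\mon$ in the sense of \cref{as:base-learner} is then boosted via \cref{thm:boosting-surloss} with $\psi=\tanh$ to learn $\cG$ up to $L^2$ error $\sqrt{2\epsilon}$ using $O(\frac{1}{\epsilon}q(n,\epsilon,\tau))$ queries; combining with the $n^{\Theta(d)}\tau^2$ bound and $\epsilon\geq\sqrt{\tau}$ yields $q\geq n^{\Theta(d)}\tau^{5/2}$. To repair your argument you would either need to carry out the likelihood-ratio analysis for your real-valued label family, or adopt the $\tanh$-composition and the boosting reduction as the paper does.
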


\paragraph{Our Approach}
Our approach deviates from the standard template for proving SQ lower bounds and may be of independent interest.  In almost all prior work, SQ lower bounds are derived by constructing a sufficiently large family of nearly orthogonal functions with respect to the underlying marginal distribution. Instead, we will use a reduction-based approach:

\begin{itemize}
    \item We show that an algorithm for agnostically learning a single nonlinear activation $\phi$ can be used as a subroutine for learning depth-two neural networks of the form $\psi(\sum_{i} \phi(w^i \cdot x))$ where $\psi$ is any monotone, Lipschitz activation.  This reduction involves an application of functional gradient descent via the Frank--Wolfe method with respect to a (nonstandard) convex surrogate loss.  

  \item We apply recent work due to \cite{diakonikolas2020algorithms} and \cite{goel2020superpolynomial} that gives SQ lower bounds for learning depth-two neural networks of the above form in the probabilistic concept model. For technical reasons, our lower bound depends on the norms of these depth-two networks, and we explicitly calculate them for ReLU and sigmoid. 

 \item We prove that the above reduction can be performed using only statistical queries.  To do so, we make use of some subtle properties of the surrogate loss and the functional gradient method itself. 

\end{itemize}

Our reduction implies the following new relationship between two well-studied models of learning: if concept class ${\cal C}$ is efficiently agnostically learnable, then the class of monotone, Lipschitz functions of linear combinations of ${\cal C}$ is learnable in the {\em probabilistic concept} model due to Kearns and Schapire \cite{kearns1994efficient}. We cannot hope to further strengthen the conclusion to {\em agnostic} learnability of monotone, Lipschitz functions of combinations of ${\cal C}$: the concept class of literals {\em is} agnostically learnable, but we show exponential SQ lower bounds for agnostically learning the class of majorities of literals, i.e., halfspaces (see also \cite{klivans2014embedding}).

\paragraph{Related Work}
Several recent papers have considered the computational complexity of learning simple neural networks \cite{bach2017breaking,goel2017reliably,yehudai2020learning,frei2020agnostic,  klivans2014embedding,livni2014computational,song2017complexity,vempala2019gradient, goel2019time,goel2020superpolynomial,diakonikolas2020algorithms}.  The above works either consider one-layer neural networks (as opposed to learning single neurons), or make use of discrete distributions (rather than Gaussian marginals), or hold for narrower classes of algorithms (rather than SQ algorithms). Goel et al.\ \cite{goel2019time} give a quasipolynomial correlational SQ lower bound for proper agnostic learning of ReLUs with respect to Gaussian marginals. They additionally give a similar computational lower bound assuming the hardness of learning sparse parity with noise.

The idea of using functional gradient descent to learn one hidden layer neural networks appears in work due to Bach \cite{bach2017breaking}, who considered an ``incremental conditional gradient algorithm'' that at each iteration implicitly requires an agnostic learner to complete a ``Frank--Wolfe step.''  A key idea in our work is to optimize with respect to a particular convex functional (surrogate loss) in order to obtain SQ learnability for depth-two neural networks {\em with a nonlinear output activation.} We can then leverage SQ lower bounds for this broader class of neural networks.

Functional gradient descent or gradient boosting methods have been used frequently in learning theory, especially in online learning (see e.g., \cite{friedman2001greedy, mason2000boosting, schapire2012boosting, beygelzimer2015online, hazan2016introduction}.) 

For Boolean functions, the idea to use boosting to learn majorities of a base class appeared in Jackson~\cite{Jackson}, who boosted a weak parity learning algorithm in order to learn thresholds of parities (TOP).  Agnostic, distribution-specific boosting algorithms for Boolean functions have appeared in works due to Kalai and Kanade \cite{kanade2009potential} and also Feldman \cite{feldman2009distribution}. Agnostic boosting in the context of the SQ model is explored in \cite{feldman2012complete}, where an SQ lower bound is given for agnostically learning monotone conjunctions with respect to the uniform distribution on the Boolean hypercube.

The SQ lower bounds we obtain for agnostically learning halfspaces can be derived using one of the above boosting algorithms due to Kalai and Kanade \cite{kanade2009potential} or Feldman \cite{feldman2009distribution} in place of functional gradient descent, as halfspaces are Boolean functions.



\paragraph{Independent Work}
Independently and concurrently, Diakonikolas et al.\ \cite{diakonikolas2020nearoptimal} have obtained similar results for agnostically learning halfspaces and ReLUs. Rather than using a reduction-based approach, they construct a hard family of Boolean functions. They show that an agnostic learner for halfspaces or ReLUs would yield a learner for this family, which would solve a hard unsupervised distribution-learning problem considered in \cite{diakonikolas2017statistical}. Quantitatively, the lower bound they obtain is that agnostic learning of halfspaces or ReLUs up to excess error $\epsilon$ using queries of tolerance $n^{-\poly(1/\epsilon)}$ requires at least $n^{\poly(1/\epsilon)}$ queries. These results are technically incomparable with ours. For queries of similar tolerance, our bound of $2^{n^c} \epsilon$ scales exponentially with $n$ whereas theirs only scales polynomially, so that for any constant $\epsilon$ our bound is exponentially stronger. But our bound does not scale directly with $1/\epsilon$ (other than via the induced constraint on tolerance, which does scale as $n^{-\poly(1/\epsilon)})$. Our work also extends to general non-polynomial activations, while theirs does not.

\paragraph{Organization}
We cover the essential definitions, models and existing lower bounds that we need in the preliminaries. Our main reduction, which says that if we could agnostically learn a single neuron, then we could learn depth-two neural networks composed of such neurons, is set up as follows. In \cref{sec:func-gd} we explain our usage of functional gradient descent, with \cref{as:base-learner} formally stating the kind of agnostic learning guarantee we require for a single neuron. The main reduction itself is \cref{thm:boosting-surloss}, the subject of \cref{sec:func-gd-surr}. In \cref{sec:lower-bounds,,sec:monomials,,sec:non-polynomial} we derive the formal lower bounds which follow as a consequence of our reduction. Finally in \cref{sec:upper-bounds}, we contrast these lower bounds by also including some simple upper bounds.

\section{Preliminaries}
\paragraph{Notation}
Let $D$ be a distribution over $\R^n$, which for us will be the standard Gaussian $\cN(0, I_n)$ throughout. We will work with the $L^2$ space $L^2(\R^n, D)$ of functions from $\R^n$ to $\R$, with the inner product given by $\inn{f, g}_D = \Ex_{D}[fg]$. The corresponding norm is $\norm{f}_D = \sqrt{\Ex_D [f^2]}$. We refer to the ball of radius $R$ as $\cB_D(R) = \{ f \in L^2(\R^n, D) \mid \|f\|_D \leq R \}$. We will omit the subscripts when the meaning is clear from context. Given vectors $u, v \in \R^n$, we will refer to their Euclidean dot product by $\dotp{u}{v}$ and the Euclidean norm by $\|u\|_2$. Given a function $\ell(a, b)$ we denote its partial derivative with respect to its first parameter, $\frac{\partial \ell}{\partial a}(a, b)$, by $\partial_1 \ell(a, b)$.

A Boolean probabilistic concept, or $p$-concept, is a function that maps each point $x$ to a random $\{\pm 1\}$-valued label $y$ in such a way that $\Ex[y|x] = f^*(x)$ for a fixed function $f^* : \R^n \to [-1, 1]$, known as its conditional mean function. We will use $D_{f^*}$ to refer to the (unique) induced labeled distribution on $\R^n \times \{\pm 1\}$, i.e.\ we say $(x, y) \sim D_{f^*}$ if the marginal distribution of $x$ is $D$ and $\Ex[y | x] = f^{*}(x)$. We also sometimes use $y \sim f^*(x)$ to say that $y \in \{\pm 1\}$ and $\Ex[y | x] = f^{*}(x)$.

\paragraph{Statistical Query (SQ) Model}
A statistical query is specified by a query function $\phi : \R^n \times \R \to [-1, 1]$.  Given a labeled distribution $\cD$ on $\R^n \times \R$, the SQ model allows access to an SQ oracle (known as the STAT oracle in the SQ literature) that accepts a query $\phi$ of specified tolerance $\tau$, and responds with a value in $[\Ex_{(x,y) \sim \cD}[\phi(x,y)] - \tau, \Ex_{(x,y) \sim \cD}[\phi(x,y)] + \tau]$. One can interpret the tolerance $\tau$ as capturing the notion of sample complexity in traditional PAC algorithms. Specifically, it takes $\Theta(1/\tau^2)$ samples to simulate a query of tolerance $\tau$, and this is sometimes referred to as the estimation complexity of an SQ algorithm.

Let $\cC$ be a class of Boolean $p$-concepts over $\R^n$, and let $D$ be a distribution on $\R^n$. We say that a learner learns $\cC$ with respect to $D$ up to $L^2$ error $\epsilon$ if, given only SQ oracle access to $D_{f^*}$ for some unknown $f^* \in \cC$, and using arbitrary queries, it is able to output $f : \R^n \to [-1, 1]$ such that $\norm{f - f^*}_D \leq \epsilon$. It is worth emphasizing that a query to $D_{f^*}$ takes in a Boolean rather than a real-valued label, i.e.\ is really of the form $\phi : \R^n \times \{\pm 1\} \to [-1, 1]$. In contrast, a query to a generic distribution $\cD$ on $\R^n \times \R$ takes in real-valued labels, and in \cref{as:base-learner} we define a form of learning that operates in this more generic setting.


One of the chief features of the SQ model is that one can give strong information theoretic lower bounds on learning a class $\cC$ in terms of its so-called statistical dimension.
\begin{definition}
	Let $D$ be a distribution on $\R^n$, and let $\cC$ be a  real-valued or Boolean concept class on $\R^n$. The  \emph{average (un-normalized) correlation} of $\cC$ is defined to be $ \rho_D(\cC) = \frac{1}{|\cC|^2} \sum_{c, c' \in \cC} |\inn{c, c'}_D|$.
	The \emph{statistical dimension on average} at threshold $\gamma$, $\sda_D(\cC, \gamma)$, is the largest $d$ such that for all $\cC' \subseteq \cC$ with $|\cC'| \geq |\cC|/d$, $\rho_D(\cC') \leq \gamma$.
\end{definition}

In the $p$-concept setting, lower bounds against general queries in terms of SDA were first formally shown in \cite{goel2020superpolynomial}.
\begin{theorem}[\cite{goel2020superpolynomial}, Cor.\ 4.6]\label{thm:sq-lower-bound}
	Let $D$ be a distribution on $\R^n$, and let $\cC$ be a $p$-concept class on $\R^n$. Say our queries are of tolerance $\tau$, the final desired $L^2$ error is $\epsilon$, and that the functions in $\cC$ satisfy $\|f^*\| \geq \beta$ for all $f^* \in \cC$. For technical reasons, we will require $\tau \leq \epsilon^2$,  $\epsilon \leq \beta/3$ (see \cref{app:sq-subtleties} for some discussion). Then learning $\cC$ up to $L^2$ error $\epsilon$ (we may pick $\epsilon$ as large as $\beta/3$) requires at least $\sda_D(\cC, \tau^2)$ queries of tolerance $\tau$.
\end{theorem}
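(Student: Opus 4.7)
The plan is to carry out the standard SQ-dimension argument, adapted to the $p$-concept setting where labels $y \in \{\pm 1\}$ satisfy $\Ex[y \mid x] = f^*(x)$. First I would decompose a query $\phi : \R^n \times \{\pm 1\} \to [-1, 1]$ into its informative part: setting $\bar\phi(x) = \tfrac{1}{2}(\phi(x,1) + \phi(x,-1))$ and $\psi_\phi(x) = \tfrac{1}{2}(\phi(x,1) - \phi(x,-1))$, a direct expansion gives $\Ex_{(x,y) \sim D_{f^*}}[\phi(x,y)] = \Ex_D[\bar\phi] + \inn{f^*, \psi_\phi}_D$, and since $|\phi| \le 1$ pointwise we have $\|\psi_\phi\|_D \le 1$. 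Consequently, whenever $|\inn{f^*, \psi_\phi}_D| \le \tau$, the STAT oracle may legally respond with the ``reference'' value $\Ex_D[\bar\phi]$, which carries no information about $f^*$.

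Next I would bound the number of ``bad'' concepts per query. For fixed $\psi$ with $\|\psi\|_D \le 1$, let $\cC' = \{f \in \cC : |\inn{f, \psi}_D| > \tau\}$ and $s_f = \sgn \inn{f, \psi}_D$. By Cauchy-Schwarz applied to $\sum_{f \in \cC'} s_f f$ and $\psi$,
\[
(|\cC'|\tau)^2 < \bigl(\textstyle\sum_{f \in \cC'} s_f \inn{f,\psi}_D\bigr)^2 \le \|\psi\|_D^2 \sum_{f,f' \in \cC'} |\inn{f,f'}_D| \le |\cC'|^2 \rho_D(\cC'),
\]
so $\rho_D(\cC') > \tau^2$, and by the definition of $\sda$, $|\cC'| < |\cC|/d$ where $d := \sda_D(\cC, \tau^2)$. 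Now I would simulate the learner while answering every query $\phi_i$ with its reference value $\Ex_D[\bar\phi_i]$; this produces one deterministic transcript of at most $q$ queries and a single output hypothesis $h$. For any $f^* \in \cC$ lying outside the union of bad sets for the $q$ queries in this transcript, the true oracle on $D_{f^*}$ could also have returned these reference values, so the learner is forced to output the same $h$. If $q < d$, the resulting ``good'' set $\cC_{\text{good}}$ has size strictly more than $|\cC|(1 - q/d) \ge |\cC|/d$.

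Finally I would derive a contradiction with the assumed learning guarantee. If the learner were correct, then $\|h - f^*\|_D \le \eps$ for every $f^* \in \cC_{\text{good}}$, so for any pair $f, f' \in \cC_{\text{good}}$ we have $\|f - f'\|_D \le 2\eps$ and hence
\[
\inn{f, f'}_D = \tfrac{1}{2}\bigl(\|f\|_D^2 + \|f'\|_D^2 - \|f - f'\|_D^2\bigr) \ge \beta^2 - 2\eps^2 \ge 7\beta^2/9,
\]
using $\eps \le \beta/3$. Thus $\rho_D(\cC_{\text{good}}) \ge 7\beta^2/9 > \tau^2$, where the last inequality uses $\tau^2 \le \eps^4 \le \beta^4/81$ together with $\beta \le 1$ (which holds for any $p$-concept since $|f^*| \le 1$). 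But $\cC_{\text{good}} \subseteq \cC$ of size strictly greater than $|\cC|/d$ with $\rho_D(\cC_{\text{good}}) > \tau^2$ directly contradicts $d = \sda_D(\cC, \tau^2)$. Hence $q \ge d$, as claimed.

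The main obstacle is the adaptive-adversary bookkeeping: because the learner is adaptive, one must carefully argue that responding with the fixed reference values along a single simulated transcript is simultaneously consistent with the true oracle on $D_{f^*}$ for every $f^* \in \cC_{\text{good}}$, so that the learner produces literally the same hypothesis $h$ on this whole set. The technical conditions $\tau \le \eps^2$, $\eps \le \beta/3$, and the implicit $\beta \le 1$ then thread through the final chain of inequalities exactly as needed to force the SDA contradiction.
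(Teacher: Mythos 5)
Your proof is correct and follows essentially the same route as the cited source (\cite{goel2020superpolynomial}, Cor.~4.6), which the paper imports rather than reproves: decompose each query relative to the reference distribution $D \times \mathrm{Unif}\{\pm 1\}$, bound the per-query set of distinguishable concepts via Cauchy--Schwarz and the definition of $\sda$, and then use the learning guarantee together with $\|f^*\| \ge \beta$ and $\eps \le \beta/3$ to force a large, highly correlated surviving subset. The only cosmetic difference is that you fold the ``learning implies distinguishing'' step directly into the pairwise-correlation contradiction rather than stating the distinguishing problem as a separate intermediate reduction.
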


A recent result of Diakonikolas et al \cite{diakonikolas2020algorithms} gave the following construction of one-layer neural networks on $\R^n$ with $k$ hidden units, i.e.\ functions of the form $g(x) = \psi ( \sum_{i=1}^k a_i\phi(\dotp{x}{w_i}) )$ for activation functions $\psi, \phi : \R \to \R$ and weights $w_i \in \R^n, a_i \in \R$.
\begin{theorem}[\cite{diakonikolas2020algorithms}]\label{thm:ilias-construction}
	There exists a class $\cG$ of one-layer neural networks on $\R^n$ with $k$ hidden units such that for some universal constant $0 < c < 1/2$ and $\gamma = n^{\Theta(k(c - 1/2))}$, $\sda(\cG, \gamma) \geq 2^{n^c}$. This holds for any $\psi : \R \to [-1, 1]$ that is odd, and $\phi \in L^2(\R, \cN(0, 1))$ that has a nonzero Hermite coefficient of degree greater than $k/2$. Further, the weights satisfy $|a_i| = 1/k$ and $\|w_i\|_2 = 1$ for all $i$.
\end{theorem}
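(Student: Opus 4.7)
The plan is the standard SDA construction: produce a large family of near-orthogonal weight vectors and use them to index a class of networks whose pairwise correlations one bounds via Hermite analysis. By a random construction on $S^{n-1}$ together with Chernoff and union bounds, I would first obtain $N = 2^{\Theta(n^c)}$ unit vectors $v_1,\ldots,v_N \in \R^n$ with $\max_{i\ne j}|v_i \cdot v_j| \le n^{-(1/2-c)}$. Partitioning (or randomly selecting) these into $M$ ordered $k$-tuples $W = (w_1, \ldots, w_k)$ with $M \ge 2^{2n^c}$ gives the candidate class $\cG = \{g_W\}$, where
\[
  g_W(x) \;=\; \psi\!\Bigl(\tfrac{1}{k}\sum_{j=1}^k \phi(w_j \cdot x)\Bigr),
\]
a $k$-hidden-unit one-layer network meeting the normalization $|a_j| = 1/k$, $\|w_j\|_2 = 1$.

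The core task is to bound $|\inn{g_W, g_{W'}}_D|$ for distinct $W, W'$ by $\gamma = n^{-\Omega(k(1/2-c))}$. Using that $\psi$ is bounded and odd, I expand $\psi = \sum_{\ell \text{ odd}} \hat\psi_\ell\, h_\ell$ in the normalized probabilist's Hermite basis of $\cN(0,1)$; the expansion converges in $L^2$ and (hence) pointwise almost everywhere, so applied at $y = f_W(x) := \tfrac{1}{k}\sum_j \phi(w_j \cdot x)$ it yields $\psi(f_W(x)) = \sum_\ell \hat\psi_\ell\, h_\ell(f_W(x))$ a.s.\ in $x$. Plugging this into the correlation reduces matters to bounding $\Ex_x[h_\ell(f_W)\, h_{\ell'}(f_{W'})]$. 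Expanding each $h_\ell(f_W)$ multinomially into monomials $\prod_j \phi(w_j \cdot x)^{m_j}$ with $\sum_j m_j = \ell$, then expanding each factor $\phi^{m_j}$ in its own Hermite basis, and finally invoking the key Gaussian identity $\Ex_x[h_r(u \cdot x)\, h_s(v \cdot x)] = (u \cdot v)^r\, \delta_{rs}$ for unit $u, v$, one arrives at sums of products of cross-inner-products $(w_j \cdot w_{j'}')^{r_{jj'}}$. The hypothesis that $\phi$ has a nonzero Hermite coefficient of degree greater than $k/2$, combined with oddness of $\psi$, forces every surviving term in this expansion to carry total cross-exponent $\sum_{j,j'} r_{jj'} > k/2$; bounding each cross factor by $n^{-(1/2-c)}$ then yields the claimed $\gamma$. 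The SDA estimate follows by a standard averaging argument: any subclass $\cC' \subseteq \cG$ of size at least $M / 2^{n^c}$ has $\rho_D(\cC') \le \gamma + O(1/|\cC'|) = O(\gamma)$, where the second term collects diagonal contributions and is negligible once $M \gg 2^{2n^c}$.

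The main obstacle I anticipate is the Hermite combinatorics of the composition $\psi \circ f_W$. Because $f_W$ is not itself Gaussian, the termwise use of the Hermite expansion of $\psi$ inside the correlation must be justified by a uniform tail truncation at a slowly growing degree $L = L(n)$, using $|\psi| \le 1$ and Parseval together with sub-Gaussian tail bounds on $f_W$ to control the $L^2$ tail. More substantively, one must rule out any "low-order resonance" in the Hermite-multinomial expansion: a priori, low-degree Hermite content of $\phi$ could combine with low-order terms of $\psi$ to produce cross contributions with small exponent $\sum r_{jj'}$ and therefore large magnitude. Showing that these either vanish (by oddness of $\psi$) or carry total degree $> k/2$ (by the Hermite hypothesis on $\phi$) is the technical heart of the argument and is precisely what yields the threshold form $\gamma = n^{\Theta(k(c - 1/2))}$.
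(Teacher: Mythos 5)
This theorem is quoted from \cite{diakonikolas2020algorithms}; the paper does not reprove it, but \cref{app:norm-calculation} spells out the actual construction, and it is not the one you propose. Your class consists of networks $\psi\bigl(\tfrac1k\sum_j\phi(w_j\cdot x)\bigr)$ built from \emph{generic} $k$-tuples of nearly orthogonal unit vectors, and your argument hinges on the claim that every surviving term in the Hermite--multinomial expansion of $\inn{g_W,g_{W'}}$ carries total cross-exponent greater than $k/2$, "by oddness of $\psi$ and the Hermite hypothesis on $\phi$." This is the genuine gap: those hypotheses do not kill the low-degree contributions for generic weights. Take $\phi=\relu$: the inner sum $\tfrac1k\sum_j\phi(w_j\cdot x)$ has a constant term and a degree-one Hermite component proportional to $\tfrac1k\sum_j w_j$, and composing with an odd bounded $\psi$ still leaves $g_W$ with a nonzero degree-one Hermite component for generic tuples. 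The degree-one parts of $g_W$ and $g_{W'}$ alone then contribute on the order of $n^{-(1/2-c)}$ to the correlation (after summing $k^2$ cross terms each bounded by $n^{-(1/2-c)}$), which is only polynomially small in $n$ and nowhere near the required $\gamma=n^{-\Theta(k(1/2-c))}$. The hypothesis that $\phi$ has \emph{some} nonzero Hermite coefficient of degree $>k/2$ says nothing about its low-degree coefficients, which for ReLU and sigmoid are large.

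The actual construction avoids this by being highly structured rather than generic: one fixes a single two-dimensional function $g(x)=\tfrac1{2m}\sum_{i=1}^{2m}(-1)^i\phi(x\cdot w_i)$ with the $w_i$ evenly spaced on the unit circle, so that $g(Rx)=-g(x)$ for the rotation $R$ by $\pi/m$. The alternating signs force exact cancellation of all Hermite content of $g$ below degree $m=k/2$ (this is the vanishing of $S(a,m)$ for $a<m$ computed in \cref{lem:norm-expression}); oddness of $\psi$ preserves the anti-invariance, so $f=\psi\circ g$ also has no Hermite content below degree $m$; and the hypothesis on $\phi$ serves only to guarantee $f\neq 0$ after this cancellation. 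The exponentially large class is then $\{f(Ax)\}$ over a family of nearly orthogonal two-dimensional projections $A$, and the correlation bound follows from the standard lemma $|\inn{f_A,f_{A'}}|\le\sum_{t\ge m}\|A(A')^{T}\|^{t}\,\|f^{[t]}\|^2\le\|A(A')^{T}\|^{m}\|f\|^2$, with $\|A(A')^{T}\|\le n^{-(1/2-c)}$. So the near-orthogonality lives between the two-dimensional subspaces indexing the class, not between the hidden units of a single network, and the "low-order resonance" you correctly identify as the technical heart is resolved by the rigid rotational anti-symmetry of the construction, not by the stated hypotheses alone. Your final SDA averaging step is fine, but the correlation bound it rests on does not hold for your class.
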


We will be interested in the following special cases. Full details of the construction and proofs of the norm lower bounds are in \cref{app:norm-calculation}.
\begin{corollary}\label{cor:ilias-instantiations}
    For the following instantiations of $\cG$, with accompanying norm lower bound $\beta$ (i.e.\ such that $\norm{g} \geq \beta$ for all $g \in \cG$), there exist $\tau = n^{-\Theta(k)}$ and $\epsilon \geq \tau$ such that learning $\cG$ up to $L^2$ error $\epsilon$ requires at least $2^{n^c}$ queries of tolerance $\tau$, for some $0 < c < 1/2$. \begin{enumerate}[(a)]
        \item ReLU nets: $\psi = \tanh$, $\phi = \relu$. Then $\beta = \Omega(1/k^6)$ (\cref{lem:f-norm-lower-bound-relu}), so we may take $\epsilon = \Theta(1/k^6)$.
        \item Sigmoid nets: $\psi = \tanh$, $\phi = \sigma$. Then $\beta = \exp(-O(\sqrt{k}))$ (\cref{lem:f-norm-lower-bound-sigmoid}), so we may take $\epsilon = \exp(-\Theta(\sqrt{k}))$.
        \item Majority of halfspaces: $\psi = \phi = \sgn$.  Being Boolean functions, here $\beta = 1$ exactly, so we may take $\epsilon = \Theta(1)$.
    \end{enumerate}
\end{corollary}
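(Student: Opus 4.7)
The plan is to combine \cref{thm:ilias-construction} and \cref{thm:sq-lower-bound}. The former yields a class $\cG$ with $\sda(\cG, \gamma) \geq 2^{n^c}$ for $\gamma = n^{\Theta(k(c - 1/2))} = n^{-\Theta(k)}$, since $c < 1/2$. Setting $\tau := \sqrt{\gamma} = n^{-\Theta(k)}$ and $\epsilon := \beta/3$, \cref{thm:sq-lower-bound} gives a lower bound of $\sda(\cG, \tau^2) \geq \sda(\cG, \gamma) \geq 2^{n^c}$ queries of tolerance $\tau$, using monotonicity of $\sda$ in its second argument (a smaller threshold $\gamma$ imposes a stricter condition on subclasses). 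The preconditions $\tau \leq \epsilon^2$ and $\epsilon \leq \beta/3$ together reduce to $\sqrt{\tau} \leq \beta/3$, which holds for all sufficiently large $n$ since $\tau$ decays polynomially in $n^{-1}$ whereas $\beta$ depends only on $k$.

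Next, I would verify the preconditions of \cref{thm:ilias-construction} for each instantiation. Both $\psi = \tanh$ and $\psi = \sgn$ are odd and $[-1,1]$-valued. Each of the inner activations $\relu$, $\sigma$, and $\sgn$ admits nonzero Hermite coefficients at arbitrarily high degree (for $\relu$, at every even degree and at degree $1$; for $\sgn$ and the odd part $\sigma - \tfrac{1}{2}$ of the sigmoid, at every odd degree), so in particular each has a nonzero Hermite coefficient of degree greater than $k/2$ for any fixed $k$.

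The main obstacle is establishing the norm lower bounds $\beta$. Case (c) is trivial, since $g$ is $\{\pm 1\}$-valued and hence $\|g\|_D = 1$. For cases (a) and (b), where $g(x) = \tanh(Z)$ with $Z = \sum_i a_i \phi(\dotp{w_i}{x})$, $|a_i| = 1/k$ and $\|w_i\|_2 = 1$, the strategy is to lower-bound $\|\tanh(Z)\|_D^2 = \Ex[\tanh^2(Z)]$ using an inequality of the form $\tanh^2(u) \geq c \min(u^2, 1)$. This reduces the problem to estimating $\Ex[Z^2]$ and the tail $\Pr[|Z| > 1]$, which can be analyzed via the Hermite expansions of $\phi$ composed with the jointly Gaussian variables $\dotp{w_i}{x}$ (whose pairwise covariances are the inner products $\dotp{w_i}{w_j}$). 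The delicate part is that cross-terms may conspire to make $|Z|$ small in the signed-$a_i$ case, so one must pin down the contribution of the individual diagonal $i = j$ terms. The (loose) bounds $\beta = \Omega(1/k^6)$ for ReLU and $\beta = \exp(-O(\sqrt{k}))$ for sigmoid then follow; the full calculations are carried out in \cref{app:norm-calculation} as \cref{lem:f-norm-lower-bound-relu} and \cref{lem:f-norm-lower-bound-sigmoid}.
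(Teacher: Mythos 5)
Your proposal is correct and follows essentially the same route as the paper: the corollary is obtained by feeding the SDA bound of \cref{thm:ilias-construction} (with $\gamma = n^{-\Theta(k)}$, hence $\tau = n^{-\Theta(k)}$) into \cref{thm:sq-lower-bound}, with $\epsilon$ set to $\beta/3$ and the conditions $\tau \leq \epsilon^2 \leq (\beta/3)^2$ holding for large $n$ because $\beta$ depends only on $k$; the norm lower bounds themselves are exactly the content of \cref{lem:f-norm-lower-bound-relu,lem:f-norm-lower-bound-sigmoid}, which the paper also proves via an anticoncentration bound on the inner sum followed by a $\tanh^2$ lower bound (with a truncation step for the unbounded ReLU case), just as you sketch.
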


\paragraph{Convex Optimization Basics}
Over a general inner product space $\cZ$, a function $\cfunc : \cZ \to \R$ is convex if for all $\alpha \in [0,1]$ and $z, z' \in \cZ$, $\cfunc(\alpha z + (1 - \alpha)z') \leq \alpha \cfunc(z) + (1 - \alpha)\cfunc(z')$. We say that $s \in \cZ$ is a subgradient of $\cfunc$ at $z$ if $\cfunc(z + h) - \cfunc(z) \geq \inn{s, h}$. We say that $\cfunc$ is $\beta$-smoothly convex if for all $z, h \in \cZ$ and any subgradient $s$ of $\cfunc$ at $z$, \[  \cfunc(z + h) - \cfunc(z) - \inn{s, h} \leq \frac{\beta}{2}\norm{h}^2. \] If there is a unique subgradient of $\cfunc$ at $z$, we simply refer to it as the gradient $\nabla \cfunc(z)$. It is easily proven that smoothly convex functions have unique subgradients at all points. Another standard property is the following: for any $z, z' \in \cZ$, \begin{equation}
\cfunc(z) - \cfunc(z') \leq \inn{\nabla \cfunc(z), z - z'} - \frac{1}{2\beta} \norm{\nabla \cfunc(z) - \nabla \cfunc(z')}^2. \label{eq:smoothness-property}
\end{equation}

In this paper we will be concerned with convex optimization using the Frank--Wolfe variant of gradient descent, also known as conditional gradient descent. In order to eventually apply this framework to improper learning, we will consider a slight generalization of the standard setup. Let $\cZ' \subset \cZ$ both be compact, convex subsets of our generic inner product space. Say we have a $\beta$-smoothly convex function $\cfunc : \cZ \to \R$, and we want to solve $\min_{z \in \cZ'} \cfunc(z)$, i.e.\ optimize over the smaller domain, while allowing ourselves the freedom of finding subgradients that lie in the larger $\cZ$. The Frank--Wolfe algorithm in this ``improper'' setting is \cref{alg:fw-generic}.
\begin{algorithm}
	\caption{Frank--Wolfe gradient descent over a generic inner product space}
	\label{alg:fw-generic}
	\begin{algorithmic}
		\State Start with an arbitrary $z_0 \in \cZ$.
		\For{$t = 0, \dots, T$} 
		\State Let $\gamma_t = \frac{2}{t+2}$.
		\State Find $s \in \cZ$ such that $ \inn{s, -\nabla \cfunc(z_t) } \geq \max_{s' \in \cZ'}\inn{s', -\nabla \cfunc(z_t)} - \frac{1}{2} \delta \gamma_t C_\cfunc$. 
		\State Let $z_{t+1} = (1 - \gamma_t) z_t + \gamma_t s$.
		\EndFor
	\end{algorithmic}
\end{algorithm}

The following theorem holds by standard analysis (see e.g.\ \cite{jaggi2013revisiting}). For convenience, we provide a self-contained proof in \cref{app:fw-proof}.

\begin{theorem}\label{thm:fw-guarantee}
	Let $\cZ' \subseteq \cZ$ be convex sets, and let $\cfunc : \cZ \to \R$ be a $\beta$-smoothly convex function. Let $C_\cfunc = \beta \diam(\cZ)^2$.  For every $t$, the iterates of \cref{alg:fw-generic} satisfy \[ \cfunc(z_t) - \min_{z' \in \cZ'} \cfunc(z') \leq \frac{2C_\cfunc}{t + 2}(1 + \delta). \]
\end{theorem}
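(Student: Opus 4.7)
My plan is to follow the standard Frank--Wolfe analysis, with the one twist that the approximate linear minimizer $s$ is drawn from the larger set $\cZ$ while the comparator $z^* := \arg\min_{z' \in \cZ'} \cfunc(z')$ lives in $\cZ'$; the displacement $s - z_t$ still has norm at most $\diam(\cZ)$, which is what gets absorbed into $C_\cfunc$. Let $h_t := \cfunc(z_t) - \cfunc(z^*)$. The goal is to prove $h_t \leq 2 C_\cfunc (1+\delta)/(t+2)$ by induction on $t$.

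First, I would apply the $\beta$-smoothness inequality to the update $z_{t+1} = z_t + \gamma_t(s - z_t)$, obtaining
\[ \cfunc(z_{t+1}) \leq \cfunc(z_t) + \gamma_t \inn{\nabla \cfunc(z_t), s - z_t} + \tfrac{\beta \gamma_t^2}{2} \norm{s - z_t}^2. \]
Since $s, z_t \in \cZ$, I can upper bound $\norm{s-z_t}^2 \leq \diam(\cZ)^2 = C_\cfunc/\beta$. Next I would use the approximate-maximization property of $s$: for any $s' \in \cZ'$,
\[ \inn{\nabla \cfunc(z_t), s - z_t} \leq \inn{\nabla \cfunc(z_t), s' - z_t} + \tfrac{1}{2}\delta \gamma_t C_\cfunc. \]
Taking $s' = z^* \in \cZ'$ and applying convexity gives $\inn{\nabla \cfunc(z_t), z^* - z_t} \leq \cfunc(z^*) - \cfunc(z_t) = -h_t$. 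Putting the pieces together yields the recurrence
\[ h_{t+1} \leq (1 - \gamma_t) h_t + \tfrac{1}{2} \gamma_t^2 C_\cfunc (1 + \delta). \]

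With $\gamma_t = 2/(t+2)$, the inductive step is routine: assuming $h_t \leq 2 C_\cfunc (1+\delta)/(t+2)$, substituting gives
\[ h_{t+1} \leq \tfrac{t}{t+2} \cdot \tfrac{2 C_\cfunc (1+\delta)}{t+2} + \tfrac{2 C_\cfunc (1+\delta)}{(t+2)^2} = \tfrac{2 C_\cfunc (1+\delta)(t+1)}{(t+2)^2}, \]
which is at most $2 C_\cfunc (1+\delta)/(t+3)$ because $(t+1)(t+3) \leq (t+2)^2$. For the base case $t=0$, smoothness applied with $h=z_1-z_0$ together with the approximate maximization bound gives $h_1 \leq \tfrac{1}{2} C_\cfunc (1+\delta) \cdot \gamma_0^2 + (1-\gamma_0) h_0 \leq C_\cfunc(1+\delta)/2$ after noting $\gamma_0 = 1$, so the bound $h_1 \leq 2C_\cfunc(1+\delta)/3$ holds, and a similar (or even looser) bound handles $h_0$ via the diameter.

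I do not anticipate any real obstacle here; the only mildly delicate step is recognizing that the diameter bound $\norm{s - z_t}^2 \leq \diam(\cZ)^2$ must use $\cZ$ rather than $\cZ'$ (since $s$ need not lie in $\cZ'$), which is precisely why the definition $C_\cfunc = \beta \diam(\cZ)^2$ is phrased in terms of the ambient set. The rest is the textbook telescoping induction using $\gamma_t = 2/(t+2)$.
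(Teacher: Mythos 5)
Your proposal is correct and follows essentially the same route as the paper's proof (which itself closely follows Jaggi's standard analysis): smoothness applied to the step $z_{t+1}=z_t+\gamma_t(s-z_t)$ with $\norm{s-z_t}^2\leq\diam(\cZ)^2$, the approximate linear-minimization guarantee compared against $z^*\in\cZ'$, convexity to get the $-\gamma_t h_t$ term, and the induction with $\gamma_t=2/(t+2)$; the paper merely packages the convexity step through a ``duality gap'' function $q(z)=\max_{s\in\cZ'}\inn{z-s,\nabla\cfunc(z)}$ and weak duality, arriving at the identical recurrence $h_{t+1}\leq(1-\gamma_t)h_t+\tfrac{\gamma_t^2}{2}C_\cfunc(1+\delta)$. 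The only loose end is your remark that $h_0$ is handled ``via the diameter'' (the gradient term at $z^*$ need not vanish toward points of $\cZ\setminus\cZ'$), but since $\gamma_0=1$ kills the dependence on $h_0$ at the first step, the bound for all $t\geq 1$ is unaffected, and the paper is no more careful on this point.
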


\section{Functional gradient descent}\label{sec:func-gd}
Let $\ell : \R \times \R \to \R$ be a loss function. Given a $p$-concept $f^*$ and its corresponding labeled distribution $D_{f^*}$, the population loss of a function $f : \R^n \to \R$ is given by $L(f) = \Ex_{(x, y) \sim D_{f^*}}[\ell(f(x), y)]$. We will view $L$ as a mapping from $L^2(\R^n, D)$ to $\R$, and refer to it as the loss functional. The general idea of functional gradient descent is to try to find an $f$ in a class of functions $\cF$ that minimizes $L(f)$ by performing gradient descent in function space. When using Frank--Wolfe gradient descent, the key step in every iteration is to find the vector that has the greatest projection along the negative gradient, which amounts to solving a linear optimization problem over the domain. When $\cF$ is the convex hull $\conv(\cH)$ of a simpler class $\cH$, this can be done using a sufficiently powerful agnostic learning primitive for $\cH$. Thus we can ``boost'' such a primitive in a black-box manner to minimize $L(f)$.

Let $\cH \subset L^2(\R^n, D)$ be a base hypothesis class for which we have an agnostic learner with the following guarantee:

\begin{assumption}\label{as:base-learner}
	There is an SQ learner for $\cH$ with the following guarantee. Let $\cD$ be any labeled distribution on $\R^n \times \R$ such that the marginal on $\R^n$ is $D = \cN(0, I_n)$. Given only SQ access to $\cD$, the learner outputs a function $f \in \cB(\diam(\cH)/2)$ such that \[ \Ex_{(x, y) \sim \cD}[f(x) y] \geq \max_{h \in \cH} \Ex_{(x, y) \sim \cD}[h(x) y] - \epsilon \] using $q(n, \epsilon, \tau)$ queries of tolerance $\tau$.
	
	Notice that we do \emph{not} require $f$ to lie in $\cH$, i.e.\ the learner is allowed to be improper, but we do require it to have norm at most $\diam(\cH)/2$. This is to make the competitive guarantee against $\cH$ meaningful, since otherwise the correlation can be made to scale arbitrarily with the norm.
\end{assumption}

With such an $\cH$ in place, we define $\cF = \conv(\cH)$. We assume that $f^* \in \cF$. Our objective will be to agnostically learn $\cF$: to solve $\min_{f \in \cF} L(f)$ in such a way that $L(f) - L(f^*) \leq \epsilon$.

To be able to use Frank--Wolfe, we require some assumptions on the loss function $\ell$.
\begin{assumption}\label{as:convex-loss}
	The loss function $\ell : \R \times \R \to \R$ is $\beta$-smoothly convex in its first parameter.
\end{assumption}


From this assumption, orresponding properties of the loss functional $L$ now follow. First we establish the subgradient, which will itself be an element of $L^2(\R^n, D)$, i.e.\ a function from $\R^n$ to $\R$. Let $f, h : \R^n \to \R$. Observe that at for every $x \in \R^n, y \in \R$, the subgradient property of $\ell$ tells us that \[ \ell(f(x) + h(x), y) - \ell(f(x), y) \geq \partial_1 \ell(f(x), y) h(x). \] Taking expectations over $(x, y) \sim D_{f^*}$, this yields \begin{align*}
L(f + h) - L(f) &\geq \Ex_{(x, y) \sim D_{f^*}}[\partial_1 \ell(f(x), y) h(x)] \\
&= \Ex_{x \sim D}[\Ex_{y | x}[\partial_1 \ell(f(x), y)] h(x)] \\
&= \inn{s, h},
\end{align*} where \[ s : x \mapsto \Ex_{y | x}[\partial_1 \ell(f(x), y)] = \Ex_{y \sim f^*(x)}[\partial_1 \ell(f(x), y)] \] is thus a subgradient of $L$ at $f$. $\beta$-smooth convexity is also easily established. Taking expectations over $(x, y) \sim D_{f^*}$ of the inequality \[ \ell(f(x) + h(x), y) - \ell(f(x), y) - \partial_1 \ell(f(x), y) h(x) \leq \frac{\beta}{2}h(x)^2, \] we get \[
L(f+h) - L(f) - \inn{s, h} \leq \frac{\beta}{2}\|h\|^2
\] for the same subgradient $s$. By smooth convexity, this subgradient is unique and so we can say that the gradient of $L$ at $f$ is given by $\nabla L(f) : x \mapsto \Ex_{y \sim f^*(x)}[\partial_1 \ell(f(x), y)]$.

\begin{example}\label{ex:sq-loss}
	The canonical example is the squared loss functional, with $\ell_\sq(a, b) = (a - b)^2$, which is 2-smoothly convex. Here the gradient has a very simple form, since $\partial_1 \ell_\sq(a, b) = 2(a - b)$, and so \[ \Ex_{y \sim f^*(x)}[\partial_1 \ell_\sq(f(x), y)] = \Ex_{y \sim f^*(x)}[2(f(x) - y)] = 2(f(x) - f^*(x)), \] i.e.\ $\nabla L_\sq(f) = 2(f - f^*)$. In fact, it is easily calculated that \begin{align*} L_\sq(f) = \Ex_{(x, y) \sim D_{f^*}}[(f(x) - y)^2] &= \Ex_{(x, y) \sim D_{f^*}}[f(x)^2] - 2 \Ex_{(x, y) \sim D_{f^*}}[f(x) y] + \Ex_{(x, y) \sim D_{f^*}}[y^2] \\
	&= \Ex_{x \sim D}[f(x)^2] - 2\Ex_{x \sim D}[f(x) \Ex[y | x]] + \Ex_{(x, y) \sim D_{f^*}}[y^2]  \\
	&= \|f\|^2 - 2 \inn{f, f^*} + 1,
	\end{align*}
	It is also useful to note that \begin{equation}
	L_\sq(f) - L_\sq(f^*) = \norm{f - f^*}^2. \label{eq:sqloss-diff}
	\end{equation}
\end{example}

\paragraph{Frank--Wolfe using statistical queries}\label{par:fw-sq}
We see that our loss functional is a $\beta$-smoothly convex functional on the space $L^2(\R^n, D)$. We can now use Frank--Wolfe if we can solve its main subproblem: finding an approximate solution to $\max_{h \in \cF} \inn{h, - \nabla L(f)}$, where $f$ is the current hypothesis during some iteration. Since this is a linear optimization objective and $\cF = \conv(\cH)$, this is the same as solving $\max_{h \in \cH} \inn{h, - \nabla L(f)}$. This is almost the guarantee that \cref{as:base-learner} gives us, but some care is in order. What we have SQ access to is the labeled distribution $D_{f^*}$ on $\R^n \times \{\pm 1\}$. It is not clear that we can rewrite the optimization objective in such a way that \begin{equation}\label{eq:fw-subproblem} \max_{h \in \cH} \Ex_{x \sim D}[-h(x) \nabla L(f)(x)] = \max_{h \in \cH} \Ex_{(x, y') \sim \cD} [h(x) y'] \end{equation} for some distribution $\cD$ on $\R^n \times \R$ \emph{that we can simulate SQ access to}. Naively, we might try to do this by letting $\cD$ be the distribution of $(x, -\nabla L(f)(x))$ for $x \sim D$, so that a query $\phi : \R \times \R \to \R$ to $\cD$ can be answered with $ \Ex_{(x, y') \sim \cD}[\phi(x, y')] = \Ex_{x \sim D}[\phi(x, -\nabla L(f)(x))]$. But the issue is that in general $\nabla L(f)(x)$ will depend on $f^*(x)$, which we do not know --- all we have access to is $D_{f^*}$.

It turns out that for the loss functions we are interested in, we can indeed find a suitable such $\cD$. We turn to the details now.

\section{Functional gradient descent guarantees on surrogate loss}\label{sec:func-gd-surr}
The functional GD approach applied directly to squared loss would allow us to learn $\cF = \conv(\cH)$ using a learner for $\cH$ (that satisfied \cref{as:base-learner}). But by considering a certain surrogate loss, we can use the same learner to actually learn $\psi \circ \cF = \{\psi \circ f \mid f \in \cF\}$ for an outer activation function $\psi$. This is particularly useful as we can now capture $p$-concepts corresponding to functions in $\cF$ by using a suitable $\psi : \R \to [-1, 1]$. For example, the common softmax activation corresponds to taking $\psi = \tanh$.

Assume that $\Ex[y|x] = \psi(f^*(x))$ for some activation $\psi: \R \to \R$ which is non-decreasing and $\lambda$-Lipschitz. Instead of the squared loss, we will consider the following surrogate loss:
\[ \lsur(a, b) = \int_0^a (\psi(u) - b) du. \]
It is not hard to see that $\lsur(a, b)$ is convex in its first parameter due to the non-decreasing property of $\psi$, and that $\partial_1 \lsur(a, b) = \psi(a) - b$. In fact it is $\lambda$-smoothly convex:
\begin{align*}
&\lsur(a + t , b) - \lsur(a , b) - \partial_1 \lsur(a, b)t\\
&= \int_0^{a + t} (\psi(u) - b) du - \int_0^a (\psi(u) - b) du - (\psi(a) - b)t\\
&= \int_a^{a + t} (\psi(u) - b) du - (\psi(a) - b)t\\
&= \int_a^{a + t} (\psi(u) - \psi(a)) du\\
&\le \int_a^{a + t} \lambda (u - a) du \\
&= \frac{\lambda t^2}{2}.
\end{align*}

The gradient of the surrogate loss functional, $\Lsur(f) = \Ex_{(x, y) \sim D_{\psi \circ f^*}}[\lsur(f(x), y)]$, is given by \[ \nabla \Lsur(f) : x \mapsto \Ex_{y \sim \psi(f^*(x))} [\partial_1 \lsur(f(x), y)] = \psi(f(x)) - \psi(f^*(x)), \] i.e.\ $\nabla \Lsur(f) = \psi \circ f - \psi \circ f^*$.

We still need to show that the Frank--Wolfe subproblem can be solved using access to just $D_{\psi \circ f^*}$. Observe that \begin{align*}
\Ex_{x \sim D}[-h(x) \nabla \Lsur(f)(x)] &= \Ex_{x \sim D}\left[h(x)(\psi(f^*(x)) - \psi(f(x)))\right] \\
&= \Ex_{x \sim D}\left[h(x)\left(\Ex_{y \sim \psi(f^*(x))}[y] - \psi(f(x))\right)\right] \\
&= \Ex_{(x, y) \sim D_{\psi \circ f^*}}[h(x)(y - \psi(f(x)))] \\
&= \Ex_{(x, y') \sim \cD} [h(x) y'],
\end{align*} where $\cD$ is the distribution of $(x, y - \psi(f(x)))$ for $(x, y) \sim D_{\psi \circ f^*}$. We can easily simulate SQ access to this using $D_{\psi \circ f^*}$: if $\phi$ is any query to $\cD$, then \begin{equation}\label{eq:rewriting-fw} \Ex_{(x, y') \sim \cD}[\phi(x, y')] = \Ex_{(x, y) \sim D_{\psi\circ f^*}}[\phi(x, y - \psi(f(x)))] = \Ex_{(x, y) \sim D_{\psi \circ f^*}} [\phi'(x, y)] \end{equation} for the modified query $\phi'(x, y) = \phi(x, y - \psi(f(x)))$. This means we can rewrite the optimization objective to fit the form in \cref{eq:fw-subproblem}. Thus for our surrogate loss, \cref{as:base-learner} allows us to solve the Frank--Wolfe subproblem, giving us \cref{alg:fw-learner-surloss} for learning $\cF$.

\begin{algorithm}
	\caption{Frank--Wolfe for solving $\min_{f \in \cF} \Lsur(f)$}
	\label{alg:fw-learner-surloss}
	\begin{algorithmic}
		\State Start with an arbitrary $f_0 \in \cB(\diam(\cH)/2)$.
		\For{$t = 0, \dots, T$}
		\State Let $\gamma_t$ be $\frac{2}{t + 2}$.
		\State Let $\cD_t$ be the distribution of $(x, y - \psi(f_t(x)))$ for $(x, y) \sim D_{\psi \circ f^*}$.
		\State Using \cref{as:base-learner}, find $h \in \cB(\diam(\cH)/2)$ such that \[ \Ex_{(x, y') \sim \cD_t}[h(x)y'] \geq \max_{h' \in \cH} \Ex_{(x, y') \sim \cD_t} [h'(x) y'] - \frac{1}{2}\gamma_t \lambda \diam(\cH)^2 \]
		\State Let $f_{t+1} = (1 - \gamma_t)f_t + \gamma_t h$.
		\EndFor
	\end{algorithmic}
\end{algorithm}

\begin{theorem}\label{thm:boosting-surloss}
	Let $\cH$ be a class for which \cref{as:base-learner} holds, and let $\cF = \conv(\cH)$. Given SQ access to $D_{\psi \circ f^*}$ for a known non-decreasing $\lambda$-Lipschitz activation $\psi$ and an unknown $f^* \in \cF$, suppose we wish to learn $\psi \circ f^*$ in terms of surrogate loss, i.e.\ to minimize $\Lsur(f)$. Then after $T$ iterations of \cref{alg:fw-learner-surloss}, we have the following guarantee: \[ \Lsur(f_T) - \Lsur(f^*) \leq \frac{4\lambda \diam(\cH)^2}{T + 2}. \] In particular, we can achieve $\Lsur(f_T) - \Lsur(f^*) \leq \epsilon$ after $T = O(\frac{\lambda \diam(\cH)^2}{\epsilon})$ iterations. Assuming our queries are of tolerance $\tau$, the total number of queries used is at most $T q(n, \epsilon/4, \tau) = O(\frac{\lambda \diam(\cH)^2}{\epsilon} q(n, \epsilon/4, \tau))$.
\end{theorem}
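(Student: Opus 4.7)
The plan is to recognize \cref{alg:fw-learner-surloss} as an instance of the generic Frank--Wolfe procedure applied to the surrogate loss functional $\Lsur$, and then invoke \cref{thm:fw-guarantee}. The $\lambda$-smooth convexity of $\Lsur$ was established at the start of \cref{sec:func-gd-surr}. I take $\cZ = \cB(\diam(\cH)/2) \subset L^2(\R^n, D)$, so that $\diam(\cZ) = \diam(\cH)$ and hence $C_\cfunc = \lambda \diam(\cH)^2$. An easy induction shows the iterates $f_t$ remain in $\cZ$: $f_0 \in \cZ$ by construction, each $h$ returned by the base learner lies in $\cZ$ by \cref{as:base-learner}, and $\cZ$ is convex.

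Next, I identify the Frank--Wolfe inner subproblem with the base learner's task. The Frank--Wolfe step calls for $h \in \cZ$ with $\inn{h, -\nabla \Lsur(f_t)} \geq \max_{h' \in \cF} \inn{h', -\nabla \Lsur(f_t)} - \tfrac{1}{2} \gamma_t \lambda \diam(\cH)^2$, i.e., with $\delta = 1$ in \cref{thm:fw-guarantee}. By linearity of the inner product, $\max_{h' \in \cF} \inn{h', \cdot} = \max_{h' \in \cH} \inn{h', \cdot}$, and by the rewriting in \cref{eq:rewriting-fw} this quantity equals $\max_{h' \in \cH} \Ex_{(x, y') \sim \cD_t}[h'(x) y']$, which is exactly the agnostic objective that \cref{as:base-learner} is designed to approximate. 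Thus a single call to the base learner at accuracy $\epsilon_t = \tfrac{1}{2} \gamma_t \lambda \diam(\cH)^2$ supplies a valid inexact linear minimizer at each iteration.

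Plugging everything into \cref{thm:fw-guarantee} yields $\Lsur(f_T) - \min_{f' \in \cF} \Lsur(f') \leq \tfrac{4 \lambda \diam(\cH)^2}{T + 2}$. Since $f^* \in \cF$, we have $\min_{f' \in \cF} \Lsur(f') \leq \Lsur(f^*)$, so $\Lsur(f_T) - \Lsur(f^*) \leq \tfrac{4 \lambda \diam(\cH)^2}{T + 2}$ as claimed. Choosing $T = O(\lambda \diam(\cH)^2 / \epsilon)$ makes the right-hand side at most $\epsilon$. For the query count, the tightest accuracy demand occurs at the final iteration, where $\epsilon_T = \tfrac{1}{2}\gamma_T \lambda \diam(\cH)^2 = \lambda \diam(\cH)^2/(T+2) = \epsilon/4$. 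Since $q(n, \cdot, \tau)$ is non-increasing in its second argument, each iteration uses at most $q(n, \epsilon/4, \tau)$ queries, giving total $O\bigl(\tfrac{\lambda \diam(\cH)^2}{\epsilon} q(n, \epsilon/4, \tau)\bigr)$.

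The main subtlety I expect to handle carefully is the implicit requirement $\cZ' \subseteq \cZ$ in the generic setup, since here the natural choice $\cZ' = \cF$ is not in general contained in $\cB(\diam(\cH)/2)$. Inspecting the standard Frank--Wolfe analysis, however, the only ways this containment is actually used are (i) the bound $\|s_t - f_t\| \leq \diam(\cZ) = \diam(\cH)$, which holds because both $s_t$ and $f_t$ lie in $\cZ$ by construction, and (ii) the approximation-plus-convexity inequality chained through $\inn{s_t, -\nabla \Lsur(f_t)} \geq \inn{f^*, -\nabla \Lsur(f_t)} - \epsilon_t$, which uses only $f^* \in \cF$. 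Neither step requires $\cF \subseteq \cZ$, so the convergence rate carries over unchanged.
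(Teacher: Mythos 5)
Your proof is correct and follows essentially the same route as the paper's: both identify \cref{alg:fw-learner-surloss} as an instance of \cref{alg:fw-generic} with $\cZ = \cB(\diam(\cH)/2)$, $\cZ' = \cF$, $\delta = 1$, and $C_\cfunc = \lambda\diam(\cH)^2$, invoke \cref{thm:fw-guarantee}, and bound the query count by noting the per-iteration accuracy demand $\tfrac{1}{2}\gamma_t\lambda\diam(\cH)^2$ is weakest early and equals $\epsilon/4$ only at the final round. Your closing remark about the containment $\cZ' \subseteq \cZ$ is a legitimate point the paper glosses over, and your resolution (the FW analysis only needs the iterates and the returned $s$ to lie in $\cZ$, plus convexity of $\Lsur$ on all of $L^2$ and $f^* \in \cZ'$) is sound.
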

\begin{proof}
	By the preceding discussion, the surrogate loss functional is $\lambda$-smoothly convex, and \cref{alg:fw-learner-surloss} is a valid special case of \cref{alg:fw-generic}, with $\cZ = \cB(\diam(\cH)/2)$ and $\cZ' = \conv(\cF)$. Thus the guarantee follows directly from \cref{thm:fw-guarantee} (setting $\delta = 1$).
	
	To bound the number of queries, observe that it is sufficient to run for $T = \frac{4\lambda \diam(\cH)^2}{\epsilon} - 2$ rounds. In the $t\th$ iteration, we invoke \cref{as:base-learner} with \[ \epsilon' = \frac{1}{2}\gamma_t \lambda \diam(\cH)^2 = \frac{\lambda \diam(\cH)^2}{t + 2} \geq \frac{\lambda \diam(\cH)^2}{T + 2} = \frac{\epsilon}{4}. \] Since $q(n, \epsilon', \tau) \leq q(n, \epsilon/4, \tau)$, the bound follows.
\end{proof}

Lastly, we can show that minimizing surrogate loss also minimizes the squared loss. Observe first that $\nabla \Lsur(f^*) = 0$. Thus, applying \cref{eq:smoothness-property} with $z = f^*$ and $z' = f$, we obtain \begin{align}
\Lsur(f) - \Lsur(f^*) &\geq \frac{1}{2\lambda}\norm{\nabla \Lsur(f) - \nabla \Lsur(f^*)}^2 \nonumber \\
&= \frac{1}{2\lambda}\norm{\psi \circ f - \psi \circ f^*}^2 \label{eq:surr-l2dist-squared} \\
&= \frac{1}{2\lambda}(L_\sq(\psi \circ f) - L_\sq(\psi \circ f^*)), \nonumber
\end{align} where $L_\sq$ is squared loss w.r.t. $D_{\psi \circ f^*}$ and the last equality is \cref{eq:sqloss-diff}. In particular, \cref{eq:surr-l2dist-squared} implies that $\psi \circ f$ achieves the following $L^2$ error with respect to $\psi \circ f^*$: \begin{equation}\label{eq:surr-l2dist} \norm{\psi \circ f - \psi \circ f^*} \leq \sqrt{2\lambda \left(L_\sur(f) - L_\sur(f^*)\right)}. \end{equation}

\section{Lower bounds on learning ReLUs, sigmoids, and halfspaces}\label{sec:lower-bounds}
The machinery so far has shown that if we could agnostically learn a single unit (e.g.\ a ReLU or a sigmoid), we could learn depth-two neural networks composed of such units. Since we have lower bounds on the latter problem, this yields the following lower bounds on the former.

\begin{theorem}
	Let $\cH_{\relu} = \{x \mapsto \pm \relu(\dotp{w}{x}) \mid \|w\|_2 \leq 1 \}$ be the class of ReLUs on $\R^n$ with unit weight vectors.\footnote{We use $\pm \relu$ for simplicity. Any learner can handle this by doing a bit flip on its own.} Suppose that \cref{as:base-learner} holds for $\cH_{\relu}$. Then for any $\epsilon$, there exists $\tau = n^{-\Theta(\epsilon^{-1/12})}$ such that $q(n, \epsilon, \tau) \geq 2^{n^c} \epsilon$ for some $0 < c < 1/2$.
\end{theorem}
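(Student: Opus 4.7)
The plan is to reduce the problem of learning the hard class $\cG$ of depth-two ReLU networks guaranteed by \cref{cor:ilias-instantiations}(a) to agnostic learning of a single ReLU, via the functional gradient descent procedure of \cref{thm:boosting-surloss}. If the single-ReLU learner were too efficient, we could use the reduction to beat the SQ lower bound on $\cG$.

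First I would set up the invocation of \cref{thm:boosting-surloss} with $\cH = \cH_{\relu}$, $\psi = \tanh$, and target class $\cF = \conv(\cH_{\relu})$. The relevant constants are $\lambda = 1$ (since $\tanh$ is $1$-Lipschitz and non-decreasing) and $\diam(\cH_{\relu}) = O(1)$, which follows from the easy computation $\norm{\relu(\dotp{w}{\cdot})}_D^2 = \Ex_{z \sim \cN(0,1)}[\relu(z)^2] = 1/2$ for any unit $w$. I would also verify that the functions of $\cG$ have the form required by the reduction: each $g \in \cG$ equals $\tanh \circ f^{*}$ where $f^{*}(x) = \sum_{i=1}^{k} a_i \relu(\dotp{w_i}{x})$ with $|a_i| = 1/k$ and $\|w_i\|_2 = 1$. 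Since $\cH_{\relu}$ contains $\pm \relu$, we can write $f^{*} = \frac{1}{k}\sum_i h_i$ with each $h_i = \sgn(a_i)\relu(\dotp{w_i}{\cdot}) \in \cH_{\relu}$, a uniform convex combination, so $f^{*} \in \cF$.

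Next I would translate between surrogate loss and the $L^2$ error used in \cref{thm:sq-lower-bound}. By \cref{cor:ilias-instantiations}(a) we need $L^2$ error $\epsilon_{L^2} = \Theta(1/k^6)$; by \cref{eq:surr-l2dist} it suffices to drive the surrogate loss gap down to $\epsilon_{\sur} = \epsilon_{L^2}^2 / (2\lambda) = \Theta(1/k^{12})$. Applying \cref{thm:boosting-surloss} with this target, the number of iterations is $T = O(\lambda \diam(\cH_{\relu})^2 / \epsilon_{\sur}) = O(k^{12})$, and each iteration invokes the assumed base learner with accuracy $\epsilon_{\sur}/4 = \Theta(1/k^{12})$ and tolerance $\tau$, for a grand total of $T \cdot q(n, \Theta(1/k^{12}), \tau)$ statistical queries.

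Now I would close the argument by combining with the SQ lower bound of \cref{cor:ilias-instantiations}(a), which asserts that learning $\cG$ up to $L^2$ error $\Theta(1/k^6)$ with queries of tolerance $\tau = n^{-\Theta(k)}$ requires at least $2^{n^c}$ queries for a universal $c \in (0, 1/2)$. Hence
\[
T \cdot q(n, \Theta(1/k^{12}), n^{-\Theta(k)}) \;\geq\; 2^{n^c},
\]
so $q(n, \Theta(1/k^{12}), n^{-\Theta(k)}) \geq \Omega(2^{n^c}/k^{12})$. Re-parametrizing by setting $\epsilon = \Theta(1/k^{12})$, we get $k = \Theta(\epsilon^{-1/12})$, tolerance $\tau = n^{-\Theta(k)} = n^{-\Theta(\epsilon^{-1/12})}$, and the query lower bound becomes $q(n, \epsilon, \tau) \geq \Omega(2^{n^c} \epsilon)$, as required.

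The only nontrivial bookkeeping is the conversion between surrogate loss and $L^2$ loss (which determines the exponent $1/12$ in the tolerance rather than $1/6$) and making sure the assumption $\tau \leq \epsilon^2$ of \cref{thm:sq-lower-bound} is compatible with our final choice of $\tau$. The main obstacle in terms of insight is already encapsulated in \cref{thm:boosting-surloss}; everything here is a careful instantiation, and the ReLU-specific norm lower bound $\beta = \Omega(1/k^6)$ from \cref{cor:ilias-instantiations}(a) is what drives the particular $\epsilon^{-1/12}$ appearing in the tolerance.
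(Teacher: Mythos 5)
Your proposal is correct and follows essentially the same route as the paper's proof: instantiate the hard class $\cG$ with $\psi = \tanh$, $\phi = \relu$, apply \cref{thm:boosting-surloss} with the surrogate loss, convert to $L^2$ error via \cref{eq:surr-l2dist}, and match against \cref{cor:ilias-instantiations}(a) with $k = \Theta(\epsilon^{-1/12})$. The extra details you supply (verifying $f^* \in \conv(\cH_{\relu})$, computing $\diam(\cH_{\relu})$, and the explicit surrogate-to-$L^2$ conversion that produces the exponent $1/12$) are exactly the bookkeeping the paper leaves implicit.
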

\begin{proof}
Since all our lower bound proofs are similar, to set a template we lay out all the steps as clearly as possible.
    \begin{itemize}
        \item Consider the class $\cG$ from \cref{thm:ilias-construction} instantiated with $\psi = \tanh$ (which is $1$-Lipschitz, so $\lambda = 1$) and $\phi = \relu$. By the conditions on the weights, we see that $\cG \subseteq \tanh \circ \cF_{\relu}$, where $\cF_{\relu} = \conv(\cH_{\relu})$. This construction has a free parameter $k$, which we will set based on $\epsilon$.
        \item By our main reduction (\cref{as:base-learner} and \cref{thm:boosting-surloss}), we can learn $\tanh \circ \cF_{\relu}$ with respect to $\Lsur$ up to agnostic error $\epsilon$ using $O(\frac{1}{\epsilon} q(n, \frac{\epsilon}{4}, \tau))$ queries of tolerance $\tau$. By \cref{eq:surr-l2dist}, this implies learning $\cG$ up to $L^2$ error $\sqrt{2\epsilon}$.
        \item We know that learning $\cG$ should be hard. Specifically, \cref{cor:ilias-instantiations}(a) states that if $\epsilon' = \Theta(1/k^6)$ and the queries are of tolerance $\tau = n^{-\Theta(k)}$, then learning up to $L^2$ error $\epsilon'$ should require $2^{n^c}$ queries.
        \item The loss our reduction achieves is $\epsilon' = \sqrt{2\epsilon}$, so we require $\sqrt{2\epsilon} \leq \Theta(1/k^6)$ for the bound to hold. Accordingly, we pick $k = \Theta(\epsilon^{-1/12})$, so that $\tau = n^{-\Theta(k)} = n^{-\Theta(\epsilon^{-1/12})}$.
        \item Thus we must have $\frac{1}{\epsilon} q(n, \frac{\epsilon}{4}, \tau) \geq 2^{n^c}$. Rearranging and rescaling $\epsilon$ gives the result.
    \end{itemize}
\end{proof}

\begin{theorem}
	Let $\cH_{\sigma} = \{x \mapsto \pm \sigma(\dotp{w}{x}) \mid \|w\|_2 \leq 1 \}$, where $\sigma$ is the standard sigmoid, be the class of sigmoid units on $\R^n$ with unit weight vectors. Suppose that \cref{as:base-learner} holds for $\cH_{\sigma}$. Then for any $\epsilon$, there exists $\tau = n^{-\Theta((\log 1/\epsilon)^2)}$ such that $q(n, \epsilon, \tau) \geq 2^{n^c} \epsilon$ for some $0 < c < 1/2$.
\end{theorem}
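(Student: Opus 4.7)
The plan is to mirror the proof template from the ReLU theorem almost verbatim, with the only real change being the parameter trade-off dictated by the weaker norm lower bound for the sigmoid instantiation of the hard class $\cG$ from \cref{thm:ilias-construction}.

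First I would instantiate \cref{thm:ilias-construction} with outer activation $\psi = \tanh$ (which is $1$-Lipschitz, so $\lambda = 1$) and inner activation $\phi = \sigma$, producing a class $\cG$ of depth-two networks satisfying $\cG \subseteq \tanh \circ \cF_\sigma$, where $\cF_\sigma = \conv(\cH_\sigma)$. The number of hidden units $k$ is left as a free parameter that will be tuned at the end in terms of $\epsilon$. Next, invoking \cref{as:base-learner} (assumed to hold for $\cH_\sigma$) together with our main reduction \cref{thm:boosting-surloss} yields a learner for $\tanh \circ \cF_\sigma$ that drives the surrogate loss $\Lsur$ to within $\epsilon$ of optimal using $O\!\bigl(\tfrac{1}{\epsilon}\, q(n, \tfrac{\epsilon}{4}, \tau)\bigr)$ queries of tolerance $\tau$. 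By \cref{eq:surr-l2dist}, this automatically gives an $L^2$-learner for $\cG$ with error at most $\sqrt{2\epsilon}$.

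Now I would apply the hardness side. \cref{cor:ilias-instantiations}(b) tells us that for this sigmoid instantiation there is some $\tau = n^{-\Theta(k)}$ and some $\epsilon' = \exp(-\Theta(\sqrt{k}))$ such that $L^2$-learning $\cG$ up to error $\epsilon'$ requires at least $2^{n^c}$ queries of tolerance $\tau$, for some universal constant $0 < c < 1/2$. Matching the two sides requires $\sqrt{2\epsilon} \leq \exp(-\Theta(\sqrt{k}))$, i.e.\ $\sqrt{k} = \Theta(\log(1/\epsilon))$, which forces $k = \Theta\!\bigl((\log 1/\epsilon)^2\bigr)$ and hence $\tau = n^{-\Theta(k)} = n^{-\Theta((\log 1/\epsilon)^2)}$. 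Combining both sides gives $\tfrac{1}{\epsilon}\, q(n, \tfrac{\epsilon}{4}, \tau) \geq 2^{n^c}$, and a constant rescaling of $\epsilon$ yields the claimed bound $q(n, \epsilon, \tau) \geq 2^{n^c} \epsilon$.

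The proof is essentially a parameter-calibration exercise on top of the ReLU argument, so the only really new content is tracking the constant in the exponent. The main obstacle, and the reason the tolerance here is poly-logarithmically rather than polynomially tight in $1/\epsilon$, is that \cref{lem:f-norm-lower-bound-sigmoid} only gives $\|g\| \geq \exp(-O(\sqrt{k}))$ for $g \in \cG$, whereas the ReLU case enjoys $\|g\| \geq 1/\poly(k)$. The exponentially small norm lower bound directly caps the $L^2$ error at which the SDA-based hardness from \cref{thm:sq-lower-bound} can be invoked, which in turn caps how small $\epsilon$ can be taken relative to $k$, and thereby yields the $\tau = n^{-\Theta((\log 1/\epsilon)^2)}$ scaling rather than $n^{-\Theta(\epsilon^{-1/12})}$. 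All remaining steps (smooth convexity of $\lsur$ for $\psi = \tanh$, the SQ simulation of the Frank--Wolfe subproblem, and the translation from surrogate loss to $L^2$ distance) carry over without modification.
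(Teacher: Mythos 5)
Your proposal is correct and follows the paper's proof exactly: the paper likewise instantiates $\cG$ with $\psi = \tanh$, $\phi = \sigma$, notes $\cG \subseteq \tanh \circ \conv(\cH_\sigma)$ with $\diam(\cH_\sigma) \leq 2$, and uses \cref{cor:ilias-instantiations}(b) to require $\sqrt{2\epsilon} \leq e^{-\Theta(\sqrt{k})}$, yielding $k = \Theta((\log 1/\epsilon)^2)$ and $\tau = n^{-\Theta((\log 1/\epsilon)^2)}$. Your closing remarks on why the weaker norm bound for the sigmoid changes the tolerance scaling are also accurate.
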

\begin{proof}
	Very similar to the above. We instantiate $\cG$ with $\psi = \tanh$, $\phi = \sigma$, and observe that $\cG \subseteq \tanh \circ \conv(\cH_\sigma)$ and that $\diam(\cH_\sigma) \leq 2$. In this case, \cref{cor:ilias-instantiations}(b) tells us that we require $\sqrt{2\epsilon} \leq e^{-\Theta(\sqrt{k})}$ for the lower bound to hold, so we pick $k = (\log 1/\epsilon)^2$. The result now follows exactly as before.
\end{proof}

We also obtain a lower bound on the class of halfspaces. The traditional way of phrasing agnostic learning for Boolean functions is in terms of the 0-1 loss, and it is not immediately obvious that the correlation loss guarantee of \cref{as:base-learner} is equivalent. But in \cref{app:boolean-loss}, we show that with a little care, they are indeed effectively equivalent. Note that for Boolean functions, functional GD is not essential; existing distribution-specific boosting methods \cite{kanade2009potential, feldman2009distribution} can also give us similar results here. 
\begin{theorem}
    Let $\cH_{\hs} = \{x \mapsto \sgn(\dotp{w}{x}) \mid \|w\|_2 \leq 1 \}$ be the class of halfspaces on $\R^n$ with unit weight vectors. Suppose that \cref{as:base-learner} holds for $\cH_{\hs}$. Then for any $\epsilon$, there exists $\tau = n^{-\Theta(1/\epsilon)}$ such that $q(n, \epsilon, \tau) \geq 2^{n^c} \epsilon^3$ for some $0 < c < 1/2$.
\end{theorem}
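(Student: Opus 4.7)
The plan is to follow the same template as the ReLU and sigmoid proofs. I would invoke \cref{cor:ilias-instantiations}(c) (i.e.\ \cref{thm:ilias-construction} instantiated at $\psi = \phi = \sgn$) to get an SQ-hard class $\cG$ of majorities of halfspaces $g(x) = \sgn(\sum_i (1/k) \sgn(\dotp{w_i}{x}))$; here $\beta = 1$ and the target $L^2$ error $\epsilon_{\text{hard}}$ may be taken to be any constant. Every $g \in \cG$ equals $\sgn \circ \tilde f^*$ for some $\tilde f^* = \sum_i (1/k) \sgn(\dotp{w_i}{\cdot}) \in \cF_{\hs} = \conv(\cH_{\hs})$, so an agnostic learner for $\cH_{\hs}$ ought to suffice to learn $\cG$.

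The only genuine obstacle compared to the ReLU and sigmoid proofs is that the outer activation $\psi = \sgn$ is not Lipschitz, so \cref{thm:boosting-surloss} does not apply off-the-shelf. I would route around this via the Boolean loss equivalence established in \cref{app:boolean-loss}: when both the hypothesis class and the labels are Boolean, \cref{as:base-learner}'s correlation-loss guarantee is equivalent, up to a constant-factor rescaling of $\epsilon$, to the standard 0-1 loss agnostic learning guarantee for $\cH_{\hs}$. I would then feed this Boolean agnostic learner into an off-the-shelf distribution-specific agnostic boosting procedure (Kalai--Kanade \cite{kanade2009potential} or Feldman \cite{feldman2009distribution}) to produce a 0-1 loss learner for $\cG$, and convert back to the $L^2$-error statement required by \cref{thm:sq-lower-bound} via the Boolean identity $\norm{h - g}^2 = 4\Pr[h \neq g]$. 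A fully ``in-spirit'' alternative would be to pick a Lipschitz surrogate such as $\psi = \tanh$, apply \cref{thm:boosting-surloss} directly, and use \cref{thm:ilias-construction} with $(\psi, \phi) = (\tanh, \sgn)$; the Hermite hypothesis on $\phi$ still holds, and a short calculation shows $\beta = \Omega(1/\sqrt{k})$ here, so essentially the same parameter trade-off falls out.

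With the reduction in place, the remaining bookkeeping is exactly as in the earlier proofs. To apply \cref{thm:sq-lower-bound} I need $\tau \leq \epsilon_{\text{hard}}^2$, while the SDA construction in \cref{thm:ilias-construction} forces $\tau \leq n^{-\Theta(k)}$; the latter is binding. Setting $k = \Theta(1/\epsilon)$ gives $\tau = n^{-\Theta(1/\epsilon)}$ as in the statement. The boosting reduction makes $T = \poly(1/\epsilon)$ calls to the base learner on $\cH_{\hs}$, so since the total query budget must exceed $2^{n^c}$ by hardness of $\cG$, the per-call budget satisfies $q(n, \epsilon, \tau) \geq 2^{n^c}/\poly(1/\epsilon)$, and choosing a conservative exponent yields the stated $q(n, \epsilon, \tau) \geq 2^{n^c}\epsilon^3$. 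The hard part of the argument is entirely the conceptual one of bridging between correlation loss and 0-1 loss for a non-Lipschitz outer activation; the rest is parameter arithmetic.
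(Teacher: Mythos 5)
Your primary route is correct but genuinely different from the paper's. The paper never leaves the functional-gradient framework: it observes that since the outer weights of the hard class are all $\pm 1/k$, the argument of the outer $\sgn$ is always a multiple of $1/k$, so $\sgn$ may be replaced by the $(k/2)$-Lipschitz piecewise-linear function $\lsgn$ (equal to $\sgn$ outside $[-1/k,1/k]$) without changing a single function in $\cG$. This makes \cref{thm:boosting-surloss} applicable with $\lambda = k/2$ and $\psi = \lsgn$, keeps the hard class Boolean (so $\beta = 1$ for free), and the resulting error conversion $\sqrt{2k\epsilon} \leq \Theta(1)$ is precisely what forces $k = \Theta(1/\epsilon)$ and, via the $\lambda$-dependent iteration count, the $\epsilon^3$. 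You instead route through the 0--1 loss equivalence of \cref{app:boolean-loss} and off-the-shelf agnostic boosting; the paper explicitly endorses this alternative for Boolean classes, so it is legitimate, and it buys you independence from the Lipschitz machinery at the cost of importing the Kalai--Kanade/Feldman guarantees as a black box. You should, however, make explicit where $k = \Theta(1/\epsilon)$ comes from on your route: it is not the tolerance constraint but the requirement that the base learner's excess error $\epsilon$ be below the $\Theta(1/k)$ weak-learning advantage that some single halfspace has against a majority of $k$ halfspaces (the discriminator/averaging argument), which is the analogue of the paper's $\sqrt{2k\epsilon} \leq \Theta(1)$ condition.

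Your ``in-spirit alternative'' with $(\psi,\phi) = (\tanh,\sgn)$ is the one place with a real gap: the claimed norm bound $\beta = \Omega(1/\sqrt{k})$ is asserted without proof, and the paper's own norm machinery (\cref{lem:norm-expression}) is vacuous here, since only even Hermite degrees of $\phi$ contribute to the lower bound on $\norm{g}$ and $\sgn$ is odd. Establishing anti-concentration for the inner sum of signed halfspaces, as needed to lower-bound $\norm{\tanh \circ g}$, is not a ``short calculation'' with the tools in this paper; this is exactly the difficulty the $\lsgn$ trick is designed to sidestep by keeping the outer activation effectively equal to $\sgn$.
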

\begin{proof}
    To approximate the sign function using a Lipschitz function, we define $\lsgn(x)$ to be $-1$ for $x \leq -1/k$, $1$ for $x \geq 1/k$, and linearly interpolate in between. This function is $(k/2)$-Lipschitz. We claim that $\cG$ instantiated with $\psi = \phi = \sgn$ satisfies $\cG \subseteq \lsgn \circ \conv(\cH_\hs)$, with $\diam(\cG) = 2$. This is because as noted in \cref{thm:ilias-construction}, $\cG$ has weights $a_i \in \{\pm 1/k\}$, so the sum of halfspaces inside $\psi$ is always a multiple of $1/k$, and $\lsgn$ behaves the same as $\sgn$.
    
    \cref{thm:boosting-surloss} now lets us learn $\cG$ up to agnostic error $\epsilon$ (and hence $L^2$ error $\sqrt{2k\epsilon}$, by $\cref{eq:surr-l2dist}$) using $O(\frac{k^2}{\epsilon} q(n, \epsilon/4, \tau))$ queries of tolerance $\tau$. By \cref{cor:ilias-instantiations}(c), we only need $\sqrt{2k\epsilon} \leq \Theta(1)$ for the lower bound to hold, so we may take $k = \Theta(1/\epsilon)$ to get a lower bound of $2^{n^c}$. Thus $\frac{k^2}{\epsilon} q(n, \epsilon/4, \tau) \geq 2^{n^c}$, and rearrangement gives the result.
\end{proof}

\section{Lower bounds on learning general non-polynomial activations}\label{sec:non-polynomial}
Here we extend our lower bounds to general non-polynomial activations $\phi : \R \to \R$, by which we mean functions which have an infinite Hermite series $\phi = \sum_a \hat{\phi}_a H_a$, where the $H_a$ are the normalized probabilists' Hermite polynomials. We will again work with the class $\cG$ from \cref{thm:ilias-construction}, instantiated with this $\phi$ and $\psi = \tanh$.  In \cref{app:norm-calculation}, we define this construction formally, letting $g$ be the inner function and $f$ be $\psi \circ g$.

To apply our framework, we need a norm lower bound on $f$. In \cref{lem:norm-expression} we show that $\norm{g}$ is determined only by $k$, the number of hidden units (there $k = 2m$), and the Hermite expansion of $\phi$. The reason we require an infinite Hermite series for $\phi$ is so that this lower bound, viewed as a function of $k$, is nonzero for infinitely many $k$. This then implies that $f = \tanh \circ g$ must be nonzero for infinitely many $k$. Its norm can only possibly be a function of $\phi$ and $k$. In particular, we may assume that it satisfies a norm lower bound $\norm{f} \geq \beta(k)$, where $\beta$ is a function only of $k$ that is nonzero for infinitely many $k$. Here we view the dependence on $\phi$ as constant.

A few remarks are in order as to how such a bound $\beta(k)$ may be quantitatively established. If $\phi$ is either bounded or exhibits only polynomial growth, then the bound on $\norm{g}$ (\cref{lem:norm-expression}) gives a corresponding lower bound on $\norm{f}$ that is also purely a function of $k$. If $\phi$ is bounded, the calculation is straightforward and very similar to the $\phi = \sigma$ case (\cref{lem:f-norm-lower-bound-sigmoid}). If $\phi$ grows only like a polynomial, then one can use a truncation argument similar to the $\phi = \relu$ case (\cref{lem:f-norm-lower-bound-relu}).

By \cref{thm:sq-lower-bound} and \cref{cor:ilias-instantiations}, our lower bound of $2^{n^c}$ on learning $\cG$ holds for $\epsilon \leq \beta(k)/3$. Since we can pick $k$ as we like, let us say that for all sufficiently small $\epsilon$, we can achieve $\epsilon \leq \beta(k)/3$ by taking $k = k(\epsilon) = 3\beta^{-1}(\epsilon)$. The corresponding tolerance is then $\tau = n^{-\Theta(k(\epsilon))}$, which is still inverse superpolynomial in $n$.

We now get the following lower bound on learning $\cH = \{x \mapsto \phi(\dotp{w}{x}) \mid \|w\|_2 \leq 1\}$, again by the same arguments as in \cref{sec:lower-bounds}. We assume that $\norm{\phi} \leq R$ for some $R$, so that $\diam(\cH) \leq 2R$.
\begin{theorem}
    Suppose that \cref{as:base-learner} holds for $\cH$. Then for all sufficiently small $\epsilon$ and $\tau = n^{-\Theta(k(\epsilon))}$, $q(n, \epsilon, \tau) \geq 2^{n^c}\frac{\epsilon}{R^2}$ for some $0 < c < 1/2$.
\end{theorem}
\begin{proof}
    We have $\cG \subseteq \tanh \circ \conv(\cH)$. By functional GD wrt surrogate loss (\cref{thm:boosting-surloss}), we see that we can learn $\cG$ up to $L^2$ error $\sqrt{2\epsilon}$ using $O(\frac{R^2}{\epsilon}q(n, \epsilon, \tau))$ queries of tolerance $\tau$, but we must have  $O(\frac{R^2}{\epsilon}q(n, \epsilon, \tau)) \leq 2^{n^c}$.
\end{proof}

\section{Lower bounds on learning monomials}\label{sec:monomials}
In this section we show lower bounds against agnostically learning monomials with respect to the Gaussian, establishing \cref{thm:monomials-thm}. Let $\cH_\mon$ be the class of all multilinear monomials of total degree $d$ on $\R^n$. Clearly $|\cH_\mon| = \binom{n}{d} = n^{\Theta(d)}$. For any two distinct multilinear monomials $f, g$, clearly $\inn{f, g} = 0$ and moreover $\inn{\tanh \circ f, \tanh \circ g} = 0$ as well. Thus the class $\cG = \tanh \circ \cH_\mon$ consists entirely of orthogonal functions. By \cite[Lemma 2.6]{goel2020superpolynomial}, $\sda(\cG, \gamma) \geq |\cG| \gamma = n^{-\Theta(d)}\gamma$.

We still need a norm lower bound on $\cG$.
\begin{lemma}
    Let $x_S = \prod_{i \in S} x_i$ be an arbitrary degree-$d$ multilinear monomial on $\R^n$, where $S \subseteq [n]$ is a subset of size $d$. Then $\norm{\tanh \circ x_S} \geq \exp(-\Theta(d))$.
\end{lemma}
\begin{proof}
    Observe first that $\norm{x_S} = 1$. By Paley--Zygmund, we have \[ \Pr[x_S^2 \geq \theta \Ex[x_S^2]] \geq (1 - \theta)^2 \frac{\Ex[x_S^2]^2}{\Ex[x_S^4]}. \] By picking $\theta = 1/2$, say, and using the fact that by Gaussian hypercontractivity, \[ \frac{\Ex[x_S^2]^2}{\Ex[x_S^4]} = \prod_{i \in S} \frac{\Ex[x_i^2]^2}{\Ex[x_i^4]} \geq \exp(-\Theta(d)), \] we get that $\Pr[|x_S| \geq 1/2] \geq \exp(-\Theta(d))$.
    
    Now since $\tanh$ is monotonic and odd, we have \[ \Ex[\tanh(x_S)^2] \geq \tanh(1/2)^2 \Pr[|x_S| \geq 1/2] \geq \exp(-\Theta(d)). \]
\end{proof}

By \cref{thm:sq-lower-bound} with $\beta = \exp(-\Theta(d))$, we get that for any $\epsilon \leq \exp(-\Theta(d))$ and using queries of tolerance $\tau \leq \epsilon^2$, learning $\cG$ up to $L^2$ error $\epsilon$ takes at least $\sda(\cG, \tau^2) \geq n^{\Theta(d)}\tau^2$ queries.

Now we can use the same arguments as in \cref{sec:lower-bounds} to prove the following.
\begin{theorem}
    Suppose that \cref{as:base-learner} holds for $\cH_\mon$. Then for any $\epsilon \leq \exp(-\Theta(d))$ and $\tau \leq \epsilon^2$, $q(n, \epsilon, \tau) \geq n^{\Theta(d)}\tau^{5/2}$.
\end{theorem}
\begin{proof}
    Observe that $\cG \subseteq \tanh \circ \conv(\cH_\mon)$, and $\diam(\cH_\mon) \leq 2$. Using the surrogate loss with $\psi = \tanh$, \cref{as:base-learner} and \cref{thm:boosting-surloss} tell us that we can learn $\tanh \circ \conv(\cH_\mon)$ up to $L^2$ error $\sqrt{2\epsilon}$ (again by \cref{eq:sqloss-diff}) in $O(\frac{1}{\epsilon} q(n, \epsilon, \tau))$ queries of tolerance $\tau$. By our lower bound for $\cG$, we must have $\frac{1}{\epsilon} q(n, \epsilon, \tau) \geq n^{\Theta(d)}\tau^2$, or $q(n, \epsilon, \tau) \geq n^{\Theta(d)}\tau^{5/2}$ (since $\epsilon \geq \sqrt{\tau}$).
\end{proof}

\section{Upper bounds on learning ReLUs and sigmoids}\label{sec:upper-bounds}
We use a variant of the classic low-degree algorithm (\cite{linial1993constant}; see also \cite{kalai2008agnostically}) to provide simple upper bounds for agnostically learning ReLUs and sigmoids. With respect to $D = \cN(0, I_n)$, the $\delta$-approximate degree of a function $f : \R^n \to \R$ is the smallest $d$ such that there exists a degree-$d$ polynomial $p$ satisfying $\norm{f - p} \leq \delta$. We show that for any class of $\delta$-approximate degree $d$, picking $\delta = O(\epsilon)$ and simply estimating the Hermite coefficients of $x \mapsto \Ex[y|x]$ up to degree $d$ yields an agnostic learner up to error $\epsilon$, one that satisfies \cref{as:base-learner}. We assume bounded labels, say $y \in [-C, C]$ for some constant $C$.

Let $\cD$ be a distribution on $\R^n \times \R$ such that the marginal on $\R^n$ is $\cN(0, I_n)$. 
Let $f_\cmf(x) = \Ex[y|x]$ denote the conditional mean function of $\cD$, and note that $\|f_\cmf\| \leq C$. Observe that for any $f$, the correlation $\Ex_{(x, y) \sim \cD}[f(x)y]$ equals $\inn{f, f_\cmf}$. Let $\cH$ be a hypothesis class with $\delta$-approximate degree $d$ ($\delta$ to be determined), and let $R = \diam(\cH)/2$. Let $h_\opt \in \cH$ achieve $\max_{h \in \cH} \inn{h, f_\cmf}$.

Our algorithm will be based on approximating the low-degree Hermite coefficients of $f_\cmf$, which is equivalent to performing polynomial $L^2$ regression. It is well-known that in this context, where $d$ is the $\delta$-approximate degree, polynomial $L^1$ regression up to degree $d$ gives a squared loss guarantee of $\delta$ \cite{kalai2008agnostically}. But we will not be able to use this result directly since what we seek is a correlation guarantee. Instead, our approach will involve a sequence of inequalities relating the correlation achieved by $f_\cmf$, $h_\opt$, and their degree-$d$ approximations. A slight subtlety to keep in mind is that correlation can always be increased by scaling the function. This means that wherever scaling is possible, we have to take some care to rescale functions to have the maximum allowed norm, $R$.

Let $h^{\leq d}_\opt$ and $f_\cmf^{\leq d}$ be the Hermite components of degree at most $d$ of $h_\opt$ and $f_\cmf$ respectively. Let $\tilde{f}_\cmf^{\leq d} = \frac{R}{\|f_\cmf^{\leq d}\|}f_\cmf^{\leq d}$. Among polynomials of degree $d$ in $\cB(R)$, it is easy to see that $\tilde{f}_\cmf^{\leq d}$ maximizes $\inn{f, f_\cmf}$, so that \[ \inn{\tilde{f}_\cmf^{\leq d}, f_\cmf} \geq \inn{h^{\leq d}_\opt, f_\cmf}. \]

Our agnostic learner will look to approximate $\tilde{f}_\cmf^{\leq d}$ by outputting $p$ defined as follows. Suppose $f_\cmf = \sum_{I \in \N^n} \alpha_I H_I$, where $H_I$ is the multivariate Hermite polynomial of index $I$. For each $I$ of total degree at most $d$, which we denote as $|I| \leq d$, let $\beta_I$ be our estimate of $\alpha_I = \inn{f_\cmf, H_I}$ to within tolerance $\tau$ (to be determined). This can be done using $n^{O(d)}$ queries of tolerance $\tau$. Let $\tilde{f} = \sum_{|I| \leq d} \beta_I H_I$, and finally let $p = \frac{R}{\|\tilde{f}\|}\tilde{f}$. We have
\begin{align}
    \norm{\tilde{f}_\cmf^{\leq d} - p}^2 &= R^2 \norm*{ \frac{f_\cmf^{\leq d}}{\norm{f_\cmf^{\leq d}}} - \frac{\tilde{f}}{\norm{\tilde{f}}} }^2 \nonumber \\
    &= R^2 \norm*{ \frac{f_\cmf^{\leq d} - \tilde{f}}{\norm{f_\cmf^{\leq d}}} + \tilde{f}\left(\frac{1}{\norm{f_\cmf^{\leq d}}} - \frac{1}{\norm{\tilde{f}}}\right) }^2 \nonumber \\
    &\leq 2R^2 \left( \frac{\norm{f_\cmf^{\leq d} - \tilde{f}}^2}{\norm{f_\cmf^{\leq d}}^2} + \norm{\tilde{f}}^2\left(\frac{1}{\norm{f_\cmf^{\leq d}}} - \frac{1}{\norm{\tilde{f}}}\right)^2 \right) \nonumber \\
    &= 2R^2 \left( \frac{\norm{f_\cmf^{\leq d} - \tilde{f}}^2}{\norm{f_\cmf^{\leq d}}^2} + \left(\frac{\norm{f_\cmf^{\leq d}} - \norm{\tilde{f}}}{\norm{f_\cmf^{\leq d}}}\right)^2 \right) \nonumber \\
    &\leq 4R^2 \frac{\norm{f_\cmf^{\leq d} - \tilde{f}}^2}{\norm{f_\cmf^{\leq d}}^2} \tag*{\text{(triangle ineq.)}} \nonumber \\
    &\leq \frac{ 4R^2 n^d \tau^2}{\|f_\cmf^{\leq d}\|^2} \label{eq:upper-bound-calc},
\end{align}
since $\|\tilde{f} - f_\cmf^{\leq d}\| \le n^{d/2} \tau$.

We claim that we can assume WLOG that $\norm{\tilde{f}_\cmf^{\leq d}} \geq \epsilon/(2R)$. Indeed, we know $\max_{h \in \cH} \inn{h, f_\cmf} = \inn{h_\opt, f_\cmf}$ and also $\|h_\opt - h_\opt^{\leq d}\| \le \delta$. This implies that \[
R \|\tilde{f}_\cmf^{\leq d}\| = \inn{\tilde{f}_\cmf^{\leq d}, f_\cmf} \ge \inn{h_\opt^{\leq d}, f_\cmf} \ge \inn{h_\opt, f_\cmf} - C\delta,
\]
where the last inequality is Cauchy--Schwarz. If $\inn{h_\opt, f_\cmf} \le \eps$ then $0$ is a valid agnostic learner. Therefore, we can assume that $\inn{h_\opt, f_\cmf} \ge \eps$. Choosing $\delta = \frac{\eps}{2C}$, this means $\norm{\tilde{f}_\cmf^{\leq d}} \geq \epsilon/(2R)$.

By \cref{eq:upper-bound-calc}, we then have
\begin{equation}
    \|\tilde{f}_\cmf^{\leq d} - p\| \le \frac{ 4R n^{d/2} \tau}{\eps}. \label{eq:upper-bound-calc-2}
\end{equation}
    
Now observe that
\begin{align*}
   \inn{p, f_\cmf} &= \inn{\tilde{f}_\cmf^{\le d}, f_\cmf} + \inn{p - \tilde{f}_\cmf^{\le d}, f_\cmf}\\
   &\ge \inn{h_\opt^{\leq d}, f_\cmf} - \frac{ 4RC n^{d/2} \tau}{\eps} \tag*{\text{(\cref{eq:upper-bound-calc-2} and Cauchy--Schwarz)}} \\
   &= \inn{h_\opt, f_\cmf} + \inn{h_\opt^{\leq d} - h_\opt, f_\cmf} - \frac{ 4RC n^{d/2} \tau}{\eps} \\
   &\ge \inn{h_\opt, f_\cmf} - \frac{\eps}{2} - \frac{ 4RC n^{d/2} \tau}{\eps} \tag*{\text{(Cauchy--Schwarz, and using $\delta C$ = $\epsilon/2$)}}.
\end{align*}
Setting $\tau = \frac{\eps^2}{ 8RC n^{d/2}}$ gives us the desired result, namely that $\inn{p, f_\cmf} \geq \inn{h_\opt, f_\cmf} - \epsilon$. Thus we have the following theorem.

\begin{theorem}
    The class $\cH_{\relu}$ can be agnostically learned up to correlation $\epsilon$ (in the sense of \cref{as:base-learner}) using $n^{O(\epsilon^{-4/3})}$ queries of tolerance $n^{-\Theta(\epsilon^{-4/3})}\epsilon$. Similarly, $\cH_{\sigma}$ can be learned using $n^{\tilde{O}(\log^2 1/\epsilon)}$ queries of tolerance $n^{-\tilde{\Theta}(\log^2 1/\epsilon)}\epsilon^2$.
\end{theorem}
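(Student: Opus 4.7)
The plan is to appeal directly to the general analysis developed in the paragraphs preceding the theorem, which establishes that for any hypothesis class $\cH$ whose $\delta$-approximate degree under the standard Gaussian is $d$, picking $\delta = \Theta(\epsilon)$ and estimating every Hermite coefficient of $f_\cmf$ of total degree $|I| \le d$ to additive tolerance $\tau = \Theta(\epsilon^2/n^{d/2})$ yields, after rescaling the low-degree projection to norm $R = \diam(\cH)/2$, an agnostic learner in the sense of \cref{as:base-learner}, consuming $n^{O(d)}$ queries of tolerance $\tau$. Thus the proof reduces to plugging in the Gaussian approximate-degree bound for each of the two activations and reading off the complexity.

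For $\cH_\relu$, the first step is to bound the Gaussian approximate degree of $\relu$ itself. Because $\relu(\dotp{w}{x})$ with $\|w\|_2 \le 1$ depends only on a single one-dimensional Gaussian marginal (by spherical symmetry of $\cN(0, I_n)$), it suffices to analyze the univariate Hermite expansion of $\max(0, \cdot)$. The classical closed form yields $|\hat{\relu}_k|^2 = \Theta(k^{-5/2})$ for large $k$, so the tail mass beyond degree $d$ is $\Theta(d^{-3/2})$ and $L^2$ approximation error $\delta$ requires $d = \Theta(\delta^{-4/3})$. Setting $\delta = \Theta(\epsilon)$ yields $d = \Theta(\epsilon^{-4/3})$, and the general template then gives $n^{O(\epsilon^{-4/3})}$ queries of tolerance $n^{-\Theta(\epsilon^{-4/3})}\epsilon$ after absorbing constants and one factor of $\epsilon$.

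For $\cH_\sigma$, the same template applies once we bound the Gaussian approximate degree of the sigmoid. Since $\sigma$ is analytic, bounded, and has rapidly decaying derivatives, I would construct a uniform polynomial approximation of degree $\polylog(1/\delta)$ on $[-T, T]$ for $T = \Theta(\sqrt{\log(1/\delta)})$ (for example via a Chebyshev truncation), and then absorb the Gaussian tail outside $[-T,T]$, which contributes only $\exp(-\Omega(T^2)) = \poly(\delta)$ to the squared $L^2$ error. The resulting approximate degree is $d = \tilde O(\log^2(1/\delta))$; taking $\delta = \Theta(\epsilon)$ gives $n^{\tilde O(\log^2 1/\epsilon)}$ queries of tolerance $n^{-\tilde\Theta(\log^2 1/\epsilon)}\epsilon^2$.

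The only nontrivial ingredient is pinning down the $\Theta(k^{-5/2})$ asymptotic for the Hermite coefficients of $\relu$: a cruder pointwise approximation argument would only deliver $d = \tilde O(1/\delta^2)$ and lose the improved exponent $4/3$. Once the coefficient asymptotics are in hand, both bounds follow mechanically from the calculation developed before the theorem, in particular the bound on $\norm{\tilde f_\cmf^{\le d} - p}$ and the Cauchy--Schwarz manipulations that convert $L^2$ closeness of $p$ to $\tilde f_\cmf^{\le d}$ into correlation closeness of $p$ to the optimum over $\cH$.
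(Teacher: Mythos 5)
Your overall template and your treatment of $\cH_{\relu}$ coincide with the paper's: the proof is exactly ``plug the $\delta$-approximate degree into the preceding low-degree argument,'' and for $\relu$ the paper likewise derives $|\hat{\relu}_a| = \Theta(a^{-5/4})$ for even $a$ (via the Hermite recurrence $\tilde H_{a+1}(x) = x\tilde H_a(x) - a\tilde H_{a-1}(x)$ and the closed form for $\tilde H_a(0)$), sums the tail to get $\Theta(d^{-3/2})$, and concludes $d = O(\delta^{-4/3})$. You are also right that this coefficient asymptotic is the one nontrivial ingredient there.

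For $\cH_\sigma$ you take a genuinely different route, and as written it has a gap. The paper bounds the Hermite coefficients of $\sigma$ directly, $\hat\sigma_a = e^{-\Omega(\sqrt a)}$ (citing classical complex-analytic estimates for Hermite series), and sums the $L^2(\cN(0,1))$ tail exactly as in the ReLU case, which gives $d = \tilde O(\log^2 1/\delta)$ with no truncation step. Your Chebyshev-plus-truncation argument instead asserts that the region $|t| > T$ contributes only $e^{-\Omega(T^2)} = \poly(\delta)$ to $\norm{\sigma - p}^2$; that is true for the $\sigma$ term but not for the polynomial term, since a degree-$d$ polynomial that is bounded on $[-T,T]$ can grow like $(2|t|/T)^d$ outside it, and with your choice $T = \Theta(\sqrt{\log(1/\delta)})$ and $d = \polylog(1/\delta) \gg T^2$ the integral $\int_{|t|>T} p(t)^2\, d\cN(0,1)$ is not small --- the factor $2^{2d} e^{-T^2/2}$ alone already blows up. The argument can be repaired by taking $T = \Theta(\sqrt{d})$, i.e.\ $T = \Theta(\log(1/\delta))$: then the Chebyshev error on $[-T,T]$ forces $d \gtrsim T\log(1/\delta)$, which self-consistently yields $d = \Theta(\log^2 1/\delta)$ and a controllable exterior contribution. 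So your route recovers the same bound after fixing the truncation scale, but the paper's direct Hermite-tail computation is cleaner and avoids the exterior-growth issue entirely.
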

\begin{proof}
    Approximating the Hermite coefficients of degree at most $d$ takes $n^{O(d)}$ queries of tolerance $n^{-\Theta(d)}\epsilon$. As we show in \cref{app:approx-deg}, the $\delta$-approximate degree of unit-weight ReLUs is $O((1/\delta)^{4/3})$ and for unit-weight sigmoids it is $\tilde{O}(\log^2 1/\delta)$. The guarantees follow by the argument in the preceding discussion.
\end{proof}

We note that our lower bounds for ReLUs and sigmoids were for queries of tolerance $n^{-\Theta(\epsilon^{-1/12})}$ and $n^{-\Theta(\log^2 1/\epsilon)}$ respectively, which nearly matches these upper bounds.



\section*{Acknowledgements}
We thank the anonymous NeurIPS 2020 reviewers for their feedback.

\bibliography{refs}
\bibliographystyle{alpha}

\newpage
\appendix
\section{SQ lower bound subtleties}\label{app:sq-subtleties}
\subsection{Relationships between parameters}
When formally stating SQ lower bounds on learning $p$-concepts in terms of the statistical dimension, there are some subtleties to keep in mind. These have to do with the relationships between the query tolerance, the desired final error, and the norms of the functions in the class. Let us say our queries are of tolerance $\tau$, the final desired $L^2$ error $\norm{f - f^*}$ is $\epsilon$ (which corresponds to $L(f) - L(f^*) \leq \epsilon^2$; see \cref{eq:sqloss-diff}), and that the functions in $\cC$ satisfy $\|f^*\| \geq \beta$ for all $f^* \in \cC$. Then \begin{enumerate}
	\item We must have $\tau < \epsilon$. To see why, first note that for any query $\phi$ and two functions $f, g \in \cC$, a calculation shows that $|\Ex_{D_f}[\phi] - \Ex_{D_g}[\phi]| = |\inn{f - g, \tilde{\phi}}| \leq \norm{f - g}$, where $\tilde{\phi}(x) = (\phi(x, 1) - \phi(x, -1))/2$. Thus if one has a function $f$ such that $\epsilon < \norm{f - f^*} < \tau$, then no query of tolerance $\tau$ can tell them apart, but $f$ is not $\epsilon$-close to the target $f^*$.
	\item If $\epsilon \geq \beta$, a lower bound might not be possible. This is because the 0 function trivially achieves $L^2$ error $\norm{0 - f^*} = \norm{f^*}$. Imposing $\epsilon < \beta$ is sufficient to rule this out.
	\item We cannot arbitrarily rescale the p-concepts to increase $\beta$ since the functions must remain Boolean $p$-concepts. Rescaling would also increase the description length of the functions.
\end{enumerate}

The lower bound in \cref{thm:sq-lower-bound} (from \cite{goel2020superpolynomial}) is proved by reducing a distinguishing problem to a learning problem. For technical reasons, we end up requiring $\tau \leq \epsilon^2$,  $\epsilon \leq \beta/3$ for this reduction to go through. The points above show that these requirements are essentially necessary.

\subsection{The dependence of the query lower bound on the error \texorpdfstring{$\epsilon$}{ε} and the tolerance \texorpdfstring{$\tau$}{τ}}
The relationship between our query lower bounds, the desired error $\epsilon$, and the tolerance $\tau$ may seem a little unusual at first sight, especially the fact that the lower bounds seem to grow weaker as $\epsilon$ grows smaller. We make some clarifying remarks here.

Fundamentally, all SQ lower bounds are bounds on how many queries it takes to distinguish certain distributions from others. When discussing a concept class $\cC$, the distributions in question are the labeled distributions corresponding to concepts in the class. Learning $\cC$ is hard exactly insofar as it allows us to distinguish different labeled distributions arising from $\cC$. Many works in the SQ literature have this structure, but we will refer to \cite{goel2020superpolynomial} for formal statements.

Formally, the distinguishing problem we consider (\cite[Definition 4.2]{goel2020superpolynomial}) is that of distinguishing the labeled distribution $D_c$ arising from an unknown $c \in \cC$ from the reference distribution $D_0 = D \times \text{Unif}\{\pm 1\}$, using queries of tolerance at least $\tau$.

There are two crucial points to keep in mind here: \begin{enumerate}
    \item The distinguishing problem is a fundamentally information theoretic problem, and its difficulty scales only with $\tau$. In particular, using queries of tolerance $\tau$, we need at least $\sda(\cC, \tau^2)$ queries. This bound increases with $\tau$; in fact it often scales as $|\cC|\tau^2$ (see (\cite[Theorem 4.5 and Lemma 2.6]{goel2020superpolynomial}).
    \item The problem of learning up $\cC$ to error $\epsilon$ is hard exactly insofar as it allows us to solve the distinguishing problem (see \cite[Lemma 4.4]{goel2020superpolynomial}).
\end{enumerate}

An important consequence is that for fixed $\tau$, the query lower bound does not technically grow as a function of the error $\epsilon$: it applies uniformly for all $\epsilon$ small enough that it allows the learner to solve the distinguishing problem. In other words, there is a certain ``threshold'' $\epsilon_0$ such that for all $\epsilon \leq \epsilon_0$, the same query lower bound holds. As noted in point (3) of the previous subsection, this threshold can be taken to be $\beta/3$, where $\beta$ is such that $\norm{c} \geq \beta$ for all $c \in \cC$.

But at the same time, as noted in point (1) in the previous subsection, it is necessary that $\tau < \epsilon$ (and for the reduction it suffices to have $\tau \leq \epsilon^2$). If $\tau \geq \epsilon$, learning up to error $\epsilon$ is simply impossible.

With all this in mind, we can now answer the question of why our lower bounds seem to grow weaker as $\epsilon$ grows smaller: it is essentially because $\tau$ grows smaller as well, so that we get a series of incomparable (though still exponential) bounds due to the tradeoffs between query complexity, $\tau$, and $\epsilon$.

\section{Bounding the function norms of the \texorpdfstring{\cite{diakonikolas2020algorithms}}{[DKKZ20]} construction}\label{app:norm-calculation}
We shall consider the following slight rescaling of the functions of \cite{diakonikolas2020algorithms}. For activation functions $\psi, \phi : \R \to \R$, we have $g, f : \R^2 \to \R$ defined as follows.
\begin{gather*}
g(x) = \frac{1}{2m} \sum_{i = 1}^{2m} (-1)^i \phi\left(x_1 \cos \frac{i \pi}{m} + x_2 \sin \frac{i \pi}{m}\right) = \frac{1}{2m} \sum_{i = 1}^{2m} (-1)^i \phi\left(\dotp{x}{w_i}\right) \\
f(x) = \psi(g(x)),
\end{gather*} where $w_i = (\cos \frac{i \pi}{m}, \sin \frac{i \pi}{m})$. The number of hidden units is $k = 2m$. We will assume that $m$ is even.

The hard functions from $\R^n \to \R$ are then given by $f_A(x) = f(Ax)$ for certain matrices $A \in \R^{2 \times d}$ with $AA^T = I_2$. For $x \sim \cN(0, I_d)$, $Ax$ has the distribution $\cN(0, I_2)$. So for the purposes of the norm calculation, and hence throughout this section, we will work directly with $\cN(0, I_2)$. We will start by considering the norm of $g$. This can then be used to control the norm of $f$ via arguments similar to those in \cite{goel2020superpolynomial}.

\begin{lemma}\label{lem:norm-expression}
	Let $g : \R^2 \to \R$ be as defined above, and assume $m$ is even. Assume the standard Hermite expansion of $\phi$ is given by $\phi = \sum_a \hat{\phi}_a H_a$, where the $H_a$ are the normalized probabilists' Hermite polynomials. Under $\cN(0, I_2)$, \[ \|g\|^2 = \Omega \left(\sum_{\substack{a \gg m \\a \text{ even}}} \frac{\hat{\phi}_a^2}{\sqrt{a}}  \right). \] (For practical purposes, the asymptotic behavior of this expression is captured faithfully when we begin indexing from say $a = 100m$.)
\end{lemma}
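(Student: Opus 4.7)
The plan is to compute $\|g\|^2$ in closed form via Hermite analysis, reduce the resulting double sum to a single discrete Fourier sum in the index difference $i-j$, and then use a standard binomial/Stirling estimate to identify the dominant term for each $a \gg m$.

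First I would expand $\|g\|^2$ using the defining sum for $g$ and bilinearity:
\begin{equation*}
\|g\|^2 = \frac{1}{(2m)^2} \sum_{i, j = 1}^{2m} (-1)^{i+j} \inn{\phi(\dotp{\cdot}{w_i}), \phi(\dotp{\cdot}{w_j})}.
\end{equation*}
Since $x \sim \cN(0, I_2)$, the pair $(\dotp{x}{w_i}, \dotp{x}{w_j})$ is jointly Gaussian with correlation $\dotp{w_i}{w_j} = \cos((i-j)\pi/m)$. The standard Hermite identity for correlated Gaussians then gives $\inn{\phi(\dotp{\cdot}{w_i}), \phi(\dotp{\cdot}{w_j})} = \sum_a \hat{\phi}_a^2 \cos^a((i-j)\pi/m)$. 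Swapping the order of summation,
\begin{equation*}
\|g\|^2 = \sum_a \hat{\phi}_a^2 \, T_a, \qquad T_a := \frac{1}{(2m)^2} \sum_{i,j=1}^{2m} (-1)^{i+j} \cos^a\!\left(\tfrac{(i-j)\pi}{m}\right).
\end{equation*}

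Next I would simplify $T_a$. Because $(-1)^{i+j} = (-1)^{i-j}$ depends only on $\ell = i-j \pmod{2m}$, and for each such $\ell$ there are exactly $2m$ ordered pairs, we get
\begin{equation*}
T_a = \frac{1}{2m} \sum_{\ell=0}^{2m-1} (-1)^\ell \cos^a\!\left(\tfrac{\ell \pi}{m}\right).
\end{equation*}
Writing $\omega = e^{i\pi/m}$ so that $\cos(\ell\pi/m) = (\omega^\ell + \omega^{-\ell})/2$ and $(-1)^\ell = \omega^{m\ell}$, I expand via the binomial theorem and use orthogonality $\tfrac{1}{2m}\sum_{\ell=0}^{2m-1} \omega^{\ell j} = \mathbbm{1}[j \equiv 0 \pmod{2m}]$. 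This kills all but the terms with $a - 2k + m \equiv 0 \pmod{2m}$, yielding
\begin{equation*}
T_a = \frac{1}{2^a} \sum_{j \in \Z} \binom{a}{(a-m)/2 - mj},
\end{equation*}
where the sum runs over integers $j$ for which the lower index is in $\{0, \ldots, a\}$. Because $m$ is even, this requires $a$ to be even as well (otherwise the sum is empty), explaining the parity restriction in the statement.

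Crucially, every term in this sum is a nonnegative binomial coefficient, so no cancellation can occur. Therefore I can simply keep the $j=0$ term as a lower bound:
\begin{equation*}
T_a \;\geq\; \frac{1}{2^a} \binom{a}{(a-m)/2}.
\end{equation*}
Finally, for $a$ sufficiently large relative to $m$ (say $a \geq 100m$), Stirling's approximation for central binomial coefficients gives $\binom{a}{(a-m)/2}/2^a = \Omega(1/\sqrt{a})$, since $(a-m)/2$ differs from the peak $a/2$ by at most $m/2$ and the binomial distribution has standard deviation $\Theta(\sqrt{a})$ around its mean. Plugging back in,
\begin{equation*}
\|g\|^2 \;\geq\; \sum_{\substack{a \geq 100m \\ a \text{ even}}} \hat{\phi}_a^2 \, T_a \;=\; \Omega\!\left( \sum_{\substack{a \gg m \\ a \text{ even}}} \frac{\hat{\phi}_a^2}{\sqrt{a}} \right),
\end{equation*}
as desired. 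The main technical nuisance will be the bookkeeping in steps 2--3: carefully tracking how the parities of $a$ and $m$ interact with the DFT mod $2m$, and verifying that the surviving $k$ indices really do correspond to nonempty, nonnegative contributions. Once that is set up, the Stirling estimate is routine.
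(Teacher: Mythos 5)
Your proposal is correct and follows the same overall strategy as the paper's proof: expand $\|g\|^2$ via the Hermite identity $\Ex[H_a(X)H_a(Y)]=\rho^a$ for $\rho$-correlated standard Gaussians, reduce the double sum over $(i,j)$ to the single alternating cosine power sum $\frac{1}{2m}\sum_{\ell=0}^{2m-1}(-1)^\ell\cos^a(\ell\pi/m)$, and lower-bound the resulting binomial expression by its near-central term. You differ in two places, both to your advantage. First, you collapse the double sum by noting the summand depends only on $i-j \bmod 2m$ and that each residue occurs exactly $2m$ times; the paper instead counts the multiplicity $2m-|t|$ of each difference $t$ and then needs the reindexing $t\mapsto 2m-t$ to symmetrize. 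Second, you evaluate the cosine power sum from scratch via roots of unity and DFT orthogonality, whereas the paper invokes a closed-form trigonometric power-sum identity from the literature (and must correct a typo in it); your derivation is self-contained, yields the parity constraint immediately, and makes the nonnegativity of every surviving binomial term transparent, so discarding all but the $j=0$ term is clearly legitimate (one can check your expression agrees with the paper's, the factor of $2$ accounting for summing over all $j\in\Z$ versus over positive odd $p$ only). One caveat you share with the paper: the estimate $\binom{a}{(a-m)/2}2^{-a}=\Omega(1/\sqrt{a})$ with a uniform constant really requires $m=O(\sqrt{a})$, since $\binom{a}{(a-m)/2}2^{-a}\approx (\pi a/2)^{-1/2}e^{-m^2/(2a)}$; for $100m\le a\ll m^2$ this carries an extra factor $e^{-\Theta(m^2/a)}$ that can be exponentially small in $m$. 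The paper's claim that ``the $p=1$ term alone contributes roughly $\binom{a}{a/2}$'' has exactly the same issue. Beginning the sum at $a\ge 100m^2$ (or retaining the Gaussian factor) repairs this, at the cost of only polynomial factors in the downstream corollaries.
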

\begin{proof}
	We have \begin{align*}
	\norm{g}^2 = \Ex[g(x)^2] &= \frac{1}{4m^2} \sum_{i, j = 1}^{2m} (-1)^i (-1)^j \Ex[\phi(\dotp{x}{w_i}) \phi(\dotp{x}{w_j})] \\
	&= \frac{1}{4m^2} \sum_{i, j = 1}^{2m} (-1)^i (-1)^j \Ex\left[\left( \sum_a \hat{\phi}_a H_a(\dotp{x}{w_i}) \right) \left( \sum_b \phi_b H_b(\dotp{x}{w_j}) \right)\right] \\
	&= \frac{1}{4m^2} \sum_{i, j = 1}^{2m} (-1)^i (-1)^j \left( \sum_a \hat{\phi}_a^2 \Ex[H_a(\dotp{x}{w_i})H_a(\dotp{x}{w_j})] \right).
	\end{align*} Now because $w_i, w_j$ are both unit vectors with $w_i \cdot w_j = \cos \frac{(i - j)\pi}{m}$, we have that $\dotp{x}{w_i}$ and $\dotp{x}{w_j}$ are both $\cN(0, 1)$ with covariance $ \cos \frac{(i - j)\pi}{m}$. Thus \begin{align*}
	\norm{g}^2 &= \frac{1}{4m^2} \sum_{i, j = 1}^{2m} (-1)^{i + j} \left( \sum_a \hat{\phi}_a^2 \cos^a \frac{(i - j)\pi}{m} \right) \\
	&= \frac{1}{4m^2} \sum_{i, j = 1}^{2m} (-1)^{i - j} \left( \sum_a \hat{\phi}_a^2 \cos^a \frac{(i - j)\pi}{m} \right),
	\end{align*} since $(-1)^{i+j}=(-1)^{i-j}$. Now, as we range over $i, j \in [2m]$, we see that $i-j = 0$ occurs $2m$ times, $i-j = 1$ occurs $2m-1$ times, and more generally $i-j = t$ occurs $2m - |t|$ times. Since a term with $i - j = t$ is exactly the same as one with $i - j = -t$ (by the evenness of $\cos$), we can say that for $t \neq 0$, $|i - j| = t$ occurs $2(2m - t)$ times. Thus the expression above can be written as \begin{align}
	\norm{g}^2 &= \frac{1}{4m^2} \left( 2m \left(\sum_a \hat{\phi}_a^2 \cos^a 0 \right) + \sum_{t=1}^{2m - 1} 2(2m - t)(-1)^t \left( \sum_a \hat{\phi}_a^2 \cos^a \frac{t \pi}{m} \right) \right) \nonumber \\
	&= \frac{1}{4m^2} \sum_{a} \hat{\phi}_a^2 \left( 2m + \sum_{t=1}^{2m-1} 2(2m - t) (-1)^t \cos^a \frac{t\pi}{m} \right) \nonumber \\
	&= \frac{1}{4m^2} \sum_{a} \hat{\phi}_a^2 S(a, m) \label{eq:norm-calculation} ,
	\end{align} where \[ S(a, m) = 2m + \sum_{t=1}^{2m-1} 2(2m - t) (-1)^t \cos^a \frac{t\pi}{m}. \] Now some algebraic manipulations are in order. By rewriting the index $t$ as $2m - t$, we get that \begin{align*} S(a, m) &= 2m + \sum_{t=1}^{2m-1} 2t (-1)^{2m - t} \cos^a \frac{(2m - t)\pi}{m} \\
	&= 2m + \sum_{t=1}^{2m-1} 2t (-1)^t \cos^a \frac{t\pi}{m}. \end{align*} Adding the two expressions for $S(a, m)$ and dividing by 2, we get \begin{align*}
	S(a, m) &= 2m + \sum_{t=1}^{2m - 1} 2m (-1)^t \cos^a \frac{t\pi}{m} \\
	&= 2m \sum_{t = 0}^{2m - 1} (-1)^t \cos^a \frac{t\pi}{m}
	\end{align*}
	 This sum vanishes when $a$ and $m$ have different parities, i.e.\ if $a$ is odd (recall that we assume $m$ is even). For even $a$, we have \[ S(a, m) = 4m \sum_{t = 0}^{m - 1} (-1)^t \cos^a \frac{t\pi}{m}. \] This is a trigonometric power sum with known closed form expressions. In particular, Equation 3.4 from \cite[\S 3]{da2017basic} (after correcting a typo) tells us that \begin{align*} T(a, m) = \sum_{t = 0}^{m - 1} (-1)^t \cos^a \frac{t\pi}{m} &= \begin{dcases}
	2^{1 - a} m \left( \sum_{p = 1}^{\floor{a/m}} \binom{a}{a/2 - pm/2} - \sum_{p=1}^{\floor{a/2m}} \binom{a}{a/2 - pm}  \right) & a \geq 2m \\
	2^{1 - a} m \sum_{p = 1}^{\floor{a/m}} \binom{a}{a/2 - pm/2} & m \leq a < 2m \\
	0 & a < m
	\end{dcases} \\
	&= \begin{dcases}
	2^{1 - a} m \left( \sum_{\substack{p = 1\\ p \text{ odd}}}^{\floor{a/m}} \binom{a}{a/2 - pm/2}  \right) & a \geq m \\
	0 & a < m
	\end{dcases}
	\end{align*}
	
	To get a sense for the asymptotics as $a \to \infty$, we consider $a \gg m$ (say $a \geq 100m$). In this regime the sum of binomial coefficients in the sum above is seen to be $\Omega(2^a/\sqrt{a})$ (the $p=1$ term alone contributes roughly $\binom{a}{a/2}$), and we get that $T(a, m) = \Omega(m/\sqrt{a})$.
	
	This means $S(a, m) = 0$ for odd $a$ and $S(a, m) = 4m T(a, m) = \Omega(m^2/\sqrt{a})$ for large, even $a$. Substituting this back into \cref{eq:norm-calculation}, we get that \[ \|g\|^2 = \Omega \left(\sum_{\substack{a \gg m \\a \text{ even}}} \frac{\hat{\phi}_a^2}{\sqrt{a}}  \right). \]
\end{proof}

We can now consider the special cases of $\phi = \relu$ and $\phi = \sigma$ (the standard sigmoid) that are of interest.

\begin{corollary}\label{cor:g-norm-relu}
	Consider $g$ instantiated with $\phi = \relu$. Then $\|g\| = \Omega(1/m)$.
\end{corollary}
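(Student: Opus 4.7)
The plan is to plug into \cref{lem:norm-expression} and control the tail sum of the squared Hermite coefficients of $\relu$. Concretely, \cref{lem:norm-expression} gives
\[
\|g\|^2 \;=\; \Omega\!\left(\sum_{\substack{a \gg m \\ a \text{ even}}} \frac{\hat{\relu}_a^2}{\sqrt{a}}\right),
\]
so the job reduces to lower-bounding this tail sum where the sum is restricted to even $a$ (which is fortunate, since the odd Hermite coefficients of $\relu$ all vanish past degree $1$ anyway).

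The first step is to record the standard closed form for the Hermite expansion of $\relu$ in the normalized probabilists' basis: $\hat{\relu}_0 = 1/\sqrt{2\pi}$, $\hat{\relu}_1 = 1/2$, $\hat{\relu}_a = 0$ for odd $a \geq 3$, and for even $a \geq 2$,
\[
\hat{\relu}_a \;=\; \frac{(-1)^{a/2-1}\,(a-3)!!}{\sqrt{2\pi\, a!}}.
\]
I would then turn $\hat{\relu}_a^2$ into the ratio of double factorials $(a-1)!!/a!!$ using $a! = (a-1)!!\,a!!$, reducing the estimate to $\hat{\relu}_a^2 = \Theta\bigl(a^{-2}\cdot (a-1)!!/a!!\bigr)$. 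A Stirling calculation applied to $(a-1)!!/a!! = 2^{-a}\binom{a}{a/2} = \Theta(1/\sqrt{a})$ then yields the key asymptotic
\[
\hat{\relu}_a^2 \;=\; \Theta(a^{-5/2}) \quad\text{for even } a \geq 2.
\]

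Plugging this into the tail sum from \cref{lem:norm-expression},
\[
\|g\|^2 \;=\; \Omega\!\left(\sum_{\substack{a \gg m \\ a \text{ even}}} \frac{a^{-5/2}}{\sqrt{a}}\right) \;=\; \Omega\!\left(\sum_{\substack{a \gg m \\ a \text{ even}}} a^{-3}\right) \;=\; \Omega(m^{-2}),
\]
where the last equality uses the usual integral comparison $\sum_{a \geq m} a^{-3} = \Theta(m^{-2})$. Taking square roots gives $\|g\| = \Omega(1/m)$, as claimed.

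The one spot needing care is the Hermite-coefficient asymptotics: double factorials combined with Stirling must be handled to extract exactly the $a^{-5/2}$ rate, and one must verify that the tail begins far enough past $m$ to be safely in the ``$a \gg m$'' regime required by \cref{lem:norm-expression}. Since the asymptotic $\hat{\relu}_a^2 = \Theta(a^{-5/2})$ holds for all even $a \geq 2$ (with explicit constants from Stirling), the cutoff at any fixed multiple of $m$ poses no obstacle, and the remaining calculation is routine.
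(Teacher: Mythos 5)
Your proposal is correct and follows essentially the same route as the paper: apply \cref{lem:norm-expression}, use the asymptotic $\hat{\relu}_a^2 = \Theta(a^{-5/2})$ for even $a$ (which the paper records as $|\hat{\relu}_a| = \Theta(a^{-5/4})$ in \cref{lem:relu-hermite-exp}), and bound the tail sum $\sum_{a \gg m,\, a \text{ even}} a^{-3} = \Theta(m^{-2})$. Your double-factorial closed form is equivalent to the expression $\frac{(-1)^{b+1}\sqrt{(2b)!}}{\sqrt{2\pi}(2b-1)b!2^b}$ derived in the paper's proof of that lemma, so the only difference is that you re-derive the coefficient asymptotics inline rather than citing them.
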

\begin{proof}
	The Hermite coefficients of $\relu$ satisfy $\hat{\phi}_a = \Theta(a^{-5/4})$ (\cref{lem:relu-hermite-exp}). Thus by \cref{lem:norm-expression}, \[ \|g\|^2 = \Omega( \sum_{\substack{a \geq 100m \\a \text{ even}}} a^{-3} ) = \Omega(1/m^2). \]
\end{proof}

\begin{corollary}\label{cor:g-norm-sigmoid}
	Consider $g$ instantiated with $\phi = \sigma$, the standard sigmoid. Then $\|g\| = e^{-O(\sqrt{m})}$.
\end{corollary}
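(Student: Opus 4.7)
The plan is to follow the template of \cref{cor:g-norm-relu}: combine a Hermite-coefficient estimate for $\sigma$ with the formula from \cref{lem:norm-expression}. There is, however, a parity subtlety to handle first. Because $\sigma(x) + \sigma(-x) = 1$ for all $x$, the function $\sigma - \tfrac{1}{2}$ is odd, so every Hermite coefficient $\hat\sigma_a$ with $a \ge 2$ even vanishes. Since \cref{lem:norm-expression} (proved for $m$ even) sums only over even-degree coefficients of $\phi$, it gives the trivial bound $\|g\|^2 = 0$ — and indeed one can verify by pairing the indices $i \leftrightarrow i+m$ that $g \equiv 0$ identically when $\phi = \sigma$ and $m$ is even. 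We therefore work with $m$ odd, in which case an entirely analogous derivation to that of \cref{lem:norm-expression} (with the sign $(-1)^m = -1$ appearing in the pairing step, swapping the roles of even and odd $a$) yields
\[
\|g\|^2 \;=\; \Omega\!\left(\sum_{\substack{a \gg m \\ a\text{ odd}}}\frac{\hat\sigma_a^2}{\sqrt{a}}\right).
\]

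The second step is to produce, for each large odd $m$, an odd $a^\ast = \Theta(m)$ such that $|\hat\sigma_{a^\ast}| \ge \exp(-c\sqrt{m})$ for some absolute constant $c > 0$. Equivalently, we need that the tail mass $\sum_{a > N,\,a\text{ odd}}\hat\sigma_a^2$ does not decay faster than $\exp(-O(\sqrt N))$; by pigeonhole (there are only $O(N)$ odd coefficients in $(N, 2N]$), this forces the pointwise lower bound on $|\hat\sigma_{a^\ast}|$ for some $a^\ast \in (N, 2N]$. That tail mass is exactly the squared $L^2$ error of the best degree-$N$ polynomial approximation to $\sigma$ under the Gaussian measure. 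Since $\sigma$ is analytic in the strip $|\operatorname{Im} z| < \pi$ but, because of its simple poles at $\pm i\pi$, admits no analytic extension to any wider strip, a classical theorem on weighted polynomial approximation says that the best such approximation error decays at rate exactly $\exp(-\Theta(\sqrt N))$, giving the required lower bound on $|\hat\sigma_{a^\ast}|$.

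Combining the two steps, take the odd $a^\ast = \Theta(m)$ produced in step two. Then
\[
\|g\|^2 \;\ge\; \frac{\hat\sigma_{a^\ast}^2}{\sqrt{a^\ast}} \;\ge\; \frac{\exp(-2c\sqrt{m})}{\Theta(\sqrt{m})} \;=\; \exp\bigl(-O(\sqrt{m})\bigr),
\]
and taking square roots yields $\|g\| \ge \exp(-O(\sqrt{m}))$ as claimed.

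The main obstacle is the second step. The upper bound $|\hat\sigma_a| \le \exp(-c_1\sqrt a)$ follows routinely from analyticity in a strip (via $\hat\sigma_a = \tfrac{1}{\sqrt{a!}}\Ex[\sigma^{(a)}(X)]$ together with Cauchy's estimate for $\sigma^{(a)}$ and Stirling's formula). The substantive work is the matching lower bound — showing that this exponential-in-$\sqrt a$ rate is tight — and this is where the location of $\sigma$'s singularities at $\pm i\pi$ is used crucially, either through an explicit saddle-point computation on a Fourier integral representation of $\sigma - \tfrac{1}{2}$ (saddle at $\omega \approx \sqrt a$) or through the non-analyticity argument sketched above.
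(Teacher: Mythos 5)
Your parity observation is correct, and it exposes a genuine error in the paper's own proof of this corollary. Since $\sigma(x)+\sigma(-x)=1$, the function $\sigma-\tfrac12$ is odd, so $\hat\sigma_a=0$ for every even $a\ge 2$; the even-$a$ sum in \cref{lem:norm-expression} is therefore identically zero, and indeed (pairing $i\leftrightarrow i+m$, using $w_{i+m}=-w_i$ and $(-1)^{i+m}=(-1)^i$ for even $m$) one checks that $g\equiv 0$ when $\phi=\sigma$ and $m$ is even. The paper's one-line proof, which substitutes $\hat\sigma_a^2\approx e^{-\sqrt a}$ into that even-$a$ sum, is thus invalid as written, and the statement is false under the standing assumption that $m$ is even. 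Your repair---take $m$ odd, so that the parity condition $(-1)^{m+a}=1$ in the derivation of \cref{lem:norm-expression} selects odd $a$ instead---is exactly the right fix, since the odd coefficients are the nonvanishing ones for $\sigma$. (The ReLU case is unaffected: there the \emph{odd} coefficients vanish for $a\ge 3$ while the even ones are $\Theta(a^{-5/4})$, matching the even-$m$ lemma.) Two caveats: the fix propagates to \cref{cor:ilias-instantiations}(b) and \cref{lem:f-norm-lower-bound-sigmoid}, and you are asserting without verification the odd-$m$, odd-$a$ analogue of the closed form for $T(a,m)=\sum_{t=0}^{m-1}(-1)^t\cos^a(t\pi/m)$ from \cite{da2017basic}, which the paper states only for the even case; this should be checked or replaced by a direct estimate.

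The one substantive gap in your own argument is in step two. The ``no analytic continuation beyond $|\operatorname{Im}z|<\pi$'' argument is a $\limsup$ statement: it yields \emph{infinitely many} $a$ with $|\hat\sigma_a|\ge e^{-c\sqrt a}$, not one in every window $(N,KN]$, because the tail mass $\sum_{a>N}\hat\sigma_a^2$ could a priori be carried by sparse indices. Your pigeonhole therefore needs either matching constants in the upper and lower tail bounds (so that the mass beyond $KN$ is negligible compared to the mass beyond $N$, which requires the sharp constant in the exponent, not merely $\Theta(\sqrt N)$), or the full two-sided pointwise asymptotic $|\hat\sigma_a|=e^{-\Theta(\sqrt a)}$ for all large odd $a$, which is what the saddle-point route you mention (and the paper's citation to \cite{goel2020superpolynomial}) is meant to supply. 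As written, your argument establishes the corollary only for infinitely many odd $m$ rather than for all of them---which happens to suffice for the paper's downstream use in \cref{sec:non-polynomial}, where only ``nonzero for infinitely many $k$'' is needed, but is weaker than the claim as stated.
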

\begin{proof}
	The Hermite coefficients of $\sigma$ asymptotically satisfy $\hat{\phi}_a \simeq e^{-C\sqrt{a}}$ \cite[\S A.2]{goel2020superpolynomial} for some $C$. Thus by \cref{lem:norm-expression}, \[ \|g\|^2 = \Omega( \sum_{\substack{a \geq 100m \\a \text{ even}}} \frac{e^{-\sqrt{a}}}{\sqrt{a}} ). \] The result then follows by the following standard integral approximation: \[ \sum_{t = N}^\infty \frac{e^{-\sqrt{t}}}{\sqrt{t}} \approx \int_{N}^\infty \frac{e^{-\sqrt{t}}}{\sqrt{t}} \ dt = 2e^{-\sqrt{N}}. \]
\end{proof}

We can now translate these into norm lower bounds on $f = \psi \circ g$. For us it suffices to consider $\psi = \tanh : \R \to [-1, 1]$, which is essentially the sigmoid centered at 0. The centering at 0 and the output range being $[-1, 1]$ is what is important to us, because we use $f$ to capture the conditional mean function of a $p$-concept.

\begin{lemma}\label{lem:f-norm-lower-bound-relu}
	Consider $f$ instantiated with $\psi = \tanh$ and $\phi = \relu$. Then $\|f\| = \Omega(1/m^6)$.
\end{lemma}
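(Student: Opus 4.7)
The plan is a Paley--Zygmund (second moment) argument. From Corollary \ref{cor:g-norm-relu} we already have $\|g\|^2 = \Omega(1/m^2)$, so the only new ingredient needed is an upper bound on a higher moment of $g$. Once we have control of $\Ex[g^4]$, Paley--Zygmund will tell us that $|g|$ is of order $\|g\|$ on a non-negligible set, and on that set $\tanh$ is essentially linear and hence $|\tanh(g)|$ is of order $|g|$.

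First I will bound $\Ex[g^4] = O(1)$ via Gaussian Lipschitz concentration. The function $g$ is $1$-Lipschitz in $x$: each summand $\relu(\dotp{x}{w_i})$ is $1$-Lipschitz (since $\relu$ is $1$-Lipschitz and $\|w_i\|_2 = 1$), and $g$ is a signed average of $2m$ such terms with coefficients $\pm 1/(2m)$. Moreover $\Ex_{x \sim \cN(0,I_2)}[g(x)] = 0$: each term $\relu(\dotp{x}{w_i})$ has the same expectation (since $\dotp{x}{w_i} \sim \cN(0,1)$), and $\sum_{i=1}^{2m}(-1)^i = 0$ because $2m$ is even. Gaussian concentration for centered $1$-Lipschitz functions then gives $\Pr[|g| \geq t] \leq 2e^{-t^2/2}$, which integrates to $\Ex[g^4] = O(1)$.

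Second, I will apply Paley--Zygmund to the non-negative random variable $g^2$: with $\Ex[g^2] = \Omega(1/m^2)$ and $\Ex[g^4] = O(1)$,
\[
\Pr\!\left[g^2 \geq \tfrac{1}{2}\Ex[g^2]\right] \geq \tfrac{1}{4}\cdot \frac{(\Ex[g^2])^2}{\Ex[g^4]} = \Omega(1/m^4).
\]
So on an event $E$ of probability $\Omega(1/m^4)$ we have $|g| \geq \|g\|/\sqrt{2} = \Omega(1/m)$. For $m$ large enough this value lies in $[0,1]$, where $|\tanh(z)| \geq |z|/2$. Hence on $E$, $\tanh(g)^2 \geq g^2/4 = \Omega(1/m^2)$, and
\[
\|f\|^2 = \Ex[\tanh(g)^2] \geq \Omega(1/m^2) \cdot \Pr[E] = \Omega(1/m^6),
\]
which implies $\|f\| = \Omega(1/m^3)$ and in particular the stated $\Omega(1/m^6)$.

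I do not expect a serious obstacle; the only two non-routine pieces are verifying that $g$ really is $1$-Lipschitz and centered, which I sketched above, and checking that $\|g\|/\sqrt{2} \leq 1$ so that the linear lower bound $|\tanh(z)| \geq |z|/2$ is applicable on $E$. The latter is immediate since $\|g\| = O(1/m)$, so it holds for all sufficiently large $m$; the small-$m$ regime can be absorbed into the hidden constants of the $\Omega(\cdot)$.
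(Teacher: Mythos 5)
Your proposal is correct, and it proves the slightly stronger bound $\|f\| = \Omega(1/m^3)$ (the paper's own proof also yields this, modulo a harmless conflation of $\|f\|$ with $\|f\|^2$ in its last line). But it takes a genuinely different route. The paper handles the unboundedness of $g$ by truncation: it replaces $\relu$ with $\relu^T = \min(T,\relu)$, shows via \cref{lem:relu-truncation} and a union bound that $g^T$ is close to $g$ both in $L^2$ and in probability for $T = \Theta(m)$, and then uses the crude bound $\Ex[(g^T)^2] \leq t^2 + T^2\Pr[|g^T|>t]$ to get $\Pr[|g^T|>t] \geq (\|g^T\|^2 - t^2)/(T^2-t^2)$, paying a factor $1/T^2 = 1/m^2$ in the anti-concentration probability. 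You instead observe that $g$ is a centered $1$-Lipschitz function of a standard Gaussian, so Borell--TIS gives $\Ex[g^4]=O(1)$, and Paley--Zygmund applied to $g^2$ gives $\Pr[g^2 \geq \tfrac12\Ex[g^2]] = \Omega((\Ex[g^2])^2) = \Omega(1/m^4)$; both routes thus land on the same anti-concentration statement and conclude identically via the linearity of $\tanh$ near $0$. Your argument is shorter, avoids the truncation computation entirely, and extends verbatim to any Lipschitz activation; the paper's truncation argument requires only tail control rather than Lipschitzness, which is why it is the template invoked for polynomially-growing activations in \cref{sec:non-polynomial}. One small note: for the step $|\tanh(z)|\geq|z|/2$ you invoke $\|g\| = O(1/m)$, which is not established anywhere (only the lower bound of \cref{cor:g-norm-relu} is); but it is also not needed, since the triangle inequality gives $\|g\| \leq \|\relu\|_{\cN(0,1)} = 1/\sqrt{2}$, so $\|g\|/\sqrt{2} \leq 1/2 \leq 1$ for every $m$.
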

\begin{proof}
	Ideally we would like to use the norm bound on $g$ to obtain an anti-concentration inequality of the form $\Pr[|g(x)| > t]$, and then translate that into a norm lower bound for $f$, but this is not immediate because $g$ is unbounded. So we introduce the function $g^T$, which is the same as $g$ except with the truncated ReLU, $\relu^T(x) = \min(T, \relu(x))$ ($T$ to be determined), in place of all standard ReLUs. Clearly $|g^T(x)| \leq T$ for all $x$. It is also easy to see by a union bound that \[ \Pr[g(x) \neq g^T(x)] \leq 2m \Pr_{t \sim \cN(0,1)}[\relu(t) \neq \relu^T(t)] \leq 2m e^{-T^2/2}, \]
	since each $w_i$ is a unit vector.
	
	Let $\relu_{w}(x)$ be shorthand for $\relu(\dotp{x}{w})$, and similarly $\relu^T_{w}$. Observe first that \begin{align*}
	    \norm{g - g^T} &= \frac{1}{2m}\norm*{\sum_{i=1}^{2m} (-1)^i ( \relu_{w_i} - \relu^T_{w_i})} \\
	    &\leq \frac{1}{2m} \sum_{i=1}^{2m} \norm{\relu_{w_i} - \relu^T_{w_i})} \\
	    &= \norm{\relu - \relu^T}_{\cN(0, 1)} \\
	    &\leq \sqrt{e^{-\frac{T^2}{2}} \left( T^2 + 1 - \frac{T}{\sqrt{2\pi}} \right)}
	\end{align*} where the third equality again uses the fact the $w_i$ are unit vectors, and the last inequality is \cref{lem:relu-truncation}. By picking $T = \Theta(m)$, this coupled with the fact that $\norm{g} = \Omega(1/m)$ (\cref{cor:g-norm-relu}) tells us that $\norm{g^T} = \Omega(1/m)$ as well.
	
	This bound on $\norm{g^T}$ yields an anti-concentration inequality for $g^T$ as follows: \[ \|g^T\|^2 = \Ex[g^T(x)^2] \leq t^2 \Pr[|g^T(x)| \leq t] + T^2  \Pr[|g^T(x)| > t] = t^2 + (T^2 - t^2)\Pr[|g^T(x)| > t], \] so that \[ \Pr[|g^T(x)| > t] \geq \frac{\|g^T\|^2 - t^2}{T^2 - t^2}. \] Recall that $\Pr[g(x) \neq g^T(x)] \leq 2m e^{-T^2/2}$, so \[ \Pr[|g(x)| > t] \geq \frac{\|g^T\|^2 - t^2}{T^2 - t^2} - 2m e^{-T^2/2}. \] Thus by taking $T = \Theta(m)$ and $t = \Theta(1/m)$, we get that \[\Pr[|g(x)| > \Theta(1/m)] \geq \Omega(1/m^4). \] Thus finally we have \[ \|f\| = \Ex[\tanh(g(x))^2] \geq \tanh^2(\Theta(1/m)) \Omega(1/m^4) \geq \Omega(1/m^6), \] since $\tanh(x) \approx x - x^3$ for small $x$ (by its Taylor series).
\end{proof}

\begin{lemma}[\cite{goel2020superpolynomial}, Appendix A.1]\label{lem:relu-truncation}
For $\relu^T(x) = \min(T, \relu(x))$,
\[ \norm{\relu - \relu^T}_{\cN(0, 1)} \leq \sqrt{e^{-\frac{T^2}{2}} \left( T^2 + 1 - \frac{T}{\sqrt{2\pi}} \right)}. \]
\end{lemma}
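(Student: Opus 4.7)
The plan is a direct calculation. Since $\relu(x) = \max(x,0)$ and $\relu^T(x) = \min(T, \relu(x))$, the two functions agree whenever $x \le T$ and differ by $\relu(x) - \relu^T(x) = x - T$ when $x > T$. Letting $\phi$ and $\bar\Phi$ denote the standard Gaussian density and complementary CDF, this gives
\[
\norm{\relu - \relu^T}_{\cN(0,1)}^2 \;=\; \int_T^{\infty} (x-T)^2 \phi(x)\, dx.
\]

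Next I would expand $(x-T)^2 = x^2 - 2Tx + T^2$ and compute each piece using the standard identities $\int_T^\infty \phi(x)\, dx = \bar\Phi(T)$, $\int_T^\infty x\,\phi(x)\, dx = \phi(T)$ (since $\phi'(x) = -x\phi(x)$), and $\int_T^\infty x^2 \phi(x)\, dx = T\phi(T) + \bar\Phi(T)$ (by one integration by parts, writing $x^2\phi = x\cdot(x\phi)$). Collecting terms yields the clean closed form
\[
\norm{\relu - \relu^T}_{\cN(0,1)}^2 \;=\; (1+T^2)\,\bar\Phi(T) \;-\; T\,\phi(T).
\]

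To match the stated bound, I would then invoke the standard Gaussian tail estimate $\bar\Phi(T) \le \tfrac{1}{2} e^{-T^2/2} \le e^{-T^2/2}$ for $T \ge 0$ (derivable in one line by substituting $x = T + u$ and bounding $e^{-Tu} \le 1$ in the resulting integral). Substituting this together with $\phi(T) = (2\pi)^{-1/2} e^{-T^2/2}$ into the expression above gives
\[
\norm{\relu - \relu^T}_{\cN(0,1)}^2 \;\le\; e^{-T^2/2}\!\left(T^2 + 1 - \frac{T}{\sqrt{2\pi}}\right),
\]
and taking square roots finishes the proof. There is no real obstacle here; the only place to be slightly careful is the sign of the $-T\phi(T)/\sqrt{2\pi}$ term, which is why the tail-probability estimate $\bar\Phi(T) \le e^{-T^2/2}$ (as opposed to the tighter Mills-ratio bound $\bar\Phi(T) \le \phi(T)/T$) is the right tool: it preserves the coefficient $1+T^2$ exactly and leaves the $-T\phi(T)$ term intact to produce the claimed $-T/\sqrt{2\pi}$ inside the parentheses.
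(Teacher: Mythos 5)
Your proposal is correct and follows essentially the same route as the paper's proof: compute $\int_T^\infty (x-T)^2\phi(x)\,dx$ by expanding the square, evaluate the three tail integrals via $\phi'(x)=-x\phi(x)$ and integration by parts to get $(1+T^2)\bar\Phi(T)-T\phi(T)$, and then bound $\bar\Phi(T)\le e^{-T^2/2}$. The only cosmetic difference is that you note the slightly sharper tail bound $\bar\Phi(T)\le\tfrac12 e^{-T^2/2}$, which is not needed for the stated inequality.
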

\begin{proof}
    Let $p(t) = \frac{1}{\sqrt{2\pi}}e^{-t^2/2}$ be the pdf of $\cN(0, 1)$. Then \begin{align*}
	    \norm{\relu - \relu^T}^2_{\cN(0, 1)} &= \Ex_{t \sim \cN(0,1)}\left[\left(\relu(t) - \relu^T(t)\right)^2\right] \\ &= \int_{T}^{\infty} (t - T)^2 p(t)\ dt \\ &= \int_{T}^{\infty} t^2 p(t)\ dt - 2T\int_{T}^{\infty} t p(t)\ dt + T^2 \int_{T}^{\infty} p(t)\ dt 	\end{align*} Noting that $p'(t) = -tp(t)$, we have
	\begin{align*}
	& \int_{T}^\infty t^2 p(t)\ dt  = \int_{T}^\infty -t\, d(p(t)) \\
	&\qquad = -t\,p(x)\bigg|_{T}^\infty + \int_{T}^\infty p(t)\ dt \tag*{\text{(integration by parts)}} \\
	&\qquad = T\, p(T) + \Pr_{t \sim \mathcal{N}(0, 1)}(t > T), \\
	& \int_{T}^\infty t\, p(t)\ dt = -p(t)\bigg|_{T}^\infty = p(T), \\
	& \int_{T}^\infty p(t)\ dt = \Pr_{t \sim \mathcal{N}(0, 1)}(t > T) \le e^{-\frac{T^2}{2}}.
	\end{align*} The claim follows by algebra.
\end{proof}

\begin{lemma}\label{lem:f-norm-lower-bound-sigmoid}
	Consider $f$ instantiated with $\psi = \tanh$ and $\phi = \sigma$. Then $\|f\| = e^{-O(\sqrt{m})}$.
\end{lemma}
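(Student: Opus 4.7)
The proof of this sigmoid case is strictly simpler than the ReLU case (\cref{lem:f-norm-lower-bound-relu}), because $\sigma$ is already bounded on $[0,1]$, so no truncation argument is needed. My plan is to combine the $L^2$ lower bound on $g$ from \cref{cor:g-norm-sigmoid} with a direct anti-concentration argument, exploiting $|g| \leq 1$, and then pass from $g$ to $f = \tanh \circ g$ using the linear behavior of $\tanh$ near zero.

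First I would observe that since $\phi = \sigma$ takes values in $[0,1]$, the function $g$, being the average $\frac{1}{2m}\sum_i (-1)^i \sigma(\dotp{x}{w_i})$ of $\pm$-signed sigmoids, satisfies $|g(x)| \leq 1$ pointwise. This automatically gives the bound
\[
\|g\|^2 = \Ex[g(x)^2] \leq t^2 \Pr[|g(x)| \leq t] + 1 \cdot \Pr[|g(x)| > t] \leq t^2 + \Pr[|g(x)| > t],
\]
for any $t > 0$, and rearranging yields $\Pr[|g(x)| > t] \geq \|g\|^2 - t^2$. By \cref{cor:g-norm-sigmoid}, $\|g\| = e^{-O(\sqrt m)}$, so setting $t = \|g\|/2$ we get $\Pr[|g(x)| > t] \geq \tfrac{3}{4}\|g\|^2 = e^{-O(\sqrt m)}$.

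Next I would translate this anti-concentration for $g$ into a norm lower bound for $f = \tanh \circ g$. Since $\tanh$ is odd and monotonic, $\tanh^2$ is nondecreasing on $[0,\infty)$, so
\[
\|f\|^2 = \Ex[\tanh(g(x))^2] \geq \tanh^2(t) \cdot \Pr[|g(x)| > t].
\]
For the value $t = \|g\|/2 = e^{-O(\sqrt m)}$, which is small, the Taylor expansion $\tanh(t) = t - \Theta(t^3)$ gives $\tanh^2(t) = \Theta(t^2)$. Combining,
\[
\|f\|^2 \geq \Theta(t^2) \cdot \tfrac{3}{4}\|g\|^2 = \Theta(\|g\|^4) = e^{-O(\sqrt m)},
\]
so $\|f\| = e^{-O(\sqrt m)}$ as required.

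I do not anticipate any real obstacles here. The only subtle point is that the constant $C$ hidden inside $e^{-O(\sqrt m)}$ in $\|g\|$ gets at most doubled when passing to $\|f\|$ (through $\|g\|^4$), so the form of the bound $e^{-O(\sqrt m)}$ is preserved. Unlike the ReLU case, there is no need to introduce a truncated activation $\sigma^T$, since $|g|$ is already uniformly bounded by $1$; this is exactly the simplification alluded to earlier in the appendix paragraph noting that bounded $\phi$ makes the calculation ``straightforward and very similar to the $\phi = \sigma$ case.''
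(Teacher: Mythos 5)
Your proposal is correct and follows essentially the same route as the paper: use the boundedness $|g|\leq 1$ to turn the norm lower bound of \cref{cor:g-norm-sigmoid} into anti-concentration for $g$, then lower bound $\|f\|^2 = \Ex[\tanh(g)^2]$ via the quadratic behavior of $\tanh^2$ near zero. The only cosmetic differences are your choice $t=\|g\|/2$ (the paper takes $t=e^{-C\sqrt m}$ for large $C$) and your dropping of the harmless $1/(1-t^2)$ factor in the anti-concentration bound.
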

\begin{proof}
	Here the same approach as above becomes considerably simpler since $|g(x)| \leq 1$ always. The norm bound on $g$ yields the following anti-concentration inequality: \[ \Pr[|g(x)| > t] \geq \frac{\|g\|^2 - t^2}{1 - t^2}. \] In our case, taking $t = e^{-C\sqrt{m}}$ for sufficiently large $C$ and using $\|g\| = e^{-O(\sqrt{m})}$ (\cref{cor:g-norm-sigmoid}) yields \[ \Pr[|g(x)| > e^{-C\sqrt{m}}] = e^{-O(\sqrt{m})}. \] Thus \[ \|f\| = \Ex[\tanh(g(x))^2] \geq \tanh^2(e^{-C\sqrt{m}}) e^{-O(\sqrt{m})} \geq e^{-O(\sqrt{m})}, \] since again $\tanh(x) \approx x - x^3$ for small $x$.
\end{proof}

\section{Approximate degree of ReLUs and sigmoids}\label{app:approx-deg}
Here we give estimates for the $\delta$-approximate degree of ReLUs and sigmoids under the standard Gaussian using bounds on their Hermite coefficients. Recall that we consider units $\phi(\dotp{w}{x})$ with $\|w\|_2 \leq 1$. It is clear that for $\phi = \relu$ and $\phi = \sigma$, the norm only increases monotonically with $\|w\|_2$, so for the purposes of analysis it suffices to consider exactly $\|w\|_2 = 1$.

It is not hard to show that whenever $w$ is a unit vector, the total-degree-$d$ Hermite weight of $\phi(\dotp{w}{x})$ as $x \sim \cN(0, I_n)$ is the same as that of the univariate $\phi(t)$ as $t \sim \cN(0,1)$. (A quick way of seeing this is to note that by rotational symmetry, we may assume WLOG that $w = e_1$, in which case the calculation is very straightforward.)

In what follows, we say $\hat{\phi}_a$ are the Hermite coefficients of $\phi : \R \to \R$ if $\phi = \sum_a \hat{\phi}_a H_a$, where the $H_a$ are the normalized probabilists' Hermite polynomials. We use $\tilde{H}_a$ to denote the un-normalized (i.e.\ monic) Hermite polynomials. (Note that this is somewhat nonstandard notation.)

First we consider ReLUs.
\begin{lemma}\label{lem:relu-hermite-exp}
$\hat{\relu}_0 = 1/\sqrt{2\pi}$, $\hat{\relu}_1 = 1/2$ and for $a \ge 2$, $\hat{\relu}_a = \frac{1}{\sqrt{2\pi a!}}(\tilde{H}_{a}(0) + a\tilde{H}_{a-2}(0))$. In particular, $\hat{\relu}_a = 0$ for odd $a \geq 3$ and $|\hat{\relu}_a| = \Theta(a^{-5/4})$ for even $a$.
\end{lemma}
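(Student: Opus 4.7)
The plan is to compute $\hat{\relu}_a = \langle \relu, H_a\rangle = \tfrac{1}{\sqrt{a!}}\int_0^\infty x\,\tilde H_a(x)\,\varphi(x)\,dx$, where $\varphi$ is the standard Gaussian density, and then read off both the closed form and its asymptotics. The cases $a=0$ and $a=1$ are immediate: $\hat{\relu}_0 = \int_0^\infty x\,\varphi(x)\,dx = 1/\sqrt{2\pi}$ and $\hat{\relu}_1 = \int_0^\infty x^2\,\varphi(x)\,dx = 1/2$ since the second moment of a standard Gaussian is $1$ and the half-line contributes half of it.

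For $a\ge 2$ the plan is to first kill the factor of $x$ via the three-term recurrence for monic Hermite polynomials, $x\,\tilde H_a(x) = \tilde H_{a+1}(x) + a\,\tilde H_{a-1}(x)$, reducing the problem to computing $I_m := \int_0^\infty \tilde H_m(x)\,\varphi(x)\,dx$ for $m=a+1$ and $m=a-1$. For $I_m$ I would use the Rodrigues formula $\tilde H_m(x)\,\varphi(x) = (-1)^m \varphi^{(m)}(x)$ and integrate once: the antiderivative on $(0,\infty)$ vanishes at $+\infty$ and at $0$ equals $(-1)^{m-1}\tilde H_{m-1}(0)\,\varphi(0)$, yielding $I_m = \tilde H_{m-1}(0)/\sqrt{2\pi}$. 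Plugging in $m\in\{a+1,a-1\}$ and dividing by $\sqrt{a!}$ gives exactly $\hat{\relu}_a = \tfrac{1}{\sqrt{2\pi a!}}\bigl(\tilde H_a(0) + a\,\tilde H_{a-2}(0)\bigr)$.

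Using the standard evaluation $\tilde H_a(0) = 0$ for odd $a$ and $\tilde H_{2k}(0) = (-1)^k (2k)!/(2^k k!)$ for even $a$, the vanishing for odd $a\ge 3$ is immediate (both terms are zero). For even $a = 2k$ the two terms come with opposite signs and partially cancel. After substituting and using $(2k-2)!/(2^{k-1}(k-1)!) = 2k\,(2k-2)!/(2^k k!)$ to get a common denominator, the bracket simplifies to $-2k\,(2k-2)!/(2^k k!)$, and after dividing by $\sqrt{(2k)!}$ I obtain
\[
|\hat{\relu}_{2k}| = \frac{\sqrt{(2k)!}}{(2k-1)\,2^k k!\,\sqrt{2\pi}}.
\]
The final step is asymptotics via Stirling (equivalently, the central binomial coefficient estimate $(2k)!/(4^k (k!)^2)\sim 1/\sqrt{\pi k}$), which gives $\sqrt{(2k)!}/(2^k k!)\sim (\pi k)^{-1/4}$; combined with the $1/(2k-1)$ prefactor this yields $|\hat{\relu}_{2k}| = \Theta(k^{-5/4}) = \Theta(a^{-5/4})$.

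The two slightly delicate points I expect are: (i) making sure the sign algebra in the Rodrigues-plus-boundary-term calculation is right, since two minus signs combine to give the clean $+1/\sqrt{2\pi}$ in $I_m$; and (ii) the cancellation in the even case—writing the leading term $\tilde H_{2k}(0)$ and the recurrence term $a\,\tilde H_{2k-2}(0)$ with a common denominator and noticing that they differ only by a factor that produces a lower-order remainder is what reveals the $k^{-5/4}$ decay rather than the naive $k^{-1/4}$ one would guess from either term alone. Everything else is bookkeeping.
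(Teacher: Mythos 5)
Your proposal is correct and follows essentially the same route as the paper: the three-term recurrence to absorb the factor of $x$, the boundary-term evaluation $\int_0^\infty \tilde H_m(x)\varphi(x)\,dx = \tilde H_{m-1}(0)/\sqrt{2\pi}$, the cancellation between $\tilde H_{2k}(0)$ and $2k\,\tilde H_{2k-2}(0)$, and the central-binomial asymptotics $\sqrt{(2k)!}/(2^k k!) \sim (\pi k)^{-1/4}$. Your closed form $|\hat{\relu}_{2k}| = \sqrt{(2k)!}/\bigl((2k-1)\,2^k k!\sqrt{2\pi}\bigr)$ matches the paper's expression exactly.
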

\begin{proof}
We use the following standard recurrence relation: $\tilde{H}_{a+1}(x) = x\tilde{H}_{a}(x) - a\tilde{H}_{a-1}(x)$. For $a \ge 2$,
\begin{align*}
    \hat{\relu}_a &= \frac{1}{\sqrt{2 \pi}}\int_{-\infty}^{\infty} \relu(x) H_a(x) e^{-\frac{x^2}{2}} dx\\
    &= \frac{1}{\sqrt{2 \pi a!}}\int_{0}^{\infty} x \tilde{H}_a(x) e^{-\frac{x^2}{2}} dx \\
    &= \frac{1}{\sqrt{2 \pi a!}}\int_{0}^{\infty} (\tilde{H}_{a + 1}(x) + a\tilde{H}_{a-1}(x)) e^{-\frac{x^2}{2}} dx \\
    &= \frac{1}{\sqrt{2 \pi a!}} (\tilde{H}_{a}(0) + a\tilde{H}_{a-2}(0)).
\end{align*} Since $\tilde{H}_a(0) = 0$ for odd $a$, $\hat{\relu}_a = 0$ as well. For even $a = 2b$ with $b \ge 2$, by standard expressions for $\tilde{H}_a(0)$, we have
\begin{align*}
\hat{\relu}_a &= \frac{1}{\sqrt{2\pi (2b)!}}(\tilde{H}_{2b}(0) + 2b \tilde{H}_{2b - 2}(0))\\
&= \frac{1}{\sqrt{2\pi (2b)!}} \left((-1)^b\frac{(2b)!}{b!2^b} + 2b (-1)^{b-1}\frac{(2b - 2)!}{(b-1)!2^{b-1}}\right)\\
&= \frac{(-1)^b\sqrt{(2b)!}}{\sqrt{2\pi}b! 2^b} \left(1 - \frac{2b}{2b-1}\right)\\
&= \frac{(-1)^{b+1}\sqrt{(2b)!}}{\sqrt{2\pi}(2b - 1)b! 2^b} \\
&\eqsim \frac{(-1)^{b+1}}{\sqrt{2\pi}(2b - 1)(2b)^{1/4}} \\
&\eqsim \frac{(-1)^{b+1}}{b^{5/4}}\\
\end{align*}
Here the second inequality follows from the fact $\binom{n}{n/2} \eqsim \frac{2^{n/2}}{\sqrt{n}}$.
\end{proof}

\begin{corollary}
    The $\delta$-approximate degree of $\relu$ under $\cN(0,1)$ is $O(\delta^{-4/3})$.
\end{corollary}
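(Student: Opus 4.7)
The plan is to exploit the Hermite coefficient bound just established in \cref{lem:relu-hermite-exp} together with Parseval's identity. Since $\{H_a\}$ is an orthonormal basis of $L^2(\R, \cN(0,1))$ and $\relu = \sum_{a \geq 0} \hat{\relu}_a H_a$, the natural candidate approximating polynomial of degree $d$ is the Hermite truncation $p_d = \sum_{a \leq d} \hat{\relu}_a H_a$. By orthonormality,
\begin{equation*}
\norm{\relu - p_d}_{\cN(0,1)}^2 = \sum_{a > d} \hat{\relu}_a^2.
\end{equation*}

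Next I would plug in the asymptotic estimate from \cref{lem:relu-hermite-exp}: $\hat{\relu}_a = 0$ for odd $a \geq 3$, and $|\hat{\relu}_a| = \Theta(a^{-5/4})$ for even $a \geq 2$. Hence $\hat{\relu}_a^2 = \Theta(a^{-5/2})$ for even $a$, and the tail sum becomes
\begin{equation*}
\sum_{a > d} \hat{\relu}_a^2 \;=\; \Theta\!\left( \sum_{\substack{a > d \\ a \text{ even}}} a^{-5/2} \right) \;=\; \Theta(d^{-3/2}),
\end{equation*}
by the standard integral comparison $\sum_{a > d} a^{-5/2} \asymp \int_d^\infty t^{-5/2}\, dt = \tfrac{2}{3} d^{-3/2}$.

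Finally, to ensure $\norm{\relu - p_d} \leq \delta$ it suffices that $d^{-3/2} = O(\delta^2)$, i.e.\ $d = O(\delta^{-4/3})$, which yields the claimed bound on the $\delta$-approximate degree.

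There is no real obstacle here: the work was already done in \cref{lem:relu-hermite-exp}, which pinned down the exact decay rate of the Hermite coefficients. The only mildly subtle point is remembering that the odd coefficients $\hat{\relu}_a$ for $a \geq 3$ vanish, so the sum is effectively over even indices only, but this only affects constants and does not change the $\Theta(d^{-3/2})$ asymptotic.
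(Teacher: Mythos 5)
Your proposal is correct and is essentially identical to the paper's proof: both truncate the Hermite expansion at degree $d$, use Parseval with the $\Theta(a^{-5/4})$ coefficient decay from \cref{lem:relu-hermite-exp} to get a tail of $\Theta(d^{-3/2})$, and solve $d^{-3/2} = O(\delta^2)$. No differences worth noting.
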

\begin{proof}
    Let $p$ denote the the Hermite expansion of $\relu$ truncated at degree $d$. By the fact that $|\hat{\relu}_a| = \Theta(a^{-5/4})$ for even $a$ (and 0 for odd $a$), we see that \begin{align*}
        \norm{p - \relu}^2 &= \sum_{a > d} \hat{\relu}_a^2 \\
        &= \sum_{\substack{a > d\\a \text{ even}}} \Theta(a^{-5/2}) \\
        &= \Theta(d^{-3/2}).
    \end{align*} For this to be at most $\delta^2$, we only need $d = O(\delta^{-4/3})$.
\end{proof}

Now we turn to sigmoids. Let $\sigma$ denote the standard sigmoid, i.e.\ the logistic function $\sigma(t) = 1/(1 + e^{-t})$.
\begin{lemma}
For all sufficiently large $a$, $\hat{\sigma}_a = e^{-\Omega(\sqrt{a})}$.
\end{lemma}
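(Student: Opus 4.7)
The bound $|\hat\sigma_a|=e^{-\Omega(\sqrt a)}$ instantiates the classical principle that Hermite coefficients (under the standard Gaussian) of a function analytic in a horizontal strip of half-width $\eta_0$ around the real axis decay at rate $e^{-\eta\sqrt{2a}}$ for every $\eta<\eta_0$. The sigmoid $\sigma(z)=1/(1+e^{-z})$ has its nearest singularities at $z=\pm i\pi$ (where $1+e^{-z}=0$), so its maximal strip of analyticity has half-width $\pi$, yielding the claim (in fact with constant approaching $\pi\sqrt 2$ in the exponent).

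\textbf{Execution.} The cleanest route is to cite an existing theorem of this flavor---for example Boyd's ``Asymptotic coefficients of Hermite function series,'' \emph{J.\ Comput.\ Phys.}\ 54 (1984)---and verify its hypotheses for $\sigma$. Analyticity in $|\Im z|<\pi$ is manifest. For the required uniform boundedness on proper substrips, write $z=t+i\eta$ with $|\eta|<\pi$; a direct computation gives
\[
|1+e^{-z}|^2 \;=\; 1 + 2e^{-t}\cos\eta + e^{-2t},
\]
whose infimum over $t\in\R$ is $1$ when $|\eta|\leq\pi/2$ and $\sin^2\eta$ when $\pi/2<|\eta|<\pi$ (attained at $t=\log|\cos\eta|$). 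Hence $|\sigma(t+i\eta)|\leq\max(1,1/|\sin\eta|)$ uniformly in $t$, and Boyd's theorem then yields $|\hat\sigma_a|\leq C_\eta\, e^{-\eta\sqrt{2a}}$ for every $\eta<\pi$, which in particular is $e^{-\Omega(\sqrt a)}$.

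\textbf{Self-contained sketch and main obstacle.} A self-contained alternative shifts the contour of $\hat\sigma_a = (2\pi)^{-1/2}\int\sigma(t)H_a(t)e^{-t^2/2}\,dt$ to $\Im z=\eta$ for some fixed $0<\eta<\pi$, and bounds the shifted integrand via the Plancherel--Rotach asymptotics. In the oscillatory regime $|t|<\sqrt{2a+1}$, the Hermite function $H_a(t)e^{-t^2/4}$ is a slowly-varying amplitude of size $O(a^{-1/4})$ times $\cos(\Phi_a(t))$, where the WKB phase satisfies $\Phi_a'(t)\sim\sqrt{2a+1-t^2}$; the analytic extension of this oscillation to the shifted line picks up a multiplicative factor of magnitude $\asymp e^{-\eta\sqrt{2a+1-t^2}}$, while outside the turning points the Hermite function decays super-exponentially. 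Integrating this against the uniform substrip bound on $\sigma$ gives $|\hat\sigma_a|\leq C_\eta e^{-\eta\sqrt{2a}}$; letting $\eta\to\pi$ completes the proof. The main technical obstacle is making the Plancherel--Rotach asymptotics rigorous at complex arguments (controlling $H_a(t+i\eta)$ for all $t$, especially near and beyond the turning points $\pm\sqrt{2a+1}$)---this is standard but fiddly, which is why simply citing Boyd is attractive.
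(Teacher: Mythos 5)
Your proposal is correct and takes essentially the same route as the paper, which likewise derives the bound from the classical complex-analytic results on Hermite coefficient decay for functions analytic in a strip (citing Hille 1940 and Boyd 1984, via a computation done for $\tanh'$ in prior work); you simply fill in more of the hypothesis-checking (singularities of $\sigma$ at $\pm i\pi$, uniform substrip bounds). One inconsequential slip: the infimum of $|1+e^{-z}|^2$ for $\pi/2<|\eta|<\pi$ is attained at $t=-\log|\cos\eta|$, not $t=\log|\cos\eta|$.
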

\begin{proof}
    Upper bounds on the Hermite coefficients of sigmoidal funtions are known to follow from classic results in the complex analysis of Hermite series \cite{hille1940contributions, boyd1984asymptotic}. We refer to \cite[Corollary F.7.1]{panigrahi2019effect}, where this computation is done for $\tanh'(x) = 1 - \tanh^2(x)$. The calculation is very similar for $\sigma$ (in fact, $\sigma$ is just an affine shift of $\tanh$).
\end{proof}

\begin{corollary}
    The $\delta$-approximate degree of $\sigma$ under $\cN(0,1)$ is $\tilde{O}(\log^2 1/\delta)$.
\end{corollary}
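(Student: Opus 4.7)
The plan is to essentially mirror the ReLU calculation from the preceding corollary, but using the much faster decay of the sigmoid's Hermite coefficients provided by the preceding lemma. Let $p$ denote the truncation of the Hermite expansion of $\sigma$ at degree $d$, so that by Parseval,
\[
\|p - \sigma\|^2 = \sum_{a > d} \hat\sigma_a^2.
\]
By the lemma, there is a constant $c > 0$ and a threshold $a_0$ such that $|\hat\sigma_a| \leq e^{-c\sqrt{a}}$ for all $a \geq a_0$, so for any $d \geq a_0$,
\[
\|p - \sigma\|^2 \;\leq\; \sum_{a > d} e^{-2c\sqrt{a}}.
\]

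Next I would bound this tail sum by a standard integral approximation. Using the substitution $u = \sqrt{a}$ (so $da = 2u\,du$), we have
\[
\sum_{a > d} e^{-2c\sqrt{a}} \;\leq\; \int_d^\infty e^{-2c\sqrt{a}}\,da \;=\; \int_{\sqrt{d}}^\infty 2u\, e^{-2cu}\,du \;=\; O\!\bigl(\sqrt{d}\, e^{-2c\sqrt{d}}\bigr),
\]
where the last step is a routine integration by parts. This is the same kind of tail bound already invoked in \cref{cor:g-norm-sigmoid} for a closely related sum.

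It remains to choose $d$ so that $\sqrt{d}\, e^{-2c\sqrt{d}} \leq \delta^2$. Taking logarithms, this is equivalent to
\[
2c\sqrt{d} \;\geq\; 2\log(1/\delta) + \tfrac{1}{2}\log d,
\]
which is satisfied by $\sqrt{d} = \Theta(\log(1/\delta) + \log\log(1/\delta)) = \tilde{\Theta}(\log(1/\delta))$, i.e.\ $d = \tilde O(\log^2 (1/\delta))$. Substituting back gives $\|p - \sigma\| \leq \delta$, establishing the claimed approximate-degree bound.

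There is no real obstacle here; the only mild subtlety is the bookkeeping of the logarithmic factor coming from the $\sqrt{d}$ prefactor in the tail integral, which is exactly what the tilde in $\tilde O$ absorbs. The proof reduces to plugging the lemma's coefficient bound into the tail sum and solving $e^{-2c\sqrt{d}} \lesssim \delta^2/\sqrt{d}$ for $d$.
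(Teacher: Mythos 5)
Your proof is correct and follows the same route as the paper: truncate the Hermite expansion at degree $d$, bound the tail $\sum_{a>d}\hat\sigma_a^2$ by $O(\sqrt{d}\,e^{-\Omega(\sqrt{d})})$ via an integral comparison, and solve for $d=\tilde{O}(\log^2 1/\delta)$. Your writeup just makes the integration and the choice of $d$ slightly more explicit than the paper does.
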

\begin{proof}
    Let $p$ denote the Hermite expansion of $\sigma$ truncated at degree $d$. Observe that \begin{align*}
        \norm{\sigma - p}^2 &= \sum_{a > d} \hat{\sigma}_a^2 \\
        &= \sum_{a > d} e^{-\Omega(\sqrt{a})} \\
        &= \Theta(\sqrt{d}e^{-\Omega(\sqrt{d})}),
    \end{align*} which is at most $\delta^2$ for $d = \tilde{O}(\log^2 1/\delta)$.
\end{proof}

\section{Frank--Wolfe convergence guarantee}\label{app:fw-proof}
Here we provide a self-contained proof of \cref{thm:fw-guarantee}, restated here. In fact, we generalize the analysis to handle any constant factor approximation to the optimum, meaning that in the Frank--Wolfe subproblem of \cref{alg:fw-generic}, we only require \begin{equation}
 \inn{s, -\nabla \cfunc(z_t) } \geq \alpha \max_{s' \in \cZ'}\inn{s', -\nabla \cfunc(z_t)} - \frac{1}{2} \delta \gamma_t C_\cfunc \label{eq:constant-factor-opt} \end{equation} for some constant $\alpha \leq 1$. We closely follow \cite[Appendix A]{jaggi2013revisiting}, noting the differences in our slightly more general setup (the standard setup has $\cZ' = \cZ$, and $\alpha = 1$).
\begin{theorem}
	Let $\cZ' \subseteq \cZ$ be convex sets, and let $\cfunc : \cZ \to \R$ be a $\beta$-smoothly convex function. Let $C_p = \beta \diam(\cZ)^2$. Suppose that $z^* \in \cZ'$ achieves $\min_{z' \in \cZ'}p(z')$. For every $t$, the iterates of \cref{alg:fw-generic} (modified to work with \cref{eq:constant-factor-opt}) satisfy \[ \cfunc(z_t) -  \cfunc(z^*) \leq \frac{2C_\cfunc}{\alpha^2(t + 2)}(1 + \delta). \]
\end{theorem}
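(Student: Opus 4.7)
The plan is to follow the textbook Frank--Wolfe convergence analysis (cf.\ Jaggi, ``Revisiting Frank--Wolfe''), adapted to our ``improper'' two-set setup ($\cZ' \subseteq \cZ$, iterates in $\cZ$, optimality measured against $\cZ'$) and to a multiplicatively approximate linear-minimization oracle. Throughout, write $h_t := \cfunc(z_t) - \cfunc(z^*)$ for the primal gap, which is what we wish to bound.

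First, I would apply $\beta$-smoothness of $\cfunc$ to the update $z_{t+1} - z_t = \gamma_t(s - z_t)$. Since the oracle returns $s \in \cZ$ and $z_t \in \cZ$ (the iterate stays in $\cZ$ by convexity of $\cZ$ and an easy induction), we have $\|s - z_t\| \le \diam(\cZ)$, so smoothness yields
\[ \cfunc(z_{t+1}) \le \cfunc(z_t) + \gamma_t \langle \nabla \cfunc(z_t), s - z_t\rangle + \tfrac{1}{2}\gamma_t^2 C_\cfunc. \]

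Second, I would combine the approximate-oracle condition \eqref{eq:constant-factor-opt} with convexity of $\cfunc$. Because $z^* \in \cZ'$, the oracle gives
\[ \langle s, -\nabla \cfunc(z_t)\rangle \ge \alpha \langle z^*, -\nabla \cfunc(z_t)\rangle - \tfrac{1}{2}\delta\gamma_t C_\cfunc, \]
while convexity gives $\langle \nabla \cfunc(z_t), z_t - z^*\rangle \ge h_t$. Chaining these through a careful decomposition of $\langle \nabla \cfunc(z_t), s - z_t\rangle$ produces an inequality bounding the per-step inner product in terms of $-h_t$ (plus $\alpha$- and $\delta$-dependent slacks). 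Substituting back into smoothness yields a recurrence of the shape
\[ h_{t+1} \le (1 - c\,\gamma_t)\,h_t + \tfrac{1}{2}\gamma_t^2 K, \]
with constants $c, K$ depending on $\alpha, C_\cfunc, \delta$. For $\alpha = 1$ the clean choice is $c = 1, K = C_\cfunc(1 + \delta)$; for $\alpha < 1$ one absorbs the $(1-\alpha)$ slack into the additive term, which is where the $1/\alpha^2$ in the final bound ultimately enters.

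Finally, I would induct on $t$ with the ansatz $h_t \le \frac{2C_\cfunc(1+\delta)}{\alpha^2(t+2)}$. The base case $t = 0$ follows from applying smoothness at $z^*$ together with $\|z_0 - z^*\| \le \diam(\cZ)$. The inductive step reduces, after plugging in $\gamma_t = 2/(t+2)$, to the elementary inequality $(t+1)(t+3) \le (t+2)^2$, which holds for all $t \ge 0$.

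The most delicate step is the second one: the multiplicative $\alpha$-approximation does not bound the usual Frank--Wolfe duality gap $\langle z_t - s, \nabla \cfunc(z_t)\rangle$ cleanly, so one must decompose the oracle guarantee around $z_t$ and show that the only persistent $\alpha$-dependence is a multiplicative loss on $h_t$, with the remainder absorbed into the quadratic error term. This is the technical heart of why the rate degrades by precisely $1/\alpha^2$ rather than something worse.
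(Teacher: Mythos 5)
Your overall route is the same as the paper's (\cref{app:fw-proof}): apply smoothness to the update, test the oracle guarantee against $z^*$ and combine with convexity to get a contraction on the primal gap $h_t$, then induct on the resulting recurrence. The paper packages the middle step through the duality-gap function $q(z) = \max_{s\in\cZ'}\inn{z-s,\nabla\cfunc(z)}$ and weak duality $q(z_t)\ge h_t$, but that is only cosmetically different from your direct use of $\inn{\nabla\cfunc(z_t), z_t - z^*}\ge h_t$. For $\alpha=1$ (the only case the main text actually uses) your sketch is correct and matches the paper.

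For general $\alpha<1$, however, your final step does not close as claimed. The recurrence you arrive at is $h_{t+1}\le(1-\alpha\gamma_t)h_t+\tfrac{1}{2}\gamma_t^2C_\cfunc(1+\delta)$; with $\gamma_t=2/(t+2)$ and the ansatz $h_t\le\frac{2C_\cfunc(1+\delta)}{\alpha^2(t+2)}$, the inductive step reduces not to $(t+1)(t+3)\le(t+2)^2$ but to $(t+2)\le\alpha(2-\alpha)(t+3)$, which fails for all large $t$ once $\alpha<1$ (and for $\alpha<1/2$ this step size only yields a $t^{-2\alpha}$ rate, so no choice of constant rescues the $1/(t+2)$ decay). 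The paper resolves this by changing the step size to $\gamma_t=\frac{2}{\alpha(t+2)}$ in the general-$\alpha$ analysis, which restores exactly the $(t+1)(t+3)\le(t+2)^2$ computation and is where the $1/\alpha^2$ actually comes from. Separately, you correctly flag that decomposing the oracle guarantee around $z_t$ leaves a term $(1-\alpha)\inn{\nabla\cfunc(z_t),z_t}$ with no sign control, but you only assert it can be absorbed; neither you nor, admittedly, the paper discharges it (the paper's passage from \cref{eq:constant-factor-opt} to the corresponding bound on $\inn{s-z_t,-\nabla\cfunc(z_t)}$ is likewise only valid at $\alpha=1$ or under a nonnegativity assumption on the maximized inner product).
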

\begin{proof}
	Define the duality gap function $q : \cZ \to \R$ as \[ q(z) = \max_{s \in \cZ'} \inn{z - s, \nabla p(z)}. \] Notice that $q$ takes in any $z \in \cZ$ but maximizes only over $s \in \cZ'$. By convexity of $p$ over $\cZ$, we know that for all $z \in \cZ, s \in \cZ'$, $p(z) + \inn{s - z, \nabla p(z)} \leq p(s)$, meaning that $p(z) - p(s) \leq q(z)$. In particular, $p(z) - p(z^*) \leq q(z)$, so that $q(z)$ always provides an upper bound on the gap between $p(z)$ and $p(z^*)$ --- this is weak duality.
	
	Next we establish the following guarantee on the progress made in each step, which corresponds to Lemma 5 in Jaggi's proof.
	\begin{claim} Let the $t\th$ step be $z_{t+1} = z_t + \gamma(s - z_t)$, where $z_t, z_{t+1}, s \in \cZ$, $\gamma \in[0, 1]$ is arbitrary, and $s$ satisfies \[ \inn{s, -\nabla p(z_t)} \geq \alpha \max_{s' \in \cZ'} \inn{s', -\nabla p(z_t)} - \frac{1}{2}\delta \gamma C_p. \]  Then we have \[ p(z_{t+1}) \leq p(z_t) - \alpha \gamma q(z_t) + \frac{\gamma^2}{2}C_p(1 + \delta). \]
	\end{claim}
	To see this, first note that because $p$ is $\beta$-smoothly convex, \begin{align*}
	p(z_{t+1}) &= p(z_t + \gamma(s - z_t)) \\
	&\leq p(z_t) + \gamma \inn{s - z_t, \nabla p(z_t)} + \frac{\gamma^2}{2}C_p. 
	\end{align*} And from the way $s \in \cZ$ was picked, we have \begin{align*}
	\inn{s - z_t, -\nabla p(z_t)} &\geq \alpha \max_{s' \in \cZ'}\inn{s' - z_t, -\nabla p(z_t)} - \frac{1}{2}\delta \gamma C_p \\
	&= \alpha q(z_t) - \frac{1}{2}\delta \gamma C_p.
	\end{align*} The claim now follows.
	
	As a consequence of the claim, we can say \begin{align*}
	p(z_{t+1}) - p(z^*) &\leq p(z_t) - p(z^*) - \gamma q(z_t) +  \frac{\gamma^2}{2}C_p(1 + \delta) \\
	&\leq (1 - \alpha \gamma) (p(z_t) - p(z^*)) +  \frac{\gamma^2}{2}C_p(1 + \delta),
	\end{align*} since $q(z_t) \geq p(z_t) - p(z^*)$ (weak duality). Taking $\gamma = \gamma_t = \frac{2}{\alpha(t+2)}$, the following bound can now by proven by induction on $t$: \[ p(z_t) - p(z^*) \leq \frac{2}{\alpha^2(t+2)} C_p(1 + \delta). \] This proves the theorem. 
\end{proof}

\section{Relationship between Boolean 0-1 loss and real-valued correlation loss}\label{app:boolean-loss}
Let $\cD$ be a distribution on $\R^n \times \R$. Our lower bound applies against agnostic learners that satisfy \cref{as:base-learner}, with a real-valued correlation guarantee, i.e.\ learners that learn a class $\cH$ by outputting $f : \R^n \to \R$ such that \begin{equation}
    \Ex_{(x, y) \sim \cD}[f(x) y] \geq \max_{g \in \cH} \Ex_{(x, y) \sim \cD}[g(x) y] - \epsilon.
\end{equation}

In the Boolean setting, where the labels are $\{\pm 1\}$-valued, we have a distribution $P$ on $\R^n \times \{\pm 1\}$. A learner is said to agnostically learn $\cH$ in terms of 0-1 loss if it is able to output $f : \R^n \to \{\pm 1\}$ such that \begin{align*}
    \Pr_{(a, b) \sim P}[f(a) \neq b] \leq \min_{g \in \cH} \Pr_{(a, b) \sim P}[g(a) \neq b] + \epsilon,
\end{align*} or equivalently \begin{align*}
    \Ex_{(a, b) \sim P}[f(a) b] \geq \max_{g \in \cH} \Pr_{(a, b) \sim P}[g(a)b] - \epsilon/2,
\end{align*} since $\Ex_{(a, b) \sim P}[f(a) b] = 1 - 2\Pr_{(a, b) \sim P}[f(a) \neq b]$. (The latter formulation has the benefit of making sense even for real-valued $f : \R^n \to \R$.)

It is not obvious that a learner $L$ of the above kind (with a Boolean 0-1 loss guarantee) gives us a real-valued correlation loss guarantee, because it only knows how to operate on distributions $P$ on $\R^n \times \{\pm 1\}$ (with Boolean labels), not distributions $\cD$ on $\R^n \times \R$ (with arbitrary real labels). Moreover, in the SQ setting, we must be able to translate $L$'s queries to $P$, which are of the form $\phi : \R^n \times \{\pm 1\} \to \R$, into queries to $\cD$. We claim that both of these difficulties can be gotten around. We will show that if $\cD$ has bounded labels, say in $[-C, C]$, we can construct a distribution $P$ on $\R^n \times \{\pm 1\}$ and simulate $L$ on $P$ to obtain a correlation loss guarantee wrt $\cD$.

Indeed, let $D$ denote the marginal of $\cD$ on $\R^n$; for us, $D$ is always $\cN(0, I_n)$. Then $P$ can be constructed simply as follows: draw $a \sim D$, and then randomly pick $b \in \{\pm 1\}$ such that $\Ex[b|a] = (\Ex_{(x,y) \sim \cD}[y|x=a])/C$. (One could think of this as the ``$p$-concept trick''.) Equivalently, pick \[ b = \begin{cases} 1 &\text{ with probability } \frac{1 + (\Ex_{(x,y) \sim \cD}[y|x=a])/C}{2} \\ -1 &\text{ otherwise} \end{cases} \] One can easily see that for any $f : \R^n \to \R$, \[ \Ex_{(a, b) \sim P}[f(a) b] = \frac{1}{C} \Ex_{(x, y) \sim \cD}[f(x) y], \] so that using $L$ to learn up to 0-1 error $\epsilon$ gives a correlation loss guarantee up to $C \epsilon/2$. It remains to show that we can indeed simulate $L$'s queries to $P$ using only SQ access to $\cD$. For any query $\phi : \R^n \times \{\pm 1\} \to \R$, observe that (since the marginal of $P$ on $\R^n$ is also $D$) \begin{align*}
    \Ex_{(a, b) \sim P} [\phi(a, b)] &= \Ex_{a \sim D} \left[ \phi(a, 1) \frac{1 + (\Ex_{(x,y) \sim \cD}[y|x=a])/C}{2} + \phi(a, -1) \frac{1 - (\Ex_{(x,y) \sim \cD}[y|x=a])/C}{2} \right] \\
    &= \frac{1}{2}\Ex_{a \sim D}[\phi(a, 1) + \phi(a, -1)] + \frac{1}{2C}\Ex_{(x, y) \sim \cD} [(\phi(x, 1) - \phi(x, -1) y].
\end{align*} This expression can be computed using two statistical queries to $\cD$ (or even just one, since we know the marginal $D$).

In our reduction (\cref{thm:boosting-surloss}), we end up using the base learner on labeled distributions $\cD$ where the labels correspond to the loss functional's gradient; when using surrogate loss, the label for $x$ is $\psi(f^*(x)) - \psi(f(x))$. We see that this is indeed bounded in $[-2, 2]$, since $\psi : \R \to [-1,1]$. Recall that in solving the Frank--Wolfe subproblem we needed to worry about simulating SQ access to this $\cD$ using only SQ access to the true $D_{\psi \circ f^*}$ (see \cref{eq:rewriting-fw} and surrounding discussion). Here we actually have a further layer: we need to simulate SQ access to $P$ using SQ access to $\cD$, itself simulated using actual SQ access to $D_{\psi \circ f^*}$. But it is easily verified that by the argument just outlined, no trouble arises here, and that one can in fact also ``directly'' simulate $P$ using $D_{\psi \circ f^*}$ by the same argument as used for \cref{eq:rewriting-fw}.

\section{Relationship between square loss and correlation loss for ReLUs}\label{app:corr-loss}

Let $\cD$ be a distribution on $\R^n \times \R$, and assume the labels are bounded in $[-C, C]$. Our lower bounds apply to agnostic learners that satisfy \cref{as:base-learner}, with a guarantee in terms of correlation, where the output hypothesis $f$ must satisfy \[ \Ex_{(x, y) \sim \cD}[f(x) y] \geq \max_{g \in \cH} \Ex_{(x, y) \sim \cD}[g(x) y] - \epsilon. \] But agnostic learning of real-valued functions is usually phrased in terms of square loss: \[ \Ex_{(x, y) \sim \cD}[(f(x) - y)^2] \leq \min_{g \in \cH} \Ex_{(x, y) \sim \cD}[(g(x) - y)^2] + \epsilon'. \] Here we show that for the class of ReLUs, $\cH = \cH_{\relu}$, an agnostic learner $L$ with a square loss guarantee can be used to satisfy \cref{as:base-learner}. Fundamentally, this amounts to working out a geometric relationship between distances and projections in our function space, and much of the following argument can be viewed as a somewhat careful elaboration of what, in the familiar Euclidean setup, is more easily visualized.

For simplicity, throughout this section we will scale the class $\cH_{\relu}$ so that the maximum norm of any function is 1: \[ \cH = \cH_{\relu} = \{ \pm \sqrt{2} \relu(\dotp{u}{x}) \mid \|u\|_2 \leq 1\}. \] An important property of this class is that we can always scale a function $h \in \cH$ to have any desired norm in $[0, 1]$ without leaving the class. That is, for any nonzero $h \in \cH$ and any $\lambda \in [0, 1]$, $\frac{\lambda}{\norm{h}}h \in \cH$. This follows simply from the fact that $\norm{\relu(\dotp{u}{x})} = \|u\|_2 / \sqrt{2}$. We can think of this as saying that $\cH$ is a norm-bounded section of a convex cone.

Let $f_\cmf(x) = \Ex[y|x]$. Let $h_{\sq}$ be a minimizer over all $h \in \cH$ of the squared loss, $\Ex_{(x, y) \sim \cD}[(h(x) - y)^2]$. An equivalent and more convenient view is that this is a minimizer of the squared distance $\norm{h - f_\cmf}^2$, since \[ \norm{h - f_\cmf}^2 = \norm{h}^2 - 2\inn{h, f_\cmf} + \norm{f_\cmf}^2 = \Ex_{\cD}[(h(x) - y)^2] + \norm{f_\cmf}^2 - \Ex_{\cD}[y^2], \] and the latter terms are independent of $h$. This view is particularly important since it, combined with the fact that $\cH$ is essentially a bounded convex cone, gives us an orthogonal projection theorem. Specifically, it is the case that the norm of $h_\sq$ must be the length of the projection of $f_\cmf$ onto the line $\lambda h_\sq$ for $\lambda \in [0, 1]$ (assuming this length is at most 1; otherwise, the norm is 1). In other words, \begin{equation} \norm{h_\sq} = \min \{ \inn{\frac{h_\sq}{\norm{h_\sq}}, f_\cmf}, 1 \}. \label{eq:projection-thm} \end{equation} This can be seen by asking: for what $\lambda \in [0, 1]$ is $\norm{\frac{\lambda}{\norm{h_\sq}}h_\sq - f_\cmf}$ minimized? (The point being that $h_\sq$ could be rescaled to have norm $\lambda$.) By writing this as \[ \norm{\frac{\lambda}{\norm{h_\sq}}h_\sq - f_\cmf}^2 = \left(\lambda - \inn{\frac{h_\sq}{\norm{h_\sq}}, f_\cmf}\right)^2 + \norm{f_\cmf}^2 - \inn{\frac{h_\sq}{\norm{h_\sq}}, f_\cmf}^2, \] the observation follows immediately.\footnote{Note that here we are assuming $\inn{h_\sq, f_\cmf} \geq 0$ WLOG, since otherwise we would consider $-h_\sq$.} This projection theorem also tells us that $h_\sq = 0$ iff $f_\cmf$ has no projection onto any $h \in \cH$, i.e.\ $\inn{h, f_\cmf} = 0$ for all $h \in \cH$.\footnote{For another way to see this, for any nonzero $h \in \cH$, expand $\norm{\lambda h - f_\cmf}^2 \geq \norm{0 - f_\cmf}^2$ and let $\lambda \to 0$.}

Let $h_\cor$ be a maximizer of the correlation, $\Ex_{(x, y) \sim \cD}[h(x)y] = \inn{h, f_\cmf}$. We may clearly assume that $h_\cor$ has the maximum possible norm, which is 1. We claim that in fact, $h_\cor$ can be taken to be $h_\sq / \norm{h_\sq}$ (assuming $h_\sq \neq 0$; otherwise, $h_\cor = 0$ as well since, as noted, this means $\inn{h, f_\cmf} = 0$ for all $h \in \cH$). To see why, first assume $h_\sq \neq 0$ and use the fact that for any nonzero $h \in \cH$, the square loss achieved by $\frac{\norm{h_\sq}}{\norm{h}}h$ (i.e.\ $h$ scaled to have $h_\sq$'s norm) cannot be better than that of $h_\sq$ itself. Thus by an algebraic manipulation we have \begin{align*}
\norm{h_\sq - f_\cmf}^2 &\leq \norm*{\frac{\norm{h_\sq}}{\norm{h}}h - f_\cmf}^2 \\
\implies \inn{\frac{h_\sq}{\norm{h_\sq}}, f_\cmf} &\geq \inn{\frac{h}{\norm{h}}, f_\cmf} \geq \inn{h, f_\cmf}.
\end{align*} Since this holds for any $h \in \cH$, we may take $h_\cor = h_\sq / \norm{h_\sq}$.

Now suppose we have an agnostic learner in terms of square loss that returns $h$ such that \[ \norm{h - f_\cmf}^2 \leq \norm{h_\sq - f_\cmf}^2 + \epsilon'. \] For a suitable choice of $\epsilon'$ (depending on the final desired $\epsilon$), we would like to say that $h/\norm{h}$ achieves correlation that is $\epsilon$-competitive with $h_\cor$. Indeed, if $h_\sq = 0$ this is trivial, since as noted this means $\inn{h, f_\cmf} = 0$ for all $h \in \cH$. Otherwise, by comparing $\frac{\norm{h}}{\norm{h_\sq}}h_\sq$ (i.e.\ $h_\sq$ scaled to have $h$'s norm) with $h_\sq$ itself, we may say that \[ \norm{h - f_\cmf}^2 \leq \norm{h_\sq - f_\cmf}^2 + \epsilon' \leq \norm*{\frac{\norm{h}}{\norm{h_\sq}}h_\sq - f_\cmf}^2 + \epsilon'. \] Some rearrangement gives \begin{align} \inn{\frac{h}{\norm{h}}, f_\cmf} &\geq \inn{\frac{h_\sq}{\norm{h_\sq}}, f_\cmf} - \frac{\epsilon'}{2\norm{h}} \nonumber \\ &= \inn{h_\cor, f_\cmf} - \frac{\epsilon'}{2\norm{h}}, \label{eq:h-vs-hcor} \end{align} showing that $h/\norm{h}$ is $\frac{\epsilon'}{2\norm{h}}$-competitive with $h_\cor$.

But an issue here is that $\norm{h}$ could be very small, or even zero. We claim that we can actually address this separately as an easy case: it implies that we are in a trivial situation in which even the 0 function performs fairly well, and so even the best possible correlation must be quite small.

\begin{lemma}\label{lem:h-small-norm}
	Let $h$ be such that $\norm{h - f_\cmf}^2 \leq \norm{h_\sq - f_\cmf}^2 + \epsilon'$. Suppose $\norm{h} \leq \eta$. Then $\inn{h_\cor, f_\cmf} \leq \sqrt{\epsilon' + 2C\eta}$. In particular, the $0$ function is $\sqrt{\epsilon' + 2C\eta}$-competitive with $h_\cor$.
\end{lemma}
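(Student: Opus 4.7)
The plan is to use the projection characterization from \cref{eq:projection-thm}, which gives $h_\sq = \min(\alpha, 1) \cdot h_\cor$ for $\alpha := \inn{h_\cor, f_\cmf}$ and hence yields a closed form for $\norm{h_\sq - f_\cmf}^2$ in terms of $\alpha$. Combined with the hypothesis $\norm{h - f_\cmf}^2 \leq \norm{h_\sq - f_\cmf}^2 + \epsilon'$ and the norm bound on $h$, this translates directly into a bound on $\alpha$.

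Concentrating first on the main case $\alpha \leq 1$, one has $\norm{h_\sq - f_\cmf}^2 = \norm{f_\cmf}^2 - \alpha^2$ (since $h_\sq = \alpha h_\cor$ is the orthogonal projection of $f_\cmf$ onto the line through $h_\cor$). Substituting this and expanding the hypothesis via $\norm{h - f_\cmf}^2 = \norm{h}^2 - 2\inn{h,f_\cmf} + \norm{f_\cmf}^2$, a rearrangement gives
\[ \alpha^2 \;\leq\; \epsilon' + 2\inn{h, f_\cmf} - \norm{h}^2. \]
Dropping the nonpositive term $-\norm{h}^2$ and bounding $\inn{h, f_\cmf} \leq \norm{h}\,\norm{f_\cmf} \leq C\eta$ by Cauchy--Schwarz (using $\norm{f_\cmf}^2 = \Ex[(\Ex[y \mid x])^2] \leq \Ex[y^2] \leq C^2$ by Jensen and the bounded-labels assumption) yields $\alpha^2 \leq \epsilon' + 2C\eta$, which is the desired inequality. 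The ``in particular'' statement is then immediate, since $0 \in \cH$ realizes $\inn{0, f_\cmf} = 0$, so the gap to the optimum $\inn{h_\cor, f_\cmf}$ is exactly $\alpha \leq \sqrt{\epsilon' + 2C\eta}$.

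The residual case $\alpha > 1$ is dealt with analogously: now $\norm{h_\sq - f_\cmf}^2 = \norm{f_\cmf}^2 - 2\alpha + 1$, and the same substitution gives $2\alpha \leq 1 + \epsilon' + 2C\eta$. This regime can only occur when $\epsilon' + 2C\eta > 1$, where the claimed bound $\sqrt{\epsilon' + 2C\eta}$ already exceeds $1$; combining with the universal bound $\alpha \leq \norm{h_\cor}\,\norm{f_\cmf} \leq C$ closes the argument in the parameter range of interest. The main technical substance lies in invoking the projection formula to evaluate $\norm{h_\sq - f_\cmf}^2$ explicitly; once that is done, the bound reduces to elementary algebra and Cauchy--Schwarz. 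The only mild obstacle is the bookkeeping between the two cases $\alpha \leq 1$ and $\alpha > 1$ induced by the projection, but in both the derivation is the same straightforward chain of substitutions.
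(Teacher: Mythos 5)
Your proof is correct and rests on essentially the same ingredients as the paper's: the projection identity \cref{eq:projection-thm} (which gives $\inn{h_\cor, f_\cmf} = \norm{h_\sq}$ in the relevant branch) together with Cauchy--Schwarz to bound $\inn{h, f_\cmf} \leq C\eta$; the only cosmetic difference is that you substitute the closed form $\norm{h_\sq - f_\cmf}^2 = \norm{f_\cmf}^2 - \alpha^2$ directly into the hypothesis, whereas the paper routes through a comparison with the zero hypothesis before extracting $\norm{h_\sq}^2 \leq \epsilon' + 2C\eta$. Your residual case $\alpha > 1$ is not fully closed as written (the bound $2\alpha \leq 1 + \epsilon' + 2C\eta$ does not by itself imply $\alpha \leq \sqrt{\epsilon' + 2C\eta}$), but it only arises when $\epsilon' + 2C\eta > 1$, a regime the paper's own proof also excludes via the implicit assumption that $\epsilon'$ and $\eta$ are chosen small enough that $\sqrt{\epsilon' + 2C\eta} < 1$.
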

\begin{proof}
By Cauchy--Schwarz, \[ \norm{0 - f_\cmf}^2 - \norm{h - f_\cmf}^2 = 2\inn{h, f_\cmf} - \norm{f_\cmf}^2 \leq 2\norm{h}\norm{f_\cmf} \leq 2C\eta, \] where we use $\norm{f_\cmf} \leq C$ since the labels are assumed to be bounded in $[-C, C]$. Thus \[ \norm{0 - f_\cmf}^2 \leq \norm{h - f_\cmf}^2 + 2C\eta \leq \norm{h_\sq - f_\cmf}^2 + \epsilon' + 2C\eta. \] On the other hand, by definition of $h_\sq$, \[ \norm{h_\sq - f_\cmf}^2 \leq \norm{0 - f_\cmf}^2, \] Put together, this means that the 0 function achieves nearly the same square loss as $h_\sq$: \begin{equation}
\norm{h_\sq - f_\cmf}^2 \leq \norm{0 - f_\cmf}^2 \leq \norm{h_\sq - f_\cmf}^2 + \epsilon' + 2C\eta \label{eq:hsq-ineq}.
\end{equation} This lets us conclude that $\norm{h_\sq}$ must be small: \[ \norm{h_\sq}^2 = \norm{f_\cmf}^2 - \norm{h_\sq - f_\cmf}^2 + 2\inn{h_\sq - f_\cmf, h_\sq} \leq \epsilon' + 2C\eta, \] where we use \cref{eq:hsq-ineq} and the fact that by can rewrite \cref{eq:projection-thm} as $\norm{h_\sq} \leq \inn{\frac{h_\sq}{\norm{h_\sq}}, f_\cmf}$, or $\inn{h_\sq - f_\cmf, h_\sq} \leq 0$. But now since $\norm{h_\sq} \leq \sqrt{\epsilon' + 2C\eta} < 1$ ($\epsilon'$ and $\eta$ will be picked sufficiently small), \cref{eq:projection-thm} boils down to saying that \[ \inn{h_\cor, f_\cmf} = \inn{\frac{h_\sq}{\norm{h_\sq}}, f_\cmf} = \norm{h_\sq} \leq \sqrt{\epsilon' + 2C\eta}. \]
\end{proof}

We can now put everything together.
\begin{theorem}
	Suppose we have an agnostic learner $L$ for $\cH_{\relu}$ under $\cD$ with a square loss guarantee. Then $L$ can be used to yield a correlation guarantee, i.e.\ to satisfy \cref{as:base-learner}.
\end{theorem}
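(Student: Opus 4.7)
The idea is to use $L$ as a black box, post-process its output by a single norm test, and balance parameters so that both branches of the test produce a hypothesis whose correlation with $f_\cmf$ is within $\epsilon$ of $\inn{h_\cor, f_\cmf}$. All the geometric work has already been done in the preceding discussion: \cref{eq:h-vs-hcor} handles the ``large norm'' case and \cref{lem:h-small-norm} handles the ``small norm'' case. So the remaining task is purely parameter-tuning plus a check that the procedure is SQ.

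Concretely, I would run $L$ with a square-loss tolerance $\epsilon'$ (to be chosen) to obtain $h$ satisfying $\|h - f_\cmf\|^2 \leq \|h_\sq - f_\cmf\|^2 + \epsilon'$, then estimate $\|h\|^2 = \Ex_{x \sim D}[h(x)^2]$ to sufficient accuracy (this is a single label-free statistical query in $x$ alone, so it is free of any simulation subtlety). Given a threshold $\eta$, output $h/\|h\|$ if $\|h\| > \eta$, and output $0$ otherwise. In the first branch, \cref{eq:h-vs-hcor} gives correlation loss at most $\epsilon'/(2\|h\|) \leq \epsilon'/(2\eta)$; in the second branch, \cref{lem:h-small-norm} gives correlation loss at most $\sqrt{\epsilon' + 2C\eta}$. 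The constraints
\[
\frac{\epsilon'}{2\eta} \leq \epsilon \qquad \text{and} \qquad \epsilon' + 2C\eta \leq \epsilon^2
\]
are simultaneously satisfiable, for instance by setting $\eta = \epsilon^2/(4C)$ and $\epsilon' = \epsilon^3/(2C)$, in which case the first constraint is tight and the second reduces to $\epsilon^3/(2C) + \epsilon^2/2 \leq \epsilon^2$, which holds for all sufficiently small $\epsilon$ (otherwise, shrink $\epsilon'$ by a further constant). Both candidate outputs, $0$ and $h/\|h\|$, have norm at most $1 = \diam(\cH)/2$, so the norm requirement of \cref{as:base-learner} is met.

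It remains to verify that the procedure is genuinely an SQ algorithm. The learner $L$ is assumed to be SQ, and since the labels of $\cD$ are bounded in $[-C,C]$, any $[-1,1]$-valued query that $L$ issues to its oracle can be answered from the oracle for $\cD$ after rescaling, at the cost of a constant factor in tolerance. The post-processing is nothing but a single query to estimate $\|h\|^2$ (it does not depend on the labels at all), followed by a deterministic thresholding and normalization. The total number of queries is thus $q_{L}(n, \epsilon^3/(2C), \tau) + 1$, and the query tolerance blows up only by an absolute constant, which is exactly the form of bound \cref{as:base-learner} expects.

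I do not anticipate any real obstacle: the heart of the argument --- the projection identity $h_\cor = h_\sq/\|h_\sq\|$ and the two competitive-correlation inequalities --- is already established. The only mildly delicate point is the case analysis around $h_\sq = 0$ and around very small $\|h\|$, both of which are already handled by \cref{lem:h-small-norm} (which shows that $0$ is a near-optimal correlation hypothesis whenever the best achievable correlation is small). The rest is just the parameter balancing sketched above.
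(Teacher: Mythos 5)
Your proposal is correct and follows essentially the same route as the paper's proof: run $L$ with square-loss tolerance $\epsilon' = \Theta(\epsilon^3)$, threshold on $\norm{h}$ at $\eta = \Theta(\epsilon^2)$, and invoke \cref{lem:h-small-norm} in the small-norm branch and \cref{eq:h-vs-hcor} in the large-norm branch. Your explicit remarks on estimating $\norm{h}^2$ via a label-free statistical query and on the query-count accounting are a welcome elaboration of details the paper leaves implicit, but they do not change the argument.
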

\begin{proof}
	Run $L$ with $\epsilon' = \Theta(\epsilon^3)$ to get $h$ such that $\norm{h - f_\cmf}^2 \leq \norm{h_\sq - f_\cmf}^2 + \epsilon'$. By \cref{lem:h-small-norm}, if $\norm{h} \leq \eta = \Theta(\epsilon^2)$, then 0 is $\epsilon$-competitive with $h_\cor$. So we may assume that $\norm{h} \geq \Theta(\epsilon^2)$. But then by \cref{eq:h-vs-hcor}, since now $\frac{\epsilon'}{2\norm{h}} \leq \epsilon$, we get that $h/\norm{h}$ is $\epsilon$-competitive with $h_\cor$.
\end{proof}

\end{document}